\def\eqref#1{equation~\ref{#1}}
\def\ceil#1{\lceil #1 \rceil}
\def\1{\bm{1}}
\def\vzero{{\bm{0}}}
\def\va{{\bm{a}}}
\def\vg{{\bm{g}}}
\def\vh{{\bm{h}}}
\def\vv{{\bm{v}}}
\def\vw{{\bm{w}}}
\def\vx{{\bm{x}}}
\def\mE{{\bm{E}}}
\def\mI{{\bm{I}}}
\def\mX{{\bm{X}}}
\DeclareMathAlphabet{\mathsfit}{\encodingdefault}{\sfdefault}{m}{sl}
\SetMathAlphabet{\mathsfit}{bold}{\encodingdefault}{\sfdefault}{bx}{n}
\newcommand{\E}{\mathbb{E}}
\newcommand{\polyln}{\text{polyln}}
\newcommand{\poly}{\text{poly}}
\newcommand{\logit}{\text{logit}}
\setlist[itemize]{leftmargin=5.5mm}
\setlist[enumerate]{leftmargin=5.5mm}
\newtheorem{theorem}{Theorem}[section]
\newtheorem{corollary}{Corollary}[theorem]
\newtheorem{lemma}[theorem]{Lemma}
\newtheorem{prop}{Proposition}
\theoremstyle{remark}
\newtheorem*{remark}{Remark}
\theoremstyle{definition}
\newtheorem{definition}{Definition}[section]
\newcommand{\boxedthm}[1]{%
\vspace{-1ex}
\[\colorbox{gray!10}{\fbox{%
\addtolength{\linewidth}{-2\fboxsep}%
\addtolength{\linewidth}{-2\fboxrule}%
\begin{minipage}{\linewidth}%
#1
\end{minipage}}}%
\]}
\title{Towards Understanding the Effect of Pretraining Label Granularity}
\author{
Guanzhe Hong\thanks{Part of this work was done as a student researcher at Google Research.}~~$^{1}$ \quad Yin Cui\thanks{Work done while at Google.}~~$^{2}$ \quad Ariel Fuxman$^{3}$ \quad Stanley H. Chan$^{1}$ \quad Enming Luo$^{3}$\\
$^1$Purdue University \quad $^2$ NVIDIA \quad $^3$ Google Research \\
{\tt\small \{hong288,stanchan\}@purdue.edu, yinc@nvidia.com,} \\
{\tt\small \{afuxman,enming\}@google.com}
}
\begin{document}

\maketitle

\etocdepthtag.toc{mtchapter}
\etocsettagdepth{mtchapter}{subsection}
\etocsettagdepth{mtappendix}{none}

\begin{abstract}
In this paper, we study how the granularity of pretraining labels affects the generalization of deep neural networks in image classification tasks. 
We focus on the ``fine-to-coarse'' transfer learning setting, where the pretraining label space is more fine-grained than that of the target problem. 
Empirically, we show that pretraining on the \textit{leaf} labels of ImageNet21k produces better transfer results on ImageNet1k than pretraining on other coarser granularity levels, \textit{which supports the common practice used in the community}.
Theoretically, we explain the benefit of fine-grained pretraining by proving that, for a data distribution satisfying certain hierarchy conditions, 1) coarse-grained pretraining \textit{only} allows a neural network to learn the ``common'' or ``easy-to-learn'' features well, while 2) fine-grained pretraining helps the network learn the ``rarer'' or ``fine-grained'' features in addition to the common ones, thus improving its accuracy on \textit{hard} downstream test samples in which common features are missing or weak in strength.
Furthermore, we perform comprehensive experiments using the label hierarchies of iNaturalist 2021 and observe that the following conditions, in addition to proper choice of label granularity, enable the transfer to work well \textit{in practice}: 1) the pretraining dataset needs to have a \textit{meaningful label hierarchy}, and 2) the pretraining and target label functions need to \textit{align} well.

\end{abstract}

\section{Introduction}
\label{sec: intro}

Modern deep neural networks (DNNs) are highly effective at image classification. In addition to architectures, regularization techniques, and training methods \cite{samira2018,khan2020,lecun1998,alex2012,ilya2013,kingma2015,kaiming2016,simonyan2015,szegedy2015,xie2017,ioffe2015,srivastava2014,zhang2019}, the availability of large datasets of labeled natural images significantly contributes to the training of powerful DNN-based feature extractors, which can then be used for downstream image classification tasks \cite{deng2009,alex2009,sun2017,zhou2018,horn2018}. 
DNN models, especially the state-of-the-art vision transformers, are well-known to require pre-training on large datasets for effective (downstream) generalization \cite{vit2021,kaiming2016,alex2012}. However, another important dataset property that is often overlooked is the high granularity of the label space: there is limited understanding of why such large pretraining label granularity is necessary, especially when it can be several orders of magnitude greater than the granularity of the target dataset.

Studying the effect of pretraining label granularity in general is a challenging task, due to the wide variety of applications in which transfer learning is used. To make the problem more tractable, we focus on the following setting. First, we adopt the \textit{simplest} possible transfer methodology: pretraining a DNN on an image classification task and then finetuning the model for a target problem using the pretrained backbone. Second, we focus on the ``\textit{fine-to-coarse}'' transfer direction, where the source task has more classes than the target task. \textit{This transfer setting is common in the empirical transfer learning works}, especially for large models \cite{vit2021,radford2021,steiner2022how,Zhang_2021_ICCV,ridnik2021}.

In this setting, we empirically observed an interesting relationship between the pretraining label granularity and DNN generalization, which can be roughly summarized as follows:
\boxedthm{
Under certain basic conditions on the pretraining and target label functions, DNNs pretrained at reasonably high label granularities tend to generalize \textit{better} in downstream classification tasks than those pretrained at low label granularities.
}

\begin{wrapfigure}{rt}{0.41\textwidth}
\vspace{-5mm}
  \begin{center}
    \includegraphics[width=0.4\textwidth]{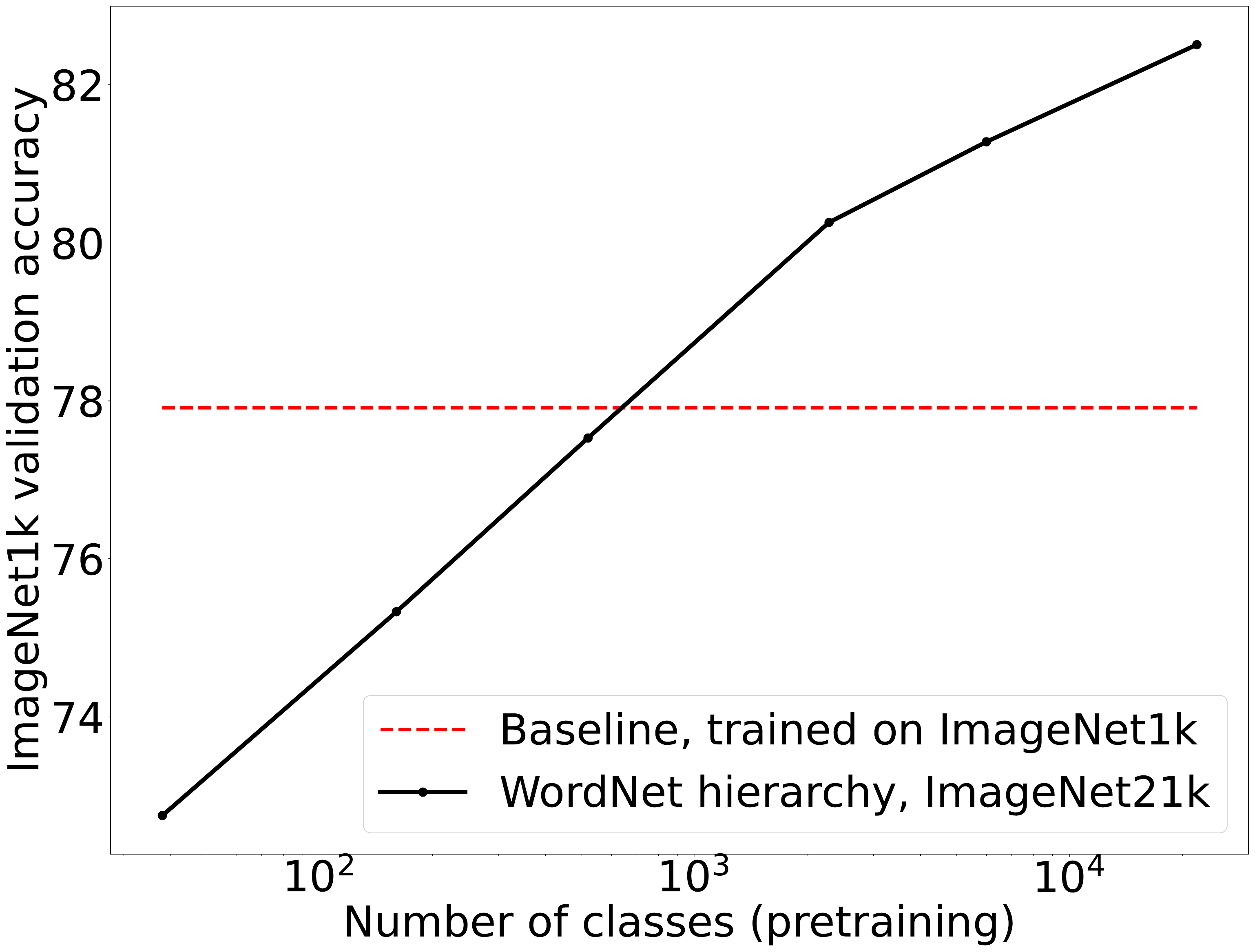}
  \end{center}
  \caption{ImageNet21k$\to$ImageNet1k transfer using a ViT-B/16 model. \textbf{Black line}: pretrained on the WordNet hierarchy of IN21k, \textit{finetuned} on IN1k. \textcolor{red}{\textbf{Red dotted line}}: baseline, trained and evaluated on IN1k. When the pretraining number of classes is $\sim 10^3$, even though IN21k has about 10 times the samples of IN1k, the finetuned network's accuracy is hardly better than the baseline's.}
  \label{fig: im21k->im1k, linear probe}
\vspace{-2ex}
\end{wrapfigure}

Figure \ref{fig: im21k->im1k, linear probe} shows an example of this relationship in the transfer setting of ImageNet21k$\to$ImageNet1k. Further details of this experiment are presented in Section \ref{subsect: im21k->im1k main text details} and the appendix. In this figure, we observe an increase in the network's validation accuracy on the target dataset when the pretraining label granularity increases. 
This indicates that the pretraining label granularity influences the features learnt by the neural network, because after pretraining on ImageNet21k, only the feature extractor is used for finetuning on ImageNet1k.

Closer examination of Figure \ref{fig: im21k->im1k, linear probe} suggests that the features learned by a DNN pretrained at lower granularities already allow it to classify a \textit{significant portion} of the dataset, and finer-grained pretraining brings \textit{incremental} improvements to the network's accuracy. This suggests two things: first, finer-grained pretraining helps the DNN learn \textit{more} features or certain features \textit{better} than pretraining at lower granularities; second, these ``fine-grained features'' potentially help the DNN classify a small set of samples that are too hard for a DNN pretrained at lower granularities. These observations suggest a correspondence between \textit{label hierarchy} and \textit{feature learnability}, similar in spirit to the \textit{multi-view} data property pioneered by \cite{zhu2020_kd}.

Our major findings and contributions are as follows. 
\begin{itemize}[noitemsep]
\item (\textit{Theoretical Analysis}) We provide a \textit{theoretical explanation} for the empirical observation above. Specifically, we prove mathematically that, for a data distribution satisfying certain hierarchy conditions, coarse-grained pretraining \textit{only} allows a neural network to learn the ``common'' or ``easy-to-learn'' features well, while fine-grained pretraining helps the network learn the ``rarer'' or ``fine-grained'' features in addition to the common ones, thus improving its accuracy on \textit{hard} samples where common features are missing or weak in strength.

\item (\textit{Empirical Analysis}) We show that pretraining on the \textit{leaf} labels of ImageNet21k produced better transfer results on ImageNet1k than pretraining on other coarser granularity levels, \textit{which supports the common practice used in the community}. Furthermore, we conducted comprehensive experiments using the label hierarchies of iNaturalist 2021, and observed that the following conditions, in addition to proper choice of label granularity, enable the transfer to work well \textit{in practice}: 1) the pretraining dataset needs to have a \textit{meaningful label hierarchy}, 2) the pretraining and target label functions need to \textit{align} well.
\end{itemize}

\section{Related Work}
\subsection{Theoretical work}
At the theory front, there has been a growing interest in explaining the success of DNNs through the lens of implicit regularization and bias towards ``simpler'' solutions, which can prevent overfitting even when DNNs are highly overparameterized \cite{lyu2021,nakkiran2019,ji2019,palma2019,huh2021}. However, there is also a competing view that DNNs can learn overly simple solutions, known as ``shortcut learning'', which can achieve high training and testing accuracy on in-distribution data but generalize poorly to challenging downstream tasks \cite{geirhos2020, shah2020, pezeshki2021}.
To our knowledge, \cite{shah2020, pezeshki2021} are the closest works to ours, as they both demonstrate that DNNs tend to perform shortcut learning and respond weakly to features that have a weak ``presence'' in the training data. Our work differs from \cite{shah2020, pezeshki2021} in several key ways. 

On the \textit{practical} side, we focus on how the pretraining label space affects classification generalization, while \cite{shah2020, pezeshki2021} primarily focus on demonstrating that simplicity bias can be harmful to generalization. Even though \cite{pezeshki2021} proposed a regularization technique to mitigate this, their experiments are too small-scale. 
On the \textit{theory} side, \cite{pezeshki2021} use the neural tangent kernel (NTK) model, which is unsuitable for analyzing our transfer-learning type problem because the feature extractor of an NTK model barely changes after pretraining. The theoretical setting in \cite{shah2020} is more limited than ours because they use the hinge loss while we use a more standard exponential-tailed cross-entropy loss. Additionally, our data distribution assumptions are more realistic, as they capture the hierarchy in natural images, which has direct impact on the (downstream) generalization power of the pretrained model, according to our results.

Our theoretical analysis is inspired by a recent line of work that analyzes the feature learning dynamics of neural networks. This line of work tracks how the hidden neurons of shallow nonlinear neural networks evolve to solve dictionary-learning-like problems \cite{zhu2022_adv, zhu2020_kd, shen2022}. In particular, our work adopts a \textit{multi-view} approach to the data distribution, which was first proposed in \cite{zhu2020_kd}, while we initialize our network in a similar way to \cite{zhu2022_adv}. However, the learning problems we analyze and the results we aim to show are significantly different from the existing literature. Therefore, we need to derive the gradient descent dynamics of the neural network from scratch.

\vspace{-1.3ex}
\subsection{Experimental work}
There is a growing body of empirical research on how to improve classification accuracy by manipulating the (pre-)training label space. One line of research focuses on using fine-grained labels to improve DNNs' semantic understanding of natural images and their robustness in downstream tasks \cite{mahajan2018, singh2022, yan2020, shnarch2022, juan2020, yang2021, chen2018, ridnik2021, son2023, ngiam2018, cui2018}. For example, \cite{mahajan2018, singh2022} use noisy hashtags from Instagram as pretraining labels, \cite{yan2020, shnarch2022} apply clustering on the data first and then treat the cluster IDs as pretraining labels, \cite{juan2020} use the queries from image search results, \cite{yang2021} apply image transformations such as rotation to augment the label space, and \cite{chen2018,ridnik2021} include fine-grained manual hierarchies in their pretraining processes. 
Our experimental results corroborate the utility of pretraining on fine-grained label space. However, we focus on analyzing the operating regime of this transfer method, specifically how pretraining label granularity, label function alignment, and training set size influence the quality of the model. We show that this transfer method only works well within a specific operating regime.

Another line of research focuses on exploiting the hierarchical structures present in (human-generated) label space to improve classification accuracy at the most fine-grained level \cite{yan2015, zhu2017, goyal2020, sun2017, zelikman2022, silla2011, shkodrani2021, bilal2017, goo2016}. For example, \cite{yan2015} adapt the network architecture to learn super-classes at each hierarchical level, \cite{zhu2017} add hierarchical losses in the hierarchical classification task, \cite{goyal2020} propose a hierarchical curriculum loss for curriculum learning. In contrast, our work focuses on the influence of label granularity on the model's generalization to target tasks with a coarser label space than the pretraining one.

\section{Problem Formulation and Intuition}
\label{sec:problem_formulation}
In this section, we introduce the relevant notations and training methodologies, and discuss our intuition on how label granularity influences DNN feature learning.

\subsection{Notations and methodology}
For a DNN-based classifier, given input image $\mX$, we can write its (pre-logit) output for class $c$ as
\begin{equation}
    F_{c}(\mX) = \left\langle \va_c, \vh(\mTheta; \mX)\right\rangle
\end{equation}
where $\va_c$ is the linear classifier for class $c$, $\vh(\mTheta; \cdot)$ is the network backbone with parameter $\mTheta$. 

In the transfer learning setting, we denote the sets of input samples for the target and source datasets as $\calX^{\text{tgt}}$ and $\calX^{\text{src}}$, and the corresponding sets of labels as $\calY^{\text{tgt}}$ and $\calY^{\text{src}}$, respectively. A dataset can be represented as $\calD = \left(\calX, \calY \right)$. For instance, the source training dataset is $\calD_{\text{train}}^{\text{src}} = \left(\calX^{\text{src}}_{\text{train}}, \calY^{\text{src}}_{\text{train}} \right)$. The relevant training and testing datasets are denoted as $\calD_{\text{train}}^{\text{src}}, \calD_{\text{train}}^{\text{tgt}}, \calD_{\text{test}}^{\text{tgt}}$. Finally, the granularity of a label set is denoted as $\calG(\calY)$, which represents the total number of classes.

Our transfer learning methodology is as follows. We first pretrain a neural network $F_c(\cdot) = \left\langle \va_c, \vh(\mTheta; \cdot)\right\rangle$ from random initialization using $\calD_{\text{train}}^{\text{src}}$ (typically with early stopping). This gives us the pretrained feature extractor $\vh(\mTheta_{\text{train}}^{\text{src}}; \cdot) $. We then either linear probe or finetune it using $\calD_{\text{train}}^{\text{tgt}}$, and evaluate it on $\calD_{\text{test}}^{\text{tgt}}$. In contrast, the \textit{baseline} is simply trained on $\calD_{\text{train}}^{\text{tgt}}$ and evaluated on $\calD_{\text{test}}^{\text{tgt}}$.

\subsection{How does the pretraining label granularity influence feature learning?}
To make our discussion in this subsection more concrete, let us consider an example. Suppose we have a set of pretraining images that consist of cats and dogs, $\calD^{\text{src}}_{\text{train}} = \calD^{\text{cat}} \cup \calD^{\text{dog}}$. The target problem $(\calD^{\text{tgt}}_{\text{train}}, \calD^{\text{tgt}}_{\text{test}})$ requires the DNN to identify whether the animal in the image is a cat or a dog.

Intuitively, we can say that a group of images belongs to a class because they \textit{share} certain visual ``features'' that are absent, or weak in all other classes. At the coarsest cat-versus-dog hierarchy level, \textit{common} cat features distinguish cat images from dog ones; these features are also the \textit{most noticeable} because they appear most frequently in the dataset. However, humans can also define fine-grained classes of cats and dogs based on their breeds. This means that \textit{each subclass has its own unique visual features that are only dominant within that subclass}. This leads to an interesting observation: fine-grained features may be rarer in the dataset, making them more difficult to notice. We illustrate this observation in Figure \ref{fig:cat vs dog illustration}.

\begin{figure}[h]
    \centering
    \includegraphics[width=0.6\linewidth]{ 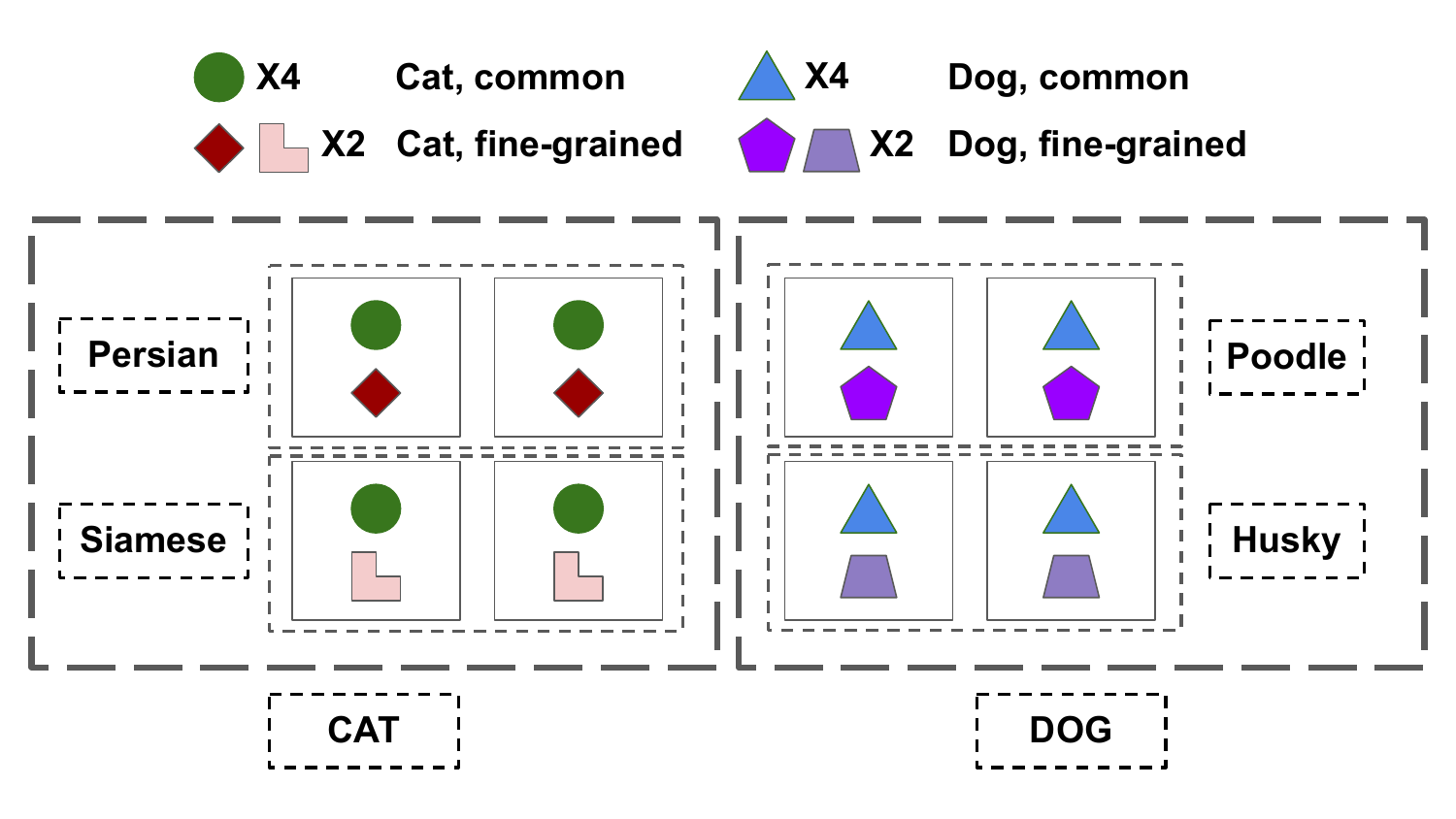}
    \caption{A simplified symbolic representation of the cat versus dog problem. The common features (green disk and blue triangle) appear more frequently, thus are more noticeable than the fine-grained features in the dataset. Furthermore, the learner can approximate the solution to the ``cat or dog'' problem by simply learning the common features.}
    \label{fig:cat vs dog illustration}
\end{figure}

Now suppose the pretraining label assignment is the binary ``cat versus dog'' task. An intelligent learner can take the arduous route of learning both the common and hard-to-notice fine-grained features in the two classes.
However, the learner can also take ``shortcuts'' by learning only the common features in each class to achieve low training loss. This latter form of learning can harm the network's generalization in downstream classification tasks because the network can be easily misled by the distracting irrelevant patterns in the image when the common features are weak in signal strength. 
One strategy to force the learner to learn the rarer features well is to explicitly label the fine-grained classes. This means that within each fine-grained class, the fine-grained features become as easy to notice as the common features. This forces the network to learn the fine-grained features to solve the classification problem.

\section{Theory of Label Granularity}
\label{sec:theory}

\begin{wrapfigure}{r}{0.35\textwidth}
    \vspace{-12mm}
  \begin{center}
    \includegraphics[width=0.33\textwidth]{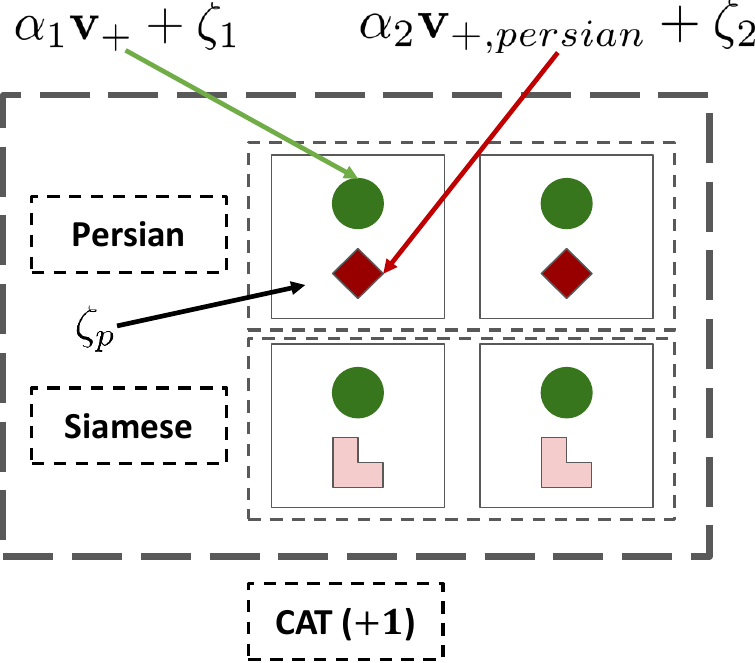}
  \end{center}
  \caption{A simplified illustration of how our intuition in Section \ref{sec:problem_formulation} translates to the definition of an easy sample.}
  \label{fig: sample illustration}
  \vspace{-5mm}
\end{wrapfigure}

\subsection{Source training data distribution}

We assume that an input sample in $\calD^{\text{src}}_{\text{train}}$ consists of $P$ disjoint patches of dimension $d$, in symbols, an input sample $\mX = (\vx_1, \vx_2, ..., \vx_P)$ with $\vx_p\in\mathbb{R}^d$. We consider the setting where $d$ is sufficiently large, and all our asymptotic statements are made with respect to $d$.

Building on the discussion in Section \ref{sec:problem_formulation}, we base our data distribution assumptions for the source dataset on our intuition about the hierarchy of common-versus-fine-grained features in natural images. For simplicity, we consider only two levels of label hierarchy. The root of this hierarchy has two superclasses $+1$ and $-1$. The superclass $+1$ has $k_+$ subclasses, with notation $(+,c)$ for $c\in[k_+]$. The same definition holds for the ``$-$'' classes. As for the input features, since the common and fine-grained features need to be sufficiently ``different'', we push this intuition to an extreme and assume all the ``features'' (in their purest form) have zero correlation and equal magnitude. This leads to the following definition.

\begin{definition}[Features and hierarchy]
\label{def: main text, feat hierarchy}
    We define \textbf{features} as elements of a fixed orthonormal dictionary $\calV = \{\vv_i\}_{i=1}^d \subset\mathbb{R}^d$. Furthermore, we call $\vv_+\in\calV$ the \textit{common feature} unique to samples in class $+1$, and $\vv_{+,c}\in\calV$ to be the \textit{fine-grained feature} unique to samples of subclass $(+,c)$.\footnote{We only consider one feature for each (sub-)class for notational simplicity; our work extends to the multi-feature case easily.}
\end{definition}

\begin{definition}[Sample generation]
\label{def: main text, sample gen}
    For an \textbf{easy sample} $\mX$ belonging to the $(+,c)$ subclass (for $c\in[k_+]$), sample its patches as follows:
    \vspace{-2mm}
    \begin{enumerate}[noitemsep]
        \item (Common-feature patches) Approximately $s^*$ patches are \textit{common-feature patches}, defined as $\vx_{p} = \alpha_{p}\vv_{+} + \vzeta_{p}$, for some (random) $\alpha_p \approx 1$, $\vzeta_{p}\sim\calN(\vzero, \sigma_{\zeta}^2\mI_d)$;
        \item (Subclass-feature patches) Approximately $s^*$ patches are \textit{subclass-feature patch}, defined as $\vx_{p} = \alpha_{p}\vv_{+,c} + \vzeta_{p}$, for some (random) $\alpha_p \approx 1$, $\vzeta_{p}\sim\calN(\vzero, \sigma_{\zeta}^2\mI_d)$;
        \item (Noise patches) For the remaining non-feature patches, $\vx_p = \vzeta_p$, where $\vzeta_{p}\sim\calN(\vzero, \sigma_{\zeta}^2\mI_d)$.
    \vspace{-2mm}
    \end{enumerate}

    A \textbf{hard sample} is generated in the same way as easy samples, except the common-feature patches are replaced by noise patches, and we replace approximately $s^{\dagger}$ number of noise patches by ``feature-noise'' patches, which are of the form $\vx_{p} = \alpha_{p}^{\dagger}\vv_{-} + \vzeta_{p}$, where $\alpha_{p}^{\dagger}\in o(1)$, and set one of the noise patches to $\vzeta^* \sim \calN(\vzero, \sigma_{\zeta^*}^2\mI_d)$ with $\sigma_{\zeta^*}\gg \sigma_{\zeta}$; these patches serve the role of ``distracting patterns'' discussed in Section \ref{sec:problem_formulation}.

    Samples belonging to the superclass $+1$ are the \textit{union} of the samples of each subclass $(+,c)$. See Figure \ref{fig: sample illustration} for an intuitive illustration of the easy samples.
\end{definition}

With the feature-based input sample generation process in place, we can define the source dataset's label function, and finally the source training set.

\begin{definition}[Source dataset's label mapping]
\label{def: main test, label map}
    A sample $\mX$ belongs to the $+1$ superclass if any one of its common- or subclass-feature patches contains $\vv_+$ or $\vv_{+,c}$ for any $c\in[k_+]$. It belongs to the $(+,c)$ subclass if any one of its subclass-feature patches contains $\vv_{+,c}$.
\end{definition}

\begin{definition}[Source training set]

    We assume the input samples of the source training set as $\calX_{\text{train}}^{\text{src}}$ are generated as in Definition \ref{def: main text, sample gen}; the corresponding labels are generated following Definition \ref{def: main test, label map}. Overall, we denote the dataset $\calD_{\text{train}}^{\text{src}}$.

\end{definition}

All of the above definitions also hold for the ``$-$'' classes. To see the full problem setup and parameter choices, please refer to Appendix \ref{section: theory, problem setup}.

\subsection{Target data distribution assumptions}
For simplicity and to ensure that baseline and fine-grained training have no unfair advantage over each other, we make the following assumptions in our theoretical setting: first, the input samples in the source and target datasets are all generated according to Definition \ref{def: main text, sample gen}; second, the true label function remains the same across the two datasets; third, since we are studying the ``fine-to-coarse'' transfer direction, the target problem's label space is the \textit{root} of the hierarchy, meaning that any element of $\calY^{\text{tgt}}_{\text{train}}$ or $\calY^{\text{tgt}}_{\text{test}}$ must belong to the label space $\{+1, -1\}$. Therefore, in our setting, only $\calY^{\text{src}}$ and $\calY^{\text{tgt}}$ can differ (in distribution) due to different choices in the label hierarchy level.
This analysis-oriented setting mirrors the transfer settings in Section \ref{sec:experiments}.\footnote{In this idealized setting, we have essentially made \textit{baseline training} and \textit{coarse-grained pretraining} the \textit{same} procedure. Therefore, an equally valid way to view our theory's setting is to consider $\calD_{\text{train}}^{\text{tgt}}$ the same as $\calD_{\text{train}}^{\text{src}}$ except with coarse-grained labels. In other words, we pretrain the network on two versions of the source dataset $\calD_{\text{train}}^{\text{src,coarse}}$ and $\calD_{\text{train}}^{\text{src,fine}}$, and then compare the two models on $\calD_{\text{test}}^{\text{tgt}}$ (which has coarse-grained labels).}

\subsection{Learner assumptions}
We assume that the learner is a two-layer average-pooling convolutional ReLU network:
\begin{equation}
    F_{c}(\mX) = \sum_{r=1}^m a_{c,r}\sum_{p=1}^P \sigma(\langle \vw_{c,r}, \vx_p\rangle + b_{c,r}),
\end{equation}
where $m$ is a low-degree polynomial in $d$ and denotes the width of the network, $\sigma(\cdot) = \max(0, \cdot)$ is the ReLU nonlinearity, and $c$ denotes the class. We perform an initialization of $\vw_{c,r}^{(0)} \sim \calN(\vzero, \sigma_0^2\mI_d)$ with $\sigma_0^2 = 1/\poly(d)$; we set $b_{c,r}^{(0)} = -\Theta\left(\sigma_0\sqrt{\ln(d)}\right)$ and manually tune it, similar to \cite{zhu2022_adv}. Cross-entropy is the training loss for both baseline and transfer training. To simplify analysis and to focus solely on the learning of the feature extractor, we freeze $a_{c,r} = 1$ during all baseline and transfer training phases, and we use the fine-grained model for binary classification as follows: $\widehat{F}_+(\mX) = \max_{c\in[k_+]}F_{+,c}(\mX), \, \widehat{F}_-(\mX) = \max_{c\in[k_-]}F_{-,c}(\mX)$.\footnote{See Appendix \ref{section: theory, problem setup} and the beginning of Appendix \ref{section: appendix, finegrained} for details of learner assumptions.}

\subsection{Main result: stochastic gradient descent on easy training samples}

In this subsection, we study how well the neural network generalizes if it is trained via stochastic gradient descent on \textit{easy samples only}. In particular, we allow training to run for $\poly(d)$ time.

\begin{theorem} [Coarse-label training: baseline]
\label{thm: main text, coarse label, baseline}
(Summary). Suppose $\calD_{\text{train}}^{\text{tgt}}$ consists only of easy samples, and we perform stochastic gradient descent training: at every iteration $t$, there is a fresh set of $N$ iid samples $\left(\mX_n^{(t)}, y_n^{(t)}\right)_n$. Moreover, assume the number of fine-grained classes $k_+ = k_- \in [\polyln(d), f(\sigma_{\zeta})]$, for some function $f$ of the noise standard deviation $\sigma_{\zeta}$.\footnote{$ f(\sigma_{\zeta}) = d^{0.4}$ is an example choice.}

With high probability, with proper choice of step size $\eta$, there exists $T^* \in \poly(d)$ such that for any $T \in [T^*, \poly(d)]$, the training loss satisfies $\calL(F^{(T)}) \le o(1)$, and for an easy test sample $(\mX_{\text{easy}},y)$, $\mathbb{P}\left[F_y^{(T)}(\mX_{\text{easy}}) \le F_{y'}^{(T)}(\mX_{\text{easy}})\right] \le o(1)$ for $y'\in \{+1, -1\} - \{y\}$. However, for all $t\in[0,\poly(d)]$, given a hard test sample $(\mX_{\text{hard}},y)$, $\mathbb{P}\left[F_y^{(t)}(\mX_{\text{hard}}) \le F_{y'}^{(t)}(\mX_{\text{hard}})\right] \ge \Omega(1)$.
\end{theorem}

To see the full version of this theorem, please see Appendix \ref{section: appendix, phase II coarse}; its proofs spans Appendix \ref{section: appendix init geometry} to \ref{section: appendix, phase II coarse}. This theorem essentially says that, with a mild lower bound on the number of fine-grained classes, if we only train on the \textit{easy} samples with \textit{coarse} labels, it is virtually impossible for the network to learn the fine-grained features even if we give it as much practically reachable amount of time and training samples as possible. Consequently, the network would perform poorly on any challenging downstream test image: if the image is missing the \textit{common} features, then the network can be easily misled by irrelevant features or other potential flaws in the image.

\begin{theorem}[Fine-grained-label training]
(Summary). Assume the same setting in Theorem \ref{thm: main text, coarse label, baseline}, except that we let the labels be fine-grained: train the network on a total of $k_+ + k_-$ subclasses instead of 2. With high probability, within $\poly(d)$ time, the trained network satisfies $\mathbb{P}\left[\widehat{F}_y^{(T)}(\mX) \le \widehat{F}_{y'}^{(T)}(\mX)\right] \le o(1)$ for $y'\in\{+1,-1\}-\{y\}$ on the target binary problem on  easy and hard test samples.\footnote{Finetuning $\widehat{F}$ can further boost the feature extractor's response to the true features. See Appendix \ref{sec: appendix, finegrained trainining, end error}.}
\end{theorem}

The full version of this result is presented in Appendix \ref{sec: appendix, finegrained trainining, end error}, and its proof in Appendix \ref{section: appendix, finegrained}. After fine-grained pretraining, the network's feature extractor gains a strong response to the fine-grained features, therefore its accuracy on the downstream hard test samples increases significantly.

One concern about the above theorems is that the neural networks are trained only on easy samples. As noted in Section \ref{sec: intro}, samples that can be classified correctly by training only with coarse-grained labels, or \textit{easy} samples, should make up the \textit{majority} of the training and testing samples, and pretraining at higher label granularities only possibly improves network performance on \textit{rare} hard examples. Our theoretical result is intended to present the ``feature-learning bias'' of a neural network in an exaggerated fashion. Therefore, it is natural to start with the case of ``no hard training examples at all''. In reality, even if a small portion of hard training samples is present, finite-sized training datasets can have many flaws that can cause the network to overfit severely before learning the fine-grained features, especially since rarer features are learnt more slowly and corrupted by greater amount of noise. We leave these deeper considerations for future theoretical work.\footnote{Allowing a small portion of hard training samples essentially yields a ``perturbed'' version of our results in this paper, which we leave for future work.}


\section{Empirical Results}
\label{sec:experiments}
Building on our theoretical analysis in an idealized setting, this section discusses conditions on the source and target label functions that we observed to be important for fine-grained pretraining to work \textit{in practice}. We present the core experimental results obtained on ImageNet21k and iNaturalist 2021 in the main text, and leave the experimental details and ablation studies to Appendix \ref{section: appendix A}.

\subsection{ImageNet21k$\to$ImageNet1k transfer experiment}
\label{subsect: im21k->im1k main text details}

\begin{wraptable}{r}{7cm}
\vspace{-2ex}
\centering
\scalebox{0.95}{
\begin{tabular}{l c c c}
\toprule
Pretrain on & Hier. lv & $\calG(\mathcal{Y}^{\text{src}})$ & Valid. acc.\\
\hline
IM1k  & - & 1000 & \textcolor{red}{\textbf{77.91}} \\
\cmidrule{1-4}
IM21k & 0 (leaf) & 21843 & \textbf{82.51} \\
\cmidrule{2-4}
& 1 & 5995 & 81.28\\
\cmidrule{2-4}
& 2 & 2281 & 80.26 \\
\cmidrule{2-4}
& 4 & 519 & 77.53 \\
\cmidrule{2-4}
& 6 & 160 & 75.53 \\
\cmidrule{2-4}
& 9 & 38 & 72.75 \\
\bottomrule
\end{tabular}
}
\caption{\textbf{Cross-dataset transfer}. ViT-B/16 average \textit{finetuning} validation accuracy on ImageNet1k, pretrained on ImageNet21k. The baseline (in red) is taken directly from \cite{vit2021}. See Appendix \ref{section: appendix A} for details.}
\label{table: vitb16, im21k to 1k, finetune}
\vspace{-4ex}
\end{wraptable}

This subsection provides more details about the experiment shown in Figure \ref{fig: im21k->im1k, linear probe}. Specifically, we show that the common practice of pretraining on ImageNet21k using leaf labels is indeed better than pretraining at lower granularities in the manual hierarchy.

\textit{Hierarchy definition}. The label hierarchy in ImageNet21k is based on WordNet \cite{wordnet, deng2009}. To define fine-grained labels, we first define the leaf labels of the dataset as Hierarchy level 0. For each image, we trace the path from the leaf label to the root using the WordNet hierarchy. We then set the $k$-th synset (or the root synset, if it is higher in the hierarchy) as the level-$k$ label of this image. This procedure also applies to the multi-label samples. This is how we generate the hierarchies shown in Table \ref{table: vitb16, im21k to 1k, finetune}.

\textit{Network choice and training}. For this dataset, we use the more recent Vision Transformer ViT-B/16 \cite{vit2021}. Our pretraining pipeline is almost identical to the one in \cite{vit2021}. For fine-tuning, we experimented with several strategies and report only the best results in the main text. To ensure a fair comparison, we also used these strategies to find the best baseline result by using $\calD^{\text{tgt}}_{\text{train}}$ for pretraining.

\textit{Results}. Table \ref{table: vitb16, im21k to 1k, finetune} shows a clear trend: the best accuracy occurs at the leaf level, and the network's accuracy on ImageNet1k decreases as the pretraining label granularity on ImageNet21k decreases. 

Interestingly, as the pretraining granularity approaches 1,000, the finetuned accuracies become relatively close to the baselines. This suggests that, even if there is little source-target distribution shift in the inputs, the label functions align well across the datasets and we pretrain with much more data than the baseline, we cannot see as much improvement with a poorly chosen pretraining granularity.

\subsection{Transfer experiment on iNaturalist 2021}
We conduct a systematic study of the transfer method \textit{within} the label hierarchies of iNaturalist 2021 \cite{inaturalist_2021}. This dataset is well-suited for our analysis because it has a manually defined label hierarchy that is based on the biological traits of the creatures in the images. Additionally, the large sample size of this dataset reduces the likelihood of sample-starved pretraining on reasonably fine-grained hierarchy levels.

\begin{figure}[t!]
    \centering
    \includegraphics[width=0.7\linewidth]{ 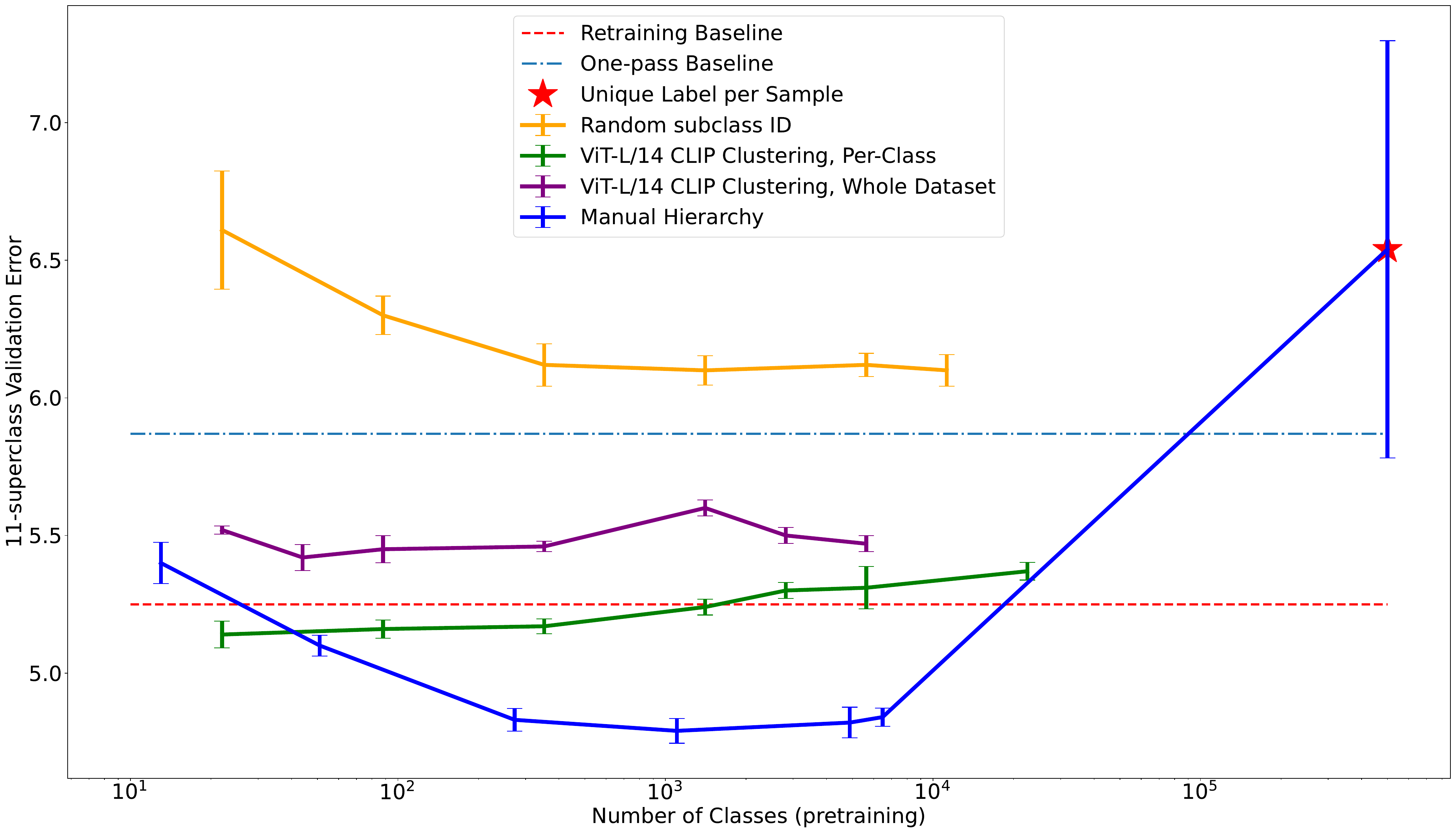}
    \caption{\textbf{In-dataset transfer}. ResNet34 validation error (with standard deviation) of finetuning on 11 superclasses of iNaturalist 2021, pretrained on various label hierarchies. The manual hierarchy outperforms the baseline and every other hierarchy, and exhibits a U-shaped curve.}
    \label{fig:resnet34 inat2021}
\end{figure}

Our experiments on this dataset demonstrate the importance of a meaningful label hierarchy, label function alignment and appropriate choice of pretraining label granularity.

\textit{Relevant datasets}. We perform transfer experiments within iNaturalist2021. More specifically, we set $\calX^{\text{src}}_{\text{train}}$ and $\calX^{\text{tgt}}_{\text{train}}$ both equal to the training split of the input samples in iNaturalist2021, and set $\calX^{\text{tgt}}_{\text{train}}$ to the testing split of the input samples in iNaturalist2021. To focus on the ``fine-to-coarse'' transfer setting, the \textit{target problem} is to classify the root level of the manual hierarchy, which contains 11 superclasses. To generate a greater gap between the performance of different hierarchies and to shorten training time, we use the mini version of the training set in all our experiments.

The \textit{motivation} behind this experimental setting is similar to how we defined our theory's data setting:  by using exactly the same input samples for the baseline and fine-grained training schemes, we can be more confident that the choice of (pre-)training label hierarchy is the \textit{cause} of any changes in the network's accuracy on the target problem. 

\textit{Alternative hierarchies generation}. To better understand the transfer method's operating regime, we experiment with different ways of generating the fine-grained labels for pretraining: we perform kMeans clustering on the ViT-L/14-based CLIP embedding \cite{radford2021, dehghani2021scenic} of every sample in the training set and use the cluster IDs as pretraining class labels. We carry out this experiment in two ways. The green curve in Figure \ref{fig:resnet34 inat2021} comes from performing kMeans clustering on the embedding of each superclass \textit{separately}, while the purple one's cluster IDs are from performing kMeans on the \textit{whole dataset}. The former way preserves the implicit hierarchy of the superclasses in the cluster IDs: samples from superclass $k$ cannot possibly share a cluster ID with samples belonging to superclass $k' \neq k$. Therefore, its label function is forced to align better with that of the 11 superclasses than the purple curve's. We also assign random class IDs to samples.

\textit{Network choice and training}. We experiment with ResNet 34 and 50 on this dataset. For pretraining on $\calD_{\text{train}}^{\text{src}}$ with fine-grained labels, we adopt a standard 90-epoch large-batch-size training procedure commonly used on ImageNet \cite{kaiming2016,goyal2017}. Then we finetune the network for 90 epochs and test it on the 11-superclass $\calD^{\text{tgt}}_{\text{train}}$ and $\calD^{\text{tgt}}_{\text{test}}$, respectively, using the pretrained backbone $\vh(\mTheta_{\text{src}}; \cdot)$: we found that finetuning at a lower batch size and learning rate improved training stability. To ensure a fair comparison, we trained the baseline model using exactly the same training pipeline, except that the pretraining stage uses $\calD^{\text{tgt}}_{\text{train}}$. We observed that this ``retraining'' baseline consistently outperformed the naive one-pass 90-epoch training baseline on this dataset. Due to space limitations, we leave the results of ResNet50 to the appendix.

\textit{Interpretation of results}. Figure \ref{fig:resnet34 inat2021} shows the validation errors of the resulting models on the 11-superclass problem. We make the following observations.
\begin{itemize}
    \item (Random class ID). Random class ID pretraining (orange curve) performs the worst of all the alternatives. The label function of this type does not generate a \textit{meaningful hierarchy} because it has no consistency in the features it considers discriminative when decomposing the superclasses. This is in stark contrast to the manual hierarchies, which decompose the superclasses based on the finer biological traits of the creatures in the image.
    \item (Human labels). Even with high-quality (human) labels, \textit{the granularity of pretraining labels should not be too large or too small}. As shown by the blue curve in Figure \ref{fig:resnet34 inat2021},  models trained on the manual hierarchies outperform all other alternative as long as the pretraining label granularity is beyond the order of $10^2$. However, the error exhibits a U shape, meaning that as the label granularity becomes too large, the error starts to rise. This is intuitive. If the pretraining granularity is too close to the target one, we should not expect improvement. On the other extreme, if we assign a unique label to \textit{every} sample in the training data, it is highly likely that the only \textit{differences} a model can find between each class would be frivolous details of the images, which would not be considered discriminative by the label function of the target coarse-label problem. In this case, the pretraining stage is almost meaningless and can be misleading, as evidenced by the very high label-per-sample error (red star in Figure \ref{fig:resnet34 inat2021}).
    \item (Cluster ID). For fine-grained pretraining to be effective, the features that the pretraining label function considers discriminative must \textit{align} well with those valued by the label function of the 11-superclass hierarchy. To see this point, observe that for models trained on cluster IDs obtained by performing kMeans on the CLIP embedding samples in each superclass \textit{separately} (green curve in Figure \ref{fig:resnet34 inat2021}), their validation errors are much lower than those trained on cluster IDs obtained by performing kMeans on the whole dataset (purple curve in Figure \ref{fig:resnet34 inat2021}). As expected, the manually defined fine-grained label functions align best with that of the 11 superclasses, and the results corroborate this view.
\end{itemize}

\section{Conclusion}

In this paper, we studied the influence of pretraining label granularity on the generalization of DNNs in downstream image classification tasks. 
Empirically, we confirmed that pretraining on the \textit{leaf} labels of ImageNet21k produces better transfer results on ImageNet1k than pretraining on coarser granularity levels; we further showed the importance of meaningful label hierarchy and label function alignment between the source and target tasks for fine-grained training in practice.
Theoretically, we explained why fine-grained pretraining can outperform the vanilla coarse-grained training by establishing a correspondence between label granularity and solution complexity. In the future, we plan to investigate the transfer scenario in which there is a nontrivial distribution shift between the source and target datasets. We also plan to study more scalable ways of obtaining fine-grained labels for training image classifiers. For example, we could first use large language models (LLMs) \cite{chatgpt2023} to decompose the coarse-grained labels, then use visual question answering (VQA) models \cite{instructblip, pali2023} to automatically classify the input samples in a fine-grained manner.

\section{Reproducibility Statement}
To ensure reproducibility and completeness of the experimental results of this paper, we discuss in detail the experimental procedures, relevant hyperparameter choices and ablation studies in Appendix \ref{section: appendix A}. Appendix \ref{section: theory, problem setup} to \ref{section: appendix, prob lemmas} are devoted to showing the complete theoretical results.

\bibliography{iclr2023_conference}
\bibliographystyle{iclr2023_conference}

\appendix
\etocdepthtag.toc{mtchapter}
\etocsettagdepth{mtchapter}{subsection}
\etocsettagdepth{mtappendix}{none}

\onecolumn
\begin{center}
    \huge Appendix
\end{center}

\etocdepthtag.toc{mtappendix}
\etocsettagdepth{mtchapter}{none}
\etocsettagdepth{mtappendix}{subsubsection}

\begingroup
\hypersetup{linkcolor=blue}
\tableofcontents
\endgroup

\newpage
\section{Additional Experimental Results}
\label{section: appendix A}
In this section, we present the full details of our experiments and relevant ablation studies. All of our experiments were performed using tools in the Scenic library \cite{dehghani2021scenic}.
\subsection{In-dataset transfer results}
To clarify, in this transfer setting, we are essentially transferring \textit{within} a dataset. More specifically, we set $\calX^{\text{src}} = \calX^{\text{tgt}}$ and only the label spaces $\calY^{\text{src}}$ and $\calY^{\text{tgt}}$ may differ (in distribution). The baseline in this setting is clear: train on $\calD^{\text{tgt}}_{\text{train}}$ and test on $\calD^{\text{tgt}}_{\text{test}}$. In contrast, after pretraining the backbone network $\vh(\mTheta; \cdot)$ on $\mathcal{Y}^{\text{src}}$, we finetune or linear probe it on $\calD^{\text{tgt}}_{\text{train}}$ using the backbone and then test on $\calD^{\text{tgt}}_{\text{test}}$.

\subsubsection{iNaturalist 2021}
\label{appdx_sec:inat2021}
On iNaturalist 2021, we use the mini training dataset with size 500,000 instead of the full training dataset to show a greater gap between the results of different hierarchies and speed up training. We use the architectures ResNet 34 and 50 \cite{kaiming2016}.

\textit{Training details}. Our pretraining pipeline on iNaturalist is essentially the same as the standard large-batch-size ImageNet-type training for ResNets \cite{kaiming2016, goyal2017}. The following pipeline applies to model pretraining on any hierarchy.
\begin{itemize}
    \item Optimization: SGD with 0.9 momentum coefficient, 0.00005 weight decay, 4096 batch size, 90 epochs total training length. We perform 7 epochs of linear warmup in the beginning of training until the learning rate reaches $0.1\times 4096/256 = 1.6$, and then apply the cosine annealing schedule.
    \item Data augmentation: subtracting mean and dividing by standard deviation, image (original or its horizontal flip) resized such that its shorter side is $256$ pixels, then a $224 \times 224$ random crop is taken.
\end{itemize}

For finetuning, we keep everything in the pipeline the same except setting the batch size to $4096/4 = 1024$ and base learning rate $1.6/4 = 0.4$. We found that finetuning at higher batch size and learning rate resulted in training instabilities and severely affected the final finetuned model's validation accuracy, while finetuning at lower batch size and learning rate than the chosen one resulted in lower validation accuracy at the end even though their training dynamics was stabler.

For the baseline accuracy, as mentioned in the main text, to ensure fairness of comparison, in addition to only training the network on the target 11-superclass problem for 90 epochs (using the same pretraining pipeline), we also perform ``retraining'': follow the exact training process of the models trained on the various hierarchies, but use  $\calD_{\text{train}}^{\text{tgt}}$ as the training dataset in both the pretrianing and finetuning stage. We observed consistent increase in the final validation accuracy of the model, so we report this as the baseline accuracy. Without retraining (so naive one-pass 90-epoch training on 11 superclasses), the average accuracy with standard deviation is $94.13, 0.025$.

\textit{Clustering}. To obtain the cluster-ID-based labels, we perform the following procedure. 
\begin{enumerate}
    \item For every sample $\mX_n$ in the mini training dataset of iNaturalist 2021, obtain its ViT-L/14 CLIP embedding $\mE_n$.
    \item Per-superclass kMeans clustering. Let $C$ be the predefined number of clusters per class.
    \begin{enumerate}
        \item For every superclass $k$, for the set of embedding $\{(\mE_n, y_n = k)\}$ belonging to that superclass, perform kMeans clustering with cluster size set to $C$.
        \item Given a sample with superclass ID $k\in\{1, 2, ..., 11\}$ and cluster ID $c\in\{1, 2, ..., C\}$, define its fine-grained ID as $C\times k + c$. 
    \end{enumerate}
    \item Whole-dataset kMeans clustering. Let $C$ be the predefined number of clusters on the whole dataset.
    \begin{enumerate}
        \item Perform kMeans on the embedding of all the samples in the dataset, with the number of clusters set to $C$. Set the fine-grained class ID of a sample to its cluster ID.
    \end{enumerate}
\end{enumerate}
Some might have the concern that having the same number of kMeans clusters per superclass could cause certain classes to have too few samples, which could be a reason for why the cluster ID hierarchies perform worse than the manual hierarchies. Indeed, the number of samples per superclass on iNaturalist is different, so in addition to the above ``uniform-number-of-cluster-per-superclass'' hierarchy, we add an extra label hierarchy by performing the following procedure to balance the sample size of each cluster:
\begin{enumerate}
    \item Perform kMeans for each superclass with number of clusters set to 2, 8, 32, 64, 128, 256, 512, 1024 and save the corresponding image-ID-to-cluster-ID dictionaries (so we are basically reusing the clustering results of the CLIP+kMeans per superclass experiment)
    \item For each superclass, find the image-ID-to-cluster-ID dictionary with the highest granularity while still keeping the minimum number of samples for each cluster $>$ predefined threshold (e.g. 1000 samples per subclass)
    \item Now we have nonuniform granularity for each superclass while ensuring that the sample count per cluster is above some predefined threshold.
\end{enumerate}
This simple procedure somewhat improves the balance of sample count per cluster, for example, Figure \ref{fig:inat2021, kmean per superclass, rebalanced} shows the sample count per cluster for the cases of total number of clusters = 608 and 1984. Unfortunately, we do not observe any meaningful improvement on the model's validation accuracy trained on this more refined hierarchy.

\begin{table*}[t!]
\setlength{\tabcolsep}{5pt}
\centering
\scalebox{0.7}{
\begin{tabular}{l c | c c c c c c c c c}
\toprule
Manual Hierarchy & $\calG(\mathcal{Y}^{\text{src}})$ & 11 & 13 & 51 & 273 & 1103 & 4884 & 6485\\
& Validation error & \textbf{\textcolor{red}{5.25$\pm$0.051}} & 5.40$\pm$0.075 & 5.10$\pm$0.038 & 4.83$\pm$0.041 & \textbf{4.79$\pm$0.045}& 4.82$\pm$0.056 & 4.84$\pm$0.033 \\
\hline
Random class ID & $\calG(\mathcal{Y}^{\text{src}})$ & 22 & 88 & 352 & 1,408 & 5,632 & 11,264 & 500,000\\
& Validation error & 6.61$\pm$0.215 & 6.30$\pm$0.070 & 6.12$\pm$0.77 & 6.10$\pm$0.053 & 6.12$\pm$0.042 & 6.10$\pm$0.057 & 6.54$\pm$0.758 \\
\hline
CLIP+kMeans & $\calG(\mathcal{Y}^{\text{src}})$ & 22 & 88 & 352 & 1408 & 2816 & 5632 & 22528\\
per superclass & Validation error & 5.14$\pm$0.049 & 5.16$\pm$0.033 & 5.17$\pm$0.027 & 5.24$\pm$0.029 & 5.30$\pm$0.029 & 5.31$\pm$0.077 & 5.37$\pm$0.032 \\
\hline
C+k per supclass & $\calG(\mathcal{Y}^{\text{src}})$ & 88 & 218 & 320 & 608 & 1040 & 1984\\
Class rebalanced & Validation error & 5.18$\pm$0.054 & 5.17$\pm$0.038 & 5.23$\pm$0.052 & 5.28$\pm$0.045 & 5.26$\pm$0.035 & 5.21$\pm$0.040 &  \\
\hline
CLIP+kMeans & $\calG(\mathcal{Y}^{\text{src}})$ & 22 & 44 & 88 & 352 & 1408 & 2816 & 5632 \\
whole dataset& Validation error & 5.52$\pm$0.015 & 5.42$\pm$0.047 & 5.45$\pm$0.049 & 5.46$\pm$0.019 & 5.60$\pm$0.029 & 5.50$\pm$0.029 & 5.47$\pm$0.029 \\
\bottomrule
\end{tabular}
}
\caption{\textbf{In-dataset transfer, iNaturalist 2021}. ResNet34 average finetuning validation error and standard deviation on 11 superclasses in iNaturalist 2021, pretrained on various label hierarchies with different label granularity. Baseline (11-superclass) and best performance are highlighted.}
\label{table: resnet34, inat2021, appendix 1}
\end{table*}

\begin{table*}[t!]
\setlength{\tabcolsep}{5pt}
\centering
\scalebox{0.8}{
\begin{tabular}{c  c | c c c c c c c c}
\toprule
90-Epoch ckpt & $\calG(\mathcal{Y}^{\text{src}})$ & 13 & 51 & 273 & 1103 & 4884 & 6485\\
& Validation error & 5.40$\pm$0.075 & 5.10$\pm$0.038 & 4.83$\pm$0.041 & 4.79$\pm$0.045 & 4.82$\pm$0.056 & 4.84$\pm$0.033 \\
\hline
70-Epoch ckpt & $\calG(\mathcal{Y}^{\text{src}})$ & 13 & 51 & 273 & 1103 & 4884 & 6485\\
& Validation error & 5.43$\pm$0.055 & 5.08$\pm$0.029 & 4.86$\pm$0.037 & 4.82$\pm$0.034 & 4.83$\pm$0.064 & 4.85$\pm$0.018 \\
\hline
50-Epoch ckpt & $\calG(\mathcal{Y}^{\text{src}})$ & 13 & 51 & 273 & 1103 & 4884 & 6485\\
& Validation error & 5.53$\pm$0.036 & 5.2$\pm$0.031 & 4.90$\pm$0.038 & 4.9$\pm$0.042 & 4.91$\pm$0.020 & 4.95$\pm$0.026 \\
\bottomrule
\end{tabular}
}
\caption{\textbf{In-dataset transfer, iNaturalist 2021}. ResNet34 average finetuned validation error and standard deviation on 11 superclasses in iNaturalist 2021, pretrained on the manual hierarchies, with different backbone checkpoints.}
\label{table: resnet34 different backbone ckpts, inat2021, appendix 1}
\end{table*}

\begin{table*}[t!]
\setlength{\tabcolsep}{5pt}
\centering
\scalebox{0.7}{
\begin{tabular}{l c | c c c c c c c c}
\toprule
Manual Hierarchy & $\calG(\mathcal{Y}^{\text{src}})$ & 11 & 13 & 51 & 273 & 1103 & 4884 & 6485\\
& Validation error & \textbf{\textcolor{red}{4.43$\pm$0.029}} & 4.44$\pm$0.063 & 4.36$\pm$0.062 & 4.22$\pm$0.021 & \textbf{4.20$\pm$0.035 }& 4.23$\pm$0.054 & 4.33$\pm$0.037 \\
\hline
Random class ID & $\calG(\mathcal{Y}^{\text{src}})$ & 22 & 88 & 352 & 1,408 & 5,632 & 11,264 & 500,000\\
& Validation error & 5.36$\pm$0.111 & 5.31$\pm$0.079 & 5.24$\pm$0.093 & 5.38$\pm$0.052 & 5.37$\pm$0.033 & 5.40$\pm$0.033 & 5.13$\pm$0.072 \\
\bottomrule
\end{tabular}
}
\caption{\textbf{In-dataset transfer, iNaturalist 2021}. ResNet50 finetuned average validation error and standard deviation on 11 superclasses in iNaturalist 2021, pretrained on label hierarchies with different label granularity.}
\label{table: resnet50, inat2021, appendix 1}
\end{table*}

\textit{Experimental procedures}. All the validation accuracies we report on ResNet34 are the averaged results of experiments performed on at least 6 random seeds: 2 random seeds for backbone pretraining and 3 random seeds for finetuning. We report the average accuracies with their standard deviation on various hierarchies in Table \ref{table: resnet34, inat2021, appendix 1}.

An additional experiment we performed with ResNet34 is a small grid search over what checkpoint of a pretrained backbone we should use for finetuning on the 11-superclass method; we tried the 50-, 70- and 90-epoch checkpoints of the backbone on the manual hierarchies. We report these results in Table \ref{table: resnet34 different backbone ckpts, inat2021, appendix 1}. As we can see, 90-epoch checkpoints performs almost equally well as the 70-epoch checkpoints and better than the 50-epoch ones by a nontrivial margin. With this observation, we chose to use the end-of-pretraining 90-epoch checkpoints in all our other experiments without further ablation studies on those hierarchies.

Our ResNet50 results are not as extensive as those on ResNet34. We present the average accuracies and standard deviations in Table \ref{table: resnet50, inat2021, appendix 1}.

\begin{figure}[t!]
    \centering
    \begin{subfigure}{\textwidth}
        \centering
        \includegraphics[width=.45\linewidth]{ 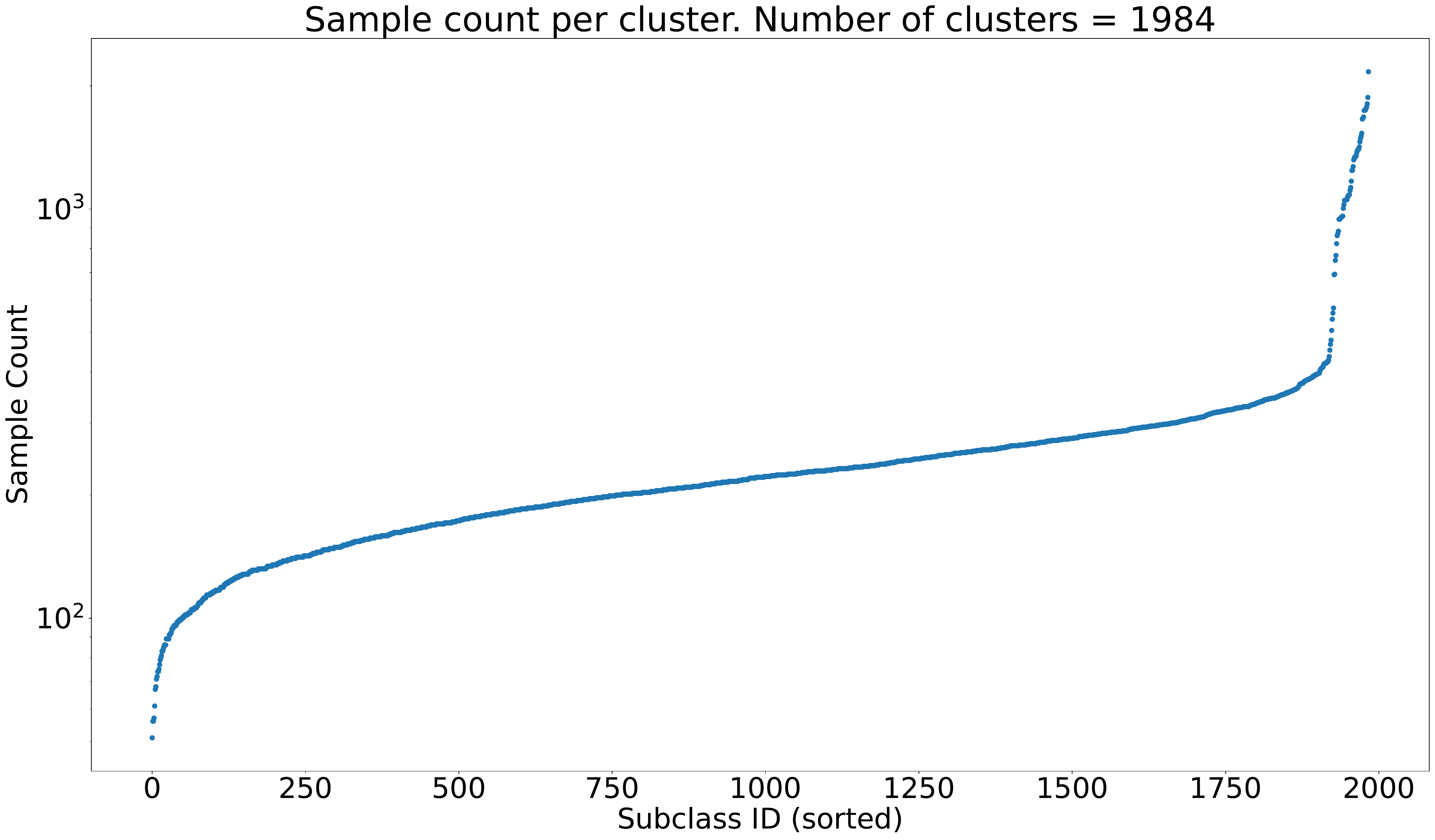}
        \includegraphics[width=.45\linewidth]{ 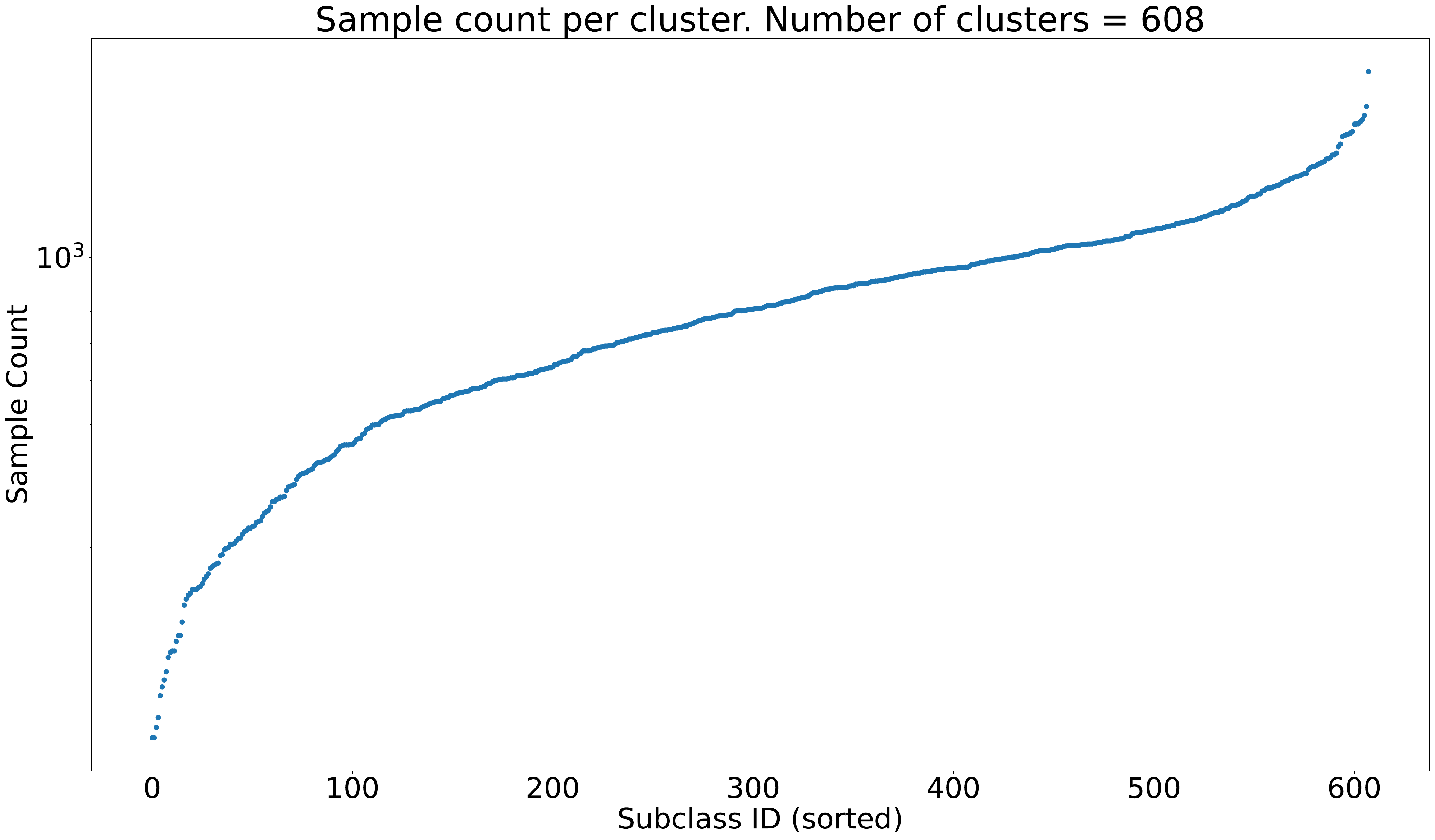}
    \end{subfigure}
    \caption{\textbf{In-dataset transfer, iNaturalist 2021}. Number of samples per cluster in the case of 608 and 1984 total clusters, after applying the sample size rebalancing procedure described in subsection \ref{appdx_sec:inat2021}. Observe that the sample sizes are reasonably balanced across almost all the subclasses.}
    \vspace{-2mm}
    \label{fig:inat2021, kmean per superclass, rebalanced}
\end{figure}

\subsubsection{ImageNet21k}

\begin{table}[t!]
\centering
\begin{tabular}{c c c}
\toprule
Hierarchy level & $\calG(\mathcal{Y}^{\text{src}})$ & Validation error\\
\hline
Baseline & 2 & \textbf{\textcolor{red}{7.90}} \\
0 (leaf) & 21843 & \textbf{6.56} \\
1 & 5995 & 6.76 \\
2 & 2281 & 6.70 \\
4 & 519 & 6.97 \\
6 & 160 & 7.31 \\
9 & 38 & 7.55 \\
\bottomrule
\end{tabular}
\caption{\textbf{In-dataset transfer}. ViT-B/16 validation error on the binary problem ``is this object a \textit{living thing}?'' of ImageNet21k. Pretrained on various hierarchy levels of ImageNet21k, finetuned on the binary problem. Observe that the maximal improvement appears at the leaf labels, and as $\calG(\mathcal{Y}^{\text{src}})$ approaches 2, the percentage improvement approaches 0.}
\label{table: vitb16, im21k to 21k}
\end{table}

The ImageNet21k dataset we experiment on contains a total of 12,743,321 training samples and 102,400 validation samples, with 21843 leaf labels. A small portion of samples have multiple labels.

Caution: due to the high demand on computational resources of training ViT models on ImageNet21k, all of our experiments that require (pre-)training or finetuning/linear probing on this dataset were performed with one random seed.

\textit{Hierarchy generation}. To define fine-grained labels, we start by defining the leaf labels of the dataset to be Hierarchy level 0. For every image, we trace from the leaf synset to the root synset relying on the WordNet hierarchy, and set the $k$-th synset (or the root synset, whichever is higher in level) as the level-$k$ label of this image; this procedure also applies to the multi-label samples. This is the way we generate the manual hierarchies shown in the main text.

Due to the lack of a predefined coarse-label problem, we manually define our target problem to be a binary one: given an image, if the synset ``Living Thing'' is present on the path tracing from the leaf label of the image to the root, assign label 1 to this image; otherwise, assign 0. This problem almost evenly splits the training and validation sets of ImageNet21k: 5,448,549:7,294,772 for training, 43,745:58,655 for validation. 

\textit{Network choice and pretraining pipeline}. We experiment with the ViT-B/16 model \cite{vit2021}. The pretraining pipeline of this model follows the one in \cite{vit2021} exactly: we train the model for 90 epochs using the Adam optimizer, with $\beta_1 = 0.9, \beta_2=0.999$, weight decay coefficient equal to 0.03 and a batch size of 4096; we let the dropout rate be 0.1; the output dense layer's bias is initialized to $-10.0$ to prevent huge loss value coming from the off-diagonal classes near the beginning of training \cite{cui2019cvpr}; for learning rate, we perform linear warmup for 10,000 steps until the learning rate reaches $10^{-3}$, then it is linearly decayed to $10^{-5}$. The data augmentations are the common ones in ImageNet-type training \cite{vit2021,kaiming2016}: random cropping and horizontal flipping. Note that we use the sigmoid cross-entropy for training since the dataset has multi-label samples.

\textit{Evaluation on the binary problem}. After the 90-epoch pretraining on the manual hierarchies, we evaluate the model on the binary problem. We report the best accuracies on each hierarchy level in Table \ref{table: vitb16, im21k to 21k}. To get a sense of how the relevant hyperparameters influence final accuracy of the model, we try out the following finetuning/linear probing strategies on the backbone trained on the \textit{leaf labels} and the target \textit{binary problem} of the dataset, and report the results in Table \ref{table: vitb16, in-dataset, ablation} (similar to our experiments on iNaturalist, we include the backbone trained on the binary problem in these ablation studies to ensure that our comparisons against the baseline are fair) :
\begin{enumerate}
    \item 90-epochs finetuning in the same fashion as the pretraining stage, but with a small grid search over 
    \begin{equation*}
    \begin{aligned}
        (\text{batch size}, \text{ base learning rate}) 
        = & \{(4096, 0.001), (4096/4=1024, 0.001/4 = 0.00025) , \\
        & (4096/8=512, 0.001/8 = 0.000125)\}.
    \end{aligned}
    \end{equation*}
    \item Linear probing with 20 epochs training length, using exactly the same training pipeline as in pretraining. We ran a small grid search over $(\text{batch size}, \text{ base learning rate}) = \{(4096, 0.001), (4096/8 = 512, 0.001/8 = 0.000125)\}$.
    \item 10-epochs finetuning, no linear warmup, 3 epochs of constant learning rate in the beginning followed by 7 epochs of linear decay, with a small grid search over $(\text{batch size}, \text{ base learning rate}) = \{(4096, 0.001), (4096/8 = 512, 0.001/8 = 0.000125)\}$.
\end{enumerate}
Table \ref{table: vitb16, in-dataset, ablation} helps us decide the best accuracies to report. First, as expected the linear probing results are much worse than the finetuning ones. Second, the ``retraining'' accuracy of 92.102 is the best baseline we can report (the same thing happened in the iNaturalist case) --- if we only train the model for 90 epochs (the naive one-pass training) on the binary problem, then the model's final validation accuracy is 91.746\%, which is lower than 92.102\% by a nontrivial margin. In contrast, the short 10-epoch finetuning strategy works best for the backbone trained on the leaf labels, therefore, we also use this strategy to evaluate the backbones trained on all the other manual hierarchies. A peculiar observation we made was that, finetuning the leaf-labels-pretrained backbone for extended period of time on the binary problem caused it to overfit severely: for batch size and base learning rate in the set $\{(4096, 0.001), (1024, 0.00025), (512, 0.000125)\}$, throughout the 90 epochs of finetuning, although its training loss exhibits the normal behavior of staying mostly monotonically decreasing, its validation accuracy actually reached its peak during the linear warmup period!

\begin{table*}[t!]
\setlength{\tabcolsep}{4pt}
\centering
\footnotesize
\scalebox{0.74}{
\begin{tabular}{l c | c c c | c c | c c}
\toprule
 & Eval strategy & \multicolumn{3}{c|}{90-epoch finetune} & \multicolumn{2}{c|}{Linear probe} & \multicolumn{2}{c}{10-epoch finetune} \\
\hline
\multirow{2}{*}{Leaf-pretrained} & (Batch size, base lr) & (4096,1e-3) & (1024,2.5e-4) & (512,1.25e-4) & (4096, 1e-3) & (512, 1.25e-4) & (4096,1e-3) & (512,1.25e-4) \\
& Validation error & 92.782 & 93.177 & 93.295 & 87.497 & 87.493 & 92.294 & \textbf{93.439}  \\
\hline
 \multirow{2}{*}{Baseline} & (Batch size, base lr) & (4096,1e-3) & (1024,2.5e-4) & (512,1.25e-4) & (4096, 1e-3) & (512, 1.25e-4) & (4096,1e-3) & (512,1.25e-4) \\
& Validation error & \textcolor{red}{\textbf{92.102}} & 91.971 & 91.939 & 91.703 & 91.719 & 92.002 & 91.856  \\
\bottomrule
\end{tabular}
}
\caption{\textbf{In-dataset transfer, ImageNet21k}. ViT-B/16 validation accuracy on the binary problem ``Is the object a Living Thing'' on ImageNet21k. Ablation study on the exact finetuning/linear probing strategy.}
\label{table: vitb16, in-dataset, ablation}
\end{table*}

\subsubsection{ImageNet1k}
\begin{table*}[t!]
\setlength{\tabcolsep}{10pt}
\centering
\scalebox{0.93}{
\begin{tabular}{l c | c c c }
\toprule
ResNet50 CLIP+kMeans & $\calG(\mathcal{Y}^{\text{src}})$ & 2000 & 4000 & 8000 \\
per-class & Validation error & 23.4$\pm$0.13 & 23.48$\pm$0.098 & 23.49$\pm$0.204 \\
\hline
ViT-L/14 CLIP+kMeans & $\calG(\mathcal{Y}^{\text{src}})$ & 2000 & 4000 & 8000 \\
per-class & Validation error & 23.4$\pm$0.127 & 23.47$\pm$0.074 & 23.78$\pm$0.048 \\
\hline
Random ID & $\calG(\mathcal{Y}^{\text{src}})$ & 2000 & 4000 & 8000 \\
per-class & Validation error & 23.4$\pm$0.068 & 23.4$\pm$0.070 & 23.65$\pm$0.071 \\
\bottomrule
\end{tabular}
}
\caption{\textbf{In-dataset transfer, ImageNet1k}. ResNet50 finetuned average validation error and standard deviation on the vanilla 1000 classes, pretrained on label hierarchies with different label granularity.}
\label{table: resnet50, in-dataset, imagenet1k, appendix 1}
\end{table*}

Our ImageNet1k in-dataset transfer experiments are done in a very similar fashion to the iNaturalist ones. In particular, the pretraining and finetuning pipeline for ResNet50 is exactly the same as the one in the iNaturalist case, so we do not repeat it here. 

Due to a lack of more fine-grained manual label on this dataset, we generate fine-grained labels by performing kMeans on the ViT-L/14 CLIP embedding of the dataset separately for each class; the exact procedure is also identical to the iNaturalist case. The CLIP backbones we use here are the ResNet50 version and the ViT-L/14 version. We report the average accuracies and their standard deviation in Table \ref{table: resnet50, in-dataset, imagenet1k, appendix 1}. All results are obtained from at least one random seed during pretraining and 3 random seeds during finetuning.

The best baseline we report is the one using retraining: if we adopt the pretrain-then-finetune procedure but with $\calD_{\text{train}}^{\text{tgt}}$ (i.e. the vanilla 1000-class labels) set as the pretraining dataset, then we obtain an average validation error of 23.28\% with standard deviation of 0.103, averaged over results of 3 random seeds. In comparison, if we only perform the naive one-pass 90-epoch training, we obtain average valiation error 24.04\%, with standard deviation 0.057.

From Table \ref{table: resnet50, in-dataset, imagenet1k, appendix 1}, we see that there is virtually no difference between the baseline and the best errors obtained by the models trained on the custom hierarchies: they are almost equally bad. Noting that the sample size of each class in ImageNet1k is only around $10^3$, and the fact that ImageNet1k classification is a ``hard problem'' --- it is a problem of high sample complexity --- further decomposing the classes causes each fine-grained class to have too few samples, leading to the above negative results. This reflects the intuition that higher label granularity does not necessarily mean better model generalization, since the sample size per class might become too small.

\subsection{Cross-dataset transfer, ImageNet21k$\to$ImageNet1k}
\label{appendix: im21k cross-dataset}

In this subsection, we report the average validation accuracy and standard deviation of the cross-dataset transfer experiment from ImageNet21k to ImageNet1k, as discussed in Figure \ref{fig: im21k->im1k, linear probe} and Section \ref{subsect: im21k->im1k main text details} in the main text.

\textit{Network choice}. We use the same architecture ViT-B/16 as the one in the in-dataset ImageNet21k transfer experiment and follow the same training procedure, which we repeat here for the reader's convenience. The pretraining pipeline of this model follows the one in \cite{vit2021}: we train the model for 90 epochs using the Adam optimizer, with $\beta_1 = 0.9, \beta_2=0.999$, weight decay coefficient equal to 0.03 and a batch size of 4096; we let the dropout rate be 0.1; the output dense layer's bias is initialized to $-10.0$ to prevent huge loss value coming from the off-diagonal classes near the beginning of training \cite{cui2019cvpr}; for learning rate, we perform linear warmup for 10,000 steps until the learning rate reaches $10^{-3}$, then it is linearly decayed to $10^{-5}$. The data augmentations are the common ones in ImageNet-type training \cite{vit2021,kaiming2016}: random cropping and horizontal flipping. Note that we use the sigmoid cross-entropy for training since the dataset has multi-label samples.

\textit{Finetuning}. For finetuning on ImageNet1k, our procedure is very similar to the one in the original ViT paper \cite{vit2021}, described in its Appendix B.1.1. We optimize the network for 8 epochs using SGD with momentum factor set to 0.9, zero weight decay, and batch size of 512. The dropout rate, unlike in pretraining, is set to 0. Gradient clipping at 1.0 is applied. Unlike \cite{vit2021}, we still finetune at the resolution of 224$\times$224. For learning rate, we apply linear warmup for 500 epochs until it reaches the base learning rate, then cosine annealing is applied; we perform a small grid search of $\text{base learning rate} = \{3\times10^{-3}, 3\times10^{-2}, 6\times10^{-2}, 3\times10^{-1}\}$. Every one of these grid search is repeated over 3 random seeds. We report the ImageNet1k validation accuracies and their standard deviations in Table \ref{table: vitb16, im21k to 1k, finetune, appendix}. In the main text, we report the best accuracy for each hierarchy level.

\textit{Linear probing}. For linear probing, we use the following procedure. We optimize the linear classifier for 40 epochs (similar to \cite{comp_rep2021}) using SGD with Nesterov momentum factor set to 0.9, a small weight decay coefficient $10^{-6}$, and batch size 512. We start with a base learning rate of 0.9, and multiply it by 0.97 per 0.5 epoch. In terms of data augmentation, we adopt the standard ones like before: horizontal flipping and random cropping of size 224$\times$224. We repeat this linear probing procedure over 3 random seeds given the pretrained backbone, and report the average validation accuracy and standard deviation in Table \ref{table: vitb16, im21k to 1k, linear probe, appendix}.

\textit{Baseline}. The baseline accuracy on ImageNet1k is directly taken from the ViT paper \cite{vit2021} (see Table 5 in it), in which the ViT-B/16 model is trained for 300 epochs on ImageNet1k.

\begin{table}[t]
\centering
\setlength{\tabcolsep}{10pt}
\scalebox{0.95}{
\begin{tabular}{l | c  c  c  c  c}
\toprule
Pretrained on / Base lr & $3\times10^{-3}$ & $3\times10^{-2}$ & $6\times10^{-2}$ & $3\times10^{-1}$\\
\hline
ImageNet21k, Hier. lv. 0 & 80.87$\pm$0.012 & 82.48$\pm$0.005 & \textbf{82.51$\pm$0.042} & 81.40$\pm$0.041 \\
ImageNet21k, Hier. lv. 1 & 77.38$\pm$0.037 & 81.03$\pm$0.054 & \textbf{81.28$\pm$0.045} & 80.40$\pm$0.087 \\
ImageNet21k, Hier. lv. 2 & 74.91$\pm$0.012 & 79.76$\pm$0.021 & \textbf{80.26$\pm$0.05} & 79.7$\pm$0.019 \\
ImageNet21k, Hier. lv. 4 & 63.65$\pm$0.052 & 76.43$\pm$0.033 & \textbf{77.32$\pm$0.088} & 77.53$\pm$0.078 \\
ImageNet21k, Hier. lv. 6 & 62.17$\pm$0.012 & 73.65$\pm$0.033 & 73.92$\pm$0.073 & \textbf{75.53$\pm$0.024} \\
ImageNet21k, Hier. lv. 9 & 53.68$\pm$0.034 & 69.33$\pm$0.045 & 71.08$\pm$0.068 & \textbf{72.75$\pm$0.071} \\
\bottomrule
\end{tabular}
}
\caption{\textbf{Cross-dataset transfer}. ViT-B/16 average \textit{finetuning} validation accuracy on ImageNet1k along with standard deviation, pretrained on various hierarchy levels of ImageNet21k, and a small grid search over the base learning rate. }
\label{table: vitb16, im21k to 1k, finetune, appendix}
\end{table}

\begin{table}[t!]
\centering
\begin{tabular}{l c c c}
\toprule
Pretrained on & Hier. lv & $\calG(\mathcal{Y}^{\text{src}})$ & Validation acc.\\
\hline
IM21k & 0 (leaf) & 21843 & \textbf{81.45$\pm$0.021} \\
\cmidrule{2-4}
& 1 & 5995 & 78.33$\pm$0.018 \\
\cmidrule{2-4}
& 2 & 2281 & 75.66$\pm$0.005 \\
\cmidrule{2-4}
& 4 & 519 & 68.95$\pm$0.051 \\
\cmidrule{2-4}
& 6 & 160 & 63.65$\pm$0.035 \\
\cmidrule{2-4}
& 9 & 38 & 57.35$\pm$0.016 \\
\bottomrule
\end{tabular}
\caption{\textbf{Cross-dataset transfer}. ViT-B/16 average \textit{linear-probing} validation accuracy on ImageNet1k along with standard deviation, pretrained on various hierarchy levels of ImageNet21k. }
\label{table: vitb16, im21k to 1k, linear probe, appendix}
\end{table}

\newpage
\section{Theory, Problem Setup}
\label{section: theory, problem setup}
\subsection{Data Properties}
\begin{enumerate}
    \item Coarse classification: a binary task, $+1$ vs. $-1$.
    \item An input sample $\mX\in\mathbb{R}^{d\times P}$ consists of $P$ patches, each with dimension $d$. In this work, always assume $d$ is sufficiently large\footnote{Consider each $d$-dimensional patch of the input as an embedding of the input image generated by, for instance, an intermediate layer of a DNN.};
    \item Assume there exists $k_+$ subclasses of the superclass ``$+$'', and $k_-$ subclasses of the superclass ``$-$''. Let $k_+ = k_-$.
    \item Assume orthonormal dictionary $\calV = \{\vv_1, ..., \vv_d\} \subset \mathbb{R}^d$, which forms an orthonormal basis of $\mathbb{R}^d$. Define $\vv_+\in\calV$ to be the common feature of class ``$+$''. For each subclass $(+,c)$ (where $c\in[k_+]$), denote the subclass feature of it as $\vv_{+,c} \in \calV$. Similar for the ``$-$'' class.
    \item For an easy sample $\mX$ belonging to the $(+,c)$ class (for $c\in[k_+]$), we sample its patches as follows:
    
    \textbf{Definition}: we define the function $\calP: \mathbb{R}^{d\times P} \times \calV \to [P]$ (so $(\mX; \vv) \mapsto I \subseteq [P]$) to extract, from sample $\mX$, the indices of the patches on which the dictionary word $\vv\in\calD$ dominates.

    \begin{enumerate}
        \item (Common-feature patches) With probability $\frac{s^*}{P}$, a patch $\vx_p$ in $\mX$ is a common-feature patch, on which $\vx_{p} = \alpha_{p}\vv_{+} + \vzeta_{p}$ for some (random) $\alpha_{p} \in \left[\sqrt{1 - \iota}, \sqrt{1 + \iota}\right]$;
        \item (Subclass-feature patches) With probability $\frac{s^*}{P-\vert \calP(\mX; \vv_{+})\vert}$, a patch with index $p \in \left([P] - \calP(\mX; \vv_{+})\right)$ is a subclass-feature patch, on which $\vx_{p} = \alpha_{p}\vv_{+,c} + \vzeta_{p}$, for random $\alpha_{p} \in \left[\sqrt{1 - \iota}, \sqrt{1 + \iota}\right]$;
        \item (Noise patches) For the remaining $P-|\calP(\mX; \vv_{+})|-|\calP(\mX; \vv_{+,c})|$ patches, $\vx_p = \vzeta_p$.
    \end{enumerate}
    
    \item A hard sample $\mX_{\text{hard}}$ for class $(+,c)$ is exactly the same as an easy one except:
    \begin{enumerate}
        \item Its common-feature patches are replaced by noise patches;
        \item (Feature noise patches) With probability $\frac{s^{\dagger}}{P - \vert \calP(\mX; \vv_{+,c})\vert}$, a patch with index $p \in \left([P] - \calP(\mX; \vv_{+,c})\right)$ is a feature-noise patch, on which $\vx_{p} = \alpha_{p}^{\dagger}\vv_{-} + \vzeta_{p}$ for some (random) $\alpha_{p} \in \left[\iota^{\dagger}_{lower}, \iota^{\dagger}_{upper}\right]$;
        \item Set one of the noise patches to $\vzeta^*\sim\calN(\vzero,\sigma_{\zeta^*}^2\mI_d)$.
    \end{enumerate}

    \item A sample $\mX$ belongs to the ``$+$'' superclass if $|\calP(\mX; \vv_{+})| > 0$ or $|\calP(\mX; \vv_{+,c})|>0$ for any $c$ (excluding feature-noise patches). 

    \item The above sample definitions also apply to the ``$-$'' classes by switching the class signs.
    \item A training batch of samples contains exactly $N/2k_+$ samples for each $(+,c)$ and $(-,c)$ subclass. This also means that each training batch contains exactly $N/2$ samples belonging to the $+1$ superclass, and $N/2$ samples for the $-1$ superclass.
    \item As discussed in the  main text, for both coarse-grained (baseline) and fine-grained training, we only train on \textit{easy} samples.
\end{enumerate}

\subsection{Learner Assumptions}
Assume the learner is a two-layer convolutional ReLU network:
\begin{equation}
    F_{c}(\mX) = \sum_{r=1}^m a_{c,r}\sum_{p=1}^P \sigma(\langle \vw_{c,r}, \vx_p\rangle + b_{c,r})
\end{equation}
To simplify analysis and only focus on the learning of the feature extractor, we freeze $a_{c,r} = 1$ throughout training. The nonlinear activation $\sigma(\cdot) = \max(0,\cdot)$ is ReLU. Note that the convolution kernels have dimension $d$ and stride $d$.

\begin{remark} 
One difference between this architecture and a CNN used in practice is that we do not allow feature sharing across classes: for each class $c$, we are assigning a disjoint group of neurons $\vw_{c,r}$ to it. Separating neurons for each class is a somewhat common trick to lower the complexity of analysis in DNN theory literature \cite{zhu2020_kd,stefani2021,cao2022benign}, as it reduces complex coupling between neurons \textit{across} classes which is not the central focus of our study in this paper.
\end{remark}

\subsection{Training Algorithm}
\textbf{Initialization}.

Sample $\vw_{c,r}^{(0)} \sim \calN(\vzero, \sigma_0^2 \mI_d)$, and set $b_{c,r}^{(0)} = - \sigma_0 c_b \sqrt{\ln(d)}$.

\textbf{Training}.

We adopt the standard cross-entropy training:
\begin{equation}
    \calL(F) = \sum_{n=1}^N L(F; \mX_n, y_n) = -\sum_{n=1}^N \ln\left(\frac{\exp(F_{y_n}(\mX_n))}{\sum_{c=1}^C \exp(F_{c}(\mX_n))} \right)
\end{equation}
This induces the stochastic gradient descent update for each hidden neuron ($c\in[k], r\in[m]$) per minibatch of $N$ iid samples:
\begin{equation}
\begin{aligned}
    \vw_{c,r}^{(t+1)} 
    = \vw_{c,r}^{(t)} + \eta \frac{1}{NP} \sum_{n=1}^N \Bigg( 
    & \mathbbm{1}\{y_n = c\}[1-\text{logit}_c^{(t)}(\mX_n^{(t)})]\sum_{p\in[P]}\sigma'(\langle \vw_{c,r}^{(t)}, \vx_{n,p}^{(t)} \rangle +b_{c,r}^{(t)}) \vx_{n,p}^{(t)} + \\
    & \mathbbm{1}\{y_n\neq c\} [-\text{logit}_c^{(t)}(\mX_n^{(t)}) ]\sum_{p\in[P]} \sigma'(\langle \vw_{c,r}^{(t)}, \vx_{n,p}^{(t)}  \rangle + b^{(t)}_{c,r}) \vx_{n,p}^{(t)}\Bigg)
\end{aligned}
\end{equation}

where 
\begin{equation}
    \text{logit}_c^{(t)}(\mX) = \frac{\exp(F_{c}(\mX))}{\sum_{y=1}^C \exp(F_{y}(\mX))} 
\end{equation}

As for the bias,
\begin{equation}
    b_{c,r}^{(t+1)} = b_{c,r}^{(t)} - \frac{\|\vw_{c,r}^{(t+1)} - \vw_{c,r}^{(t)}\|_2}{\ln^5(d)}
\end{equation}

Additionally, we train the models until the network's output margins are sufficiently large (or the network is sufficiently confident in its decision). More specifically, we train the models such that before $F_{y_n}(X_n) - \max_{y \neq y_n} F_{y}(X_n) \ge \Omega(1)$ for all $n$ (replace $X_n$ by $X_n^{(t)}$ if we are performing stochastic gradient descent), we do not early stop the model. For analysis purposes, we allow the models to train for longer in some of our theorems.

\begin{remark}
\begin{enumerate}
    \item The initialization strategy is similar to the one in \cite{zhu2022_adv}.
    \item Since the only difference between the training samples of coarse and fine-grained pretraining is the label space, the form of SGD update is identical. The only difference is the number of output nodes of the network: for coarse training, the output nodes are just $F_+$ and $F_-$ (binary classification), while for fine-grained training, the output nodes are $F_{+,1}, F_{+,2}, ..., F_{+,k_+}, F_{-,1}, F_{-,2}, ..., F_{-,k_-}$, a total of $k_+ + k_-$ nodes.
    \item The bias is for thresholding out the neuron's noisy activations that grow slower than $1/\ln^5(d)$ times the activations on the main features which the neuron detects. This way, the bias does not really influence updates to the neuron's response to the core features which it activate strongly on, since $1 - \frac{1}{\ln^5(d)} \approx 1$, while it removes useless low-magnitude noisy activations. This in fact creates a (generalization) gap between the nonlinear model that we are studying and linear models. Due to our parameter choices (as discussed below), if the model has no nonlinearity (remove the ReLU activations), then even if the model can be written as $F_+(\mX) = \sum_{p\in[P]} c_+\langle \vv_+, \vx_{p} \rangle + c_{+,1}\langle \vv_{+,1}, \vx_{p} \rangle + ... + c_{+,k_+}\langle \vv_{+,k_+}, \vx_{p} \rangle$ and $F_-(\mX) = \sum_{p\in[P]} c_-\langle \vv_-, \vx_{p} \rangle + c_{-,1}\langle \vv_{-,1}, \vx_{p} \rangle + ... + c_{-,k_-}\langle \vv_{-,k_-}, \vx_{p} \rangle$ for any sequence of nonnegative real numbers $c_+, c_-, \{c_{+,j}\}_{j=1}^{k_+}, \{c_{-,j}\}_{j=1}^{k_-}$ (which is the ideal situation since the true features are not corrupted by anything), it is impossible for the model to reach $o(1)$ error on the input samples, because the number of noise patches will accumulate to a variance of $\left(P - O(s^*)\right)\sigma_{\zeta} \gg O(s^*)$, which significantly overwhelms the signal from the true features. On the other hand, each noise patch is sufficiently small in magnitude with high probability (their strength is $o(1/\ln^5(d))$), so a slightly negative bias, as described above, can threshold out these noise-based signals and prevent them from accumulating across the patches.
\end{enumerate}

\end{remark}

\subsection{Parameter Choices}
The following are fixed choices of parameters for the sake of simplicity in our proofs. 
\begin{enumerate}
    \item Always assume $d$ is sufficiently large. All of our asymptotic results are presented with respect to $d$;
    \item $\poly(d)$ denotes the asymptotic order ``polynomial in $d$'';
    \item $\polyln(d)$ aymptotic order ``polylogarithmic in $d$'';
    \item $\polyln(d) \le k_+ = k_- \le d^{0.4}$ and $s^*\ln^5(d) \le k_+$ (i.e. $k_+$ lower bounded by polynomial of $\ln(d)$ of sufficiently high degree);
    \item Small positive constant $c_0\in(0,0.1)$;
    \item For coarse-grained (baseline) training, set $c_b = \sqrt{4 + 2c_0}$, and for fine-grained training, set $c_b = \sqrt{2 + 2c_0}$;
    \item $0 \le \iota \le \frac{1}{\polyln(d)}$;
    \item $\iota^{\dagger}_{lower} \ge \frac{1}{\ln^4(d)}$, and $s^{\dagger}\iota^{\dagger}_{upper} \le O\left(\frac{1}{\ln(d)}\right)$;
    \item $s^{\dagger} \ge 1$;
    \item $s^* \in \polyln(d)$ with a degree $> 15$;
    \item $\sigma_{\zeta} = \frac{1}{\ln^{10}(d)\sqrt{d}}$;
    \item $\sigma_{\zeta^*} \in \left[\omega\left(\frac{\polyln(d)}{\sqrt{d}}\right), O\left(\frac{1}{\polyln(d)}\right)\right]$;
    \item $P\sigma_{\zeta} \ge \omega(\polyln(d))$, and $P \le \poly(d)$;
    \item $\sigma_0 \le O\left(\frac{1}{d^3s^*\ln(d)} \right)$, and set $\eta = \Theta(\sigma_0)$ for simplicity;
    \item Batch of samples $\calB^{(t)}$ at every iteration has a deterministic size of $N \in (\Omega(\polyln(d) k_+d), \poly(d))$.
    \item Note: we sometimes abuse the notation  $x = a \pm b$ as an abbreviation for $x \in [a-b, a+b]$.
\end{enumerate}
\begin{remark}
We believe the range of parameter choice can be (asymptotically) wider than what is considered here, but for the purpose of illustrating the main messages of the paper, we do not consider a more general set of parameter choice necessary because having a wider range of it can significantly complicate and obscure the already lengthy proofs without adding to the core messages.

Additionally, the function $f(\sigma_{\zeta})$ in the main text is set to $\frac{\sigma_{\zeta}^{-1}}{\ln^{10}(d)d^{0.1}} = d^{0.4}$ in this appendix for derivation convenience.
\end{remark}

\subsection{Plan of presentation}
We shall devote the majority of our effort to proving results for the coarse-label learning dynamics, starting with appendix section \ref{section: appendix init geometry} and ending on \ref{section: appendix, phase II coarse}, and only devote section \ref{section: appendix, finegrained} to the fine-grained-label learning dynamics, since the analysis of fine-grained training overlaps significantly with the coarse-grained one.

\newpage
\section{Coarse-grained training, Initialization Geometry}
\textbf{For coarse-grained training, assume $m = \Theta(d^{2 + 2c_0})$}.

\label{section: appendix init geometry}
\begin{definition}
Define the following sets of interest of the hidden neurons:
\begin{enumerate}
    \item $\calU_{+,r}^{(0)} = \{\vv \in \calV: \langle \vw_{+,r}^{(0)}, \vv\rangle \ge \sigma_0 \sqrt{4 + 2c_0}\sqrt{\ln(d) - \frac{1}{\ln^5(d)}}\}$
    \item Given $\vv \in \calV$, $S^{*(0)}_+(\vv) \subseteq + \times [m]$ satisfies:
    \begin{enumerate}
        \item $\langle \vw_{+,r}^{(0)}, \vv \rangle \ge \sigma_0 \sqrt{4 + 2c_0} \sqrt{\ln(d) + \frac{1}{\ln^5(d)}}$
        \item $\forall \vv' \in \calV \text{ s.t. } \vv' \perp \vv, \, \langle \vw_{+,r}^{(0)}, \vv' \rangle < \sigma_0 \sqrt{4 + 2c_0} \sqrt{\ln(d) - \frac{1}{\ln^5(d)}}$ 
    \end{enumerate}
    \item Given $\vv \in \calD$, $S_{+}^{(0)}(\vv) \subseteq + \times [m]$ satisfies:
    \begin{enumerate}
        \item $\langle \vw_{+,r}^{(0)}, \vv \rangle \ge \sigma_0 \sqrt{4 + 2c_0} \sqrt{\ln(d) - \frac{1}{\ln^5(d)}}$
    \end{enumerate}
    \item For any $(+,r) \in S_{+,reg}^{*(0)} \subseteq + \times [m]$:
    \begin{enumerate}
        \item $\langle \vw_{+,r}^{(0)}, \vv \rangle \le \sigma_0 \sqrt{10} \sqrt{\ln(d)} \; \forall \vv\in\calV$
        \item $\left\vert \calU_{+,r}^{(0)} \right\vert \le O(1)$
    \end{enumerate}
\end{enumerate}
\end{definition}
\begin{prop}
\label{prop: init geometry, coarse}
Assume $m = \Theta(d^{2 + 2c_0})$, i.e. the number of neurons assigned to the $+$ and $-$ class are equal and set to $\Theta(d^{2 + 2c_0})$.

At $t=0$, for all $\vv \in \calV$, the following properties are true with probability at least $1- d^{-2}$ over the randomness of the initialized kernels:
\begin{enumerate}
    \item $|S_+^{*(0)}(\vv)|, |S_+^{(0)}(\vv)| = \Theta\left(\frac{1}{\sqrt{\ln(d)}}\right) d^{c_0}$
    \item In particular, for any $\vv,\vv' \in \calD$, $\left\vert \frac{|S_+^{*(0)}(\vv)|}{|S_+^{*(0)}(\vv')|} - 1 \right\vert, \left\vert \frac{|S_+^{*(0)}(\vv)|}{|S_+^{(0)}(\vv')|} - 1 \right\vert \le O\left( \frac{1}{\ln^5(d)}\right)$
    \item $S_{+,reg}^{(0)} = [m]$
\end{enumerate}
\end{prop}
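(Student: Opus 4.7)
The plan is a standard Gaussian concentration argument. Since $\calD$ is an orthonormal basis, for each fixed $r$ the coordinates $\{\langle \vw_{+,r}^{(0)}, \vv_i\rangle / \sigma_0\}_{i\in[d]}$ are i.i.d.\ $\calN(0,1)$, and across $r\in[m]$ the neurons are mutually independent. Applying the sharp tail $\mathbb{P}(Z\ge t) = (1+o(1))\frac{1}{t\sqrt{2\pi}}e^{-t^2/2}$ at the thresholds $t_\pm := \sqrt{4+2c_0}\sqrt{\ln(d)\pm 1/\ln^5(d)}$ yields per-coordinate exceedance probabilities of the form $\Theta(1/\sqrt{\ln d})\cdot d^{-(2+c_0)}\bigl(1\pm O(1/\ln^5(d))\bigr)$, since $t_\pm^2/2 = (2+c_0)\ln(d) \pm O(1/\ln^5(d))$ and $1/t_\pm = \Theta(1/\sqrt{\ln d})$.

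For Item 1, this immediately gives $\mathbb{E}[|S_+^{(0)}(\vv)|] = m\,\mathbb{P}(Z\ge t_-) = \Theta(d^{c_0}/\sqrt{\ln d})$; for $\mathbb{E}[|S_+^{*(0)}(\vv)|]$ the additional orthogonality constraint multiplies by $(1-\mathbb{P}(Z\ge t_-))^{d-1} = 1 - O(d^{-1-c_0}/\sqrt{\ln d}) = 1 - o(1/\ln^5(d))$, which is negligible and leaves the expectation at the same $\Theta(d^{c_0}/\sqrt{\ln d})$ order. A multiplicative Chernoff bound with relative error $\epsilon = 1/\ln^6(d)$ yields per-$\vv$ failure probability $\exp(-\Omega(d^{c_0}/\polyln(d)))$, super-polynomially small, so a union bound over the $d$ choices of $\vv$ and both sets keeps the total failure well below $d^{-2}$. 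Item 2 on the ratios then follows by combining two such $(1\pm 1/\ln^6(d))$ concentrations with expectations that agree up to $1 + O(1/\ln^5(d))$.

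For Item 3, I would handle the two regularity conditions separately. Condition (a) is a direct Gaussian tail plus union bound: $\mathbb{P}(\langle\vw_{+,r}^{(0)},\vv\rangle > \sigma_0\sqrt{10}\sqrt{\ln(d)}) \le O(d^{-5}/\sqrt{\ln d})$, so summing over the $md = d^{3+2c_0}$ pairs $(r,\vv)$ (enlarging the constant inside the square root slightly to absorb $c_0$ if required) gives failure $o(d^{-2})$. For condition (b), since the $d$ indicators $\{\mathbbm{1}\{\langle\vw_{+,r}^{(0)},\vv_i\rangle \ge \sigma_0 t_-\}\}_{i\in[d]}$ are independent Bernoulli$(p)$ with $p=\Theta(d^{-(2+c_0)}/\sqrt{\ln d})$, a factorial-moment bound gives $\mathbb{P}(|\calU_{+,r}^{(0)}|\ge k) \le \binom{d}{k}p^k \le (dp)^k/k!$, where $dp = \Theta(d^{-1-c_0}/\sqrt{\ln d})$. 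Choosing a constant $k$ with $m(dp)^k \le d^{-3}$ (concretely $k \approx (4+2c_0)/(1+c_0)$, so $k\le 5$) and union-bounding over $r\in[m]$ forces $|\calU_{+,r}^{(0)}|< k$ for every $r$ with probability $\ge 1 - o(d^{-2})$.

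The main obstacle is purely bookkeeping: the $\pm 1/\ln^5(d)$ threshold shifts have to be carried through every tail estimate, and the Chernoff relative deviation must be set strictly tighter than $1/\ln^5(d)$ while still keeping the total failure under the target $d^{-2}$. This is comfortable because the expected cardinalities are polynomially large, namely $\Theta(d^{c_0}/\sqrt{\ln d})$: the Chernoff exponent $\epsilon^2\mu$ at $\epsilon = 1/\ln^6(d)$ still beats $\ln(d)$ by a polynomial factor, so the polylogarithmic slacks are absorbed effortlessly.
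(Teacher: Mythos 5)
Your overall scaffolding matches the paper's: Gaussian tail bounds for the expected set sizes, Chernoff concentration, union bounds over $\vv$ and over $r$, and a small factorial-moment argument for $|\calU_{+,r}^{(0)}| = O(1)$. Items 1 and 3 are essentially the paper's argument and are fine (the paper takes $k=4$ in the $\calU$ bound; your $k\le 5$ is slightly wasteful but works).

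However, Item 2 has a genuine gap. You justify the claim that the two per-coordinate tail probabilities at $t_\pm = \sqrt{4+2c_0}\sqrt{\ln d \pm 1/\ln^5(d)}$ agree to within a relative error $O(1/\ln^5(d))$ by invoking the asymptotic $\mathbb{P}(Z\ge t) = (1+o(1))\frac{1}{t\sqrt{2\pi}}e^{-t^2/2}$. But that $o(1)$ term is only $O(1/t^2) = O(1/\ln d)$, which is far larger than the $1/\ln^5(d)$ precision you need, so the ratio $\mathbb{P}(Z\ge t_+)/\mathbb{P}(Z\ge t_-)$ is not pinned down to $1\pm O(1/\ln^5(d))$ by this form of the tail bound alone. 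The fact that the $(1+o(1))$ factors at $t_+$ and $t_-$ themselves differ by $\ll 1/\ln^5(d)$ is true, but it requires either a second-order asymptotic expansion of the Mills ratio with controlled remainder, or a direct argument. The paper's proof sidesteps the issue entirely by bounding the \emph{difference} of tail probabilities, not their ratio: it writes
\begin{equation}
\mathbb{P}(Z\ge t_-) - \mathbb{P}(Z\ge t_+) = \frac{1}{\sqrt{2\pi}}\int_{t_-}^{t_+} e^{-\epsilon^2/2}\,d\epsilon \le \frac{1}{\sqrt{2\pi}}(t_+ - t_-)\,e^{-t_-^2/2},
\end{equation}
uses $t_+ - t_- = \Theta(1/\ln^{5.5}(d))$ and $e^{-t_-^2/2} = d^{-2-c_0}(1+o(1))$ to get a difference of order $d^{-2-c_0}/\ln^{5.5}(d)$, and then divides by the $\Theta(d^{-2-c_0}/\sqrt{\ln d})$ magnitude of the individual probabilities to obtain the $O(1/\ln^5(d))$ relative gap. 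You would need to insert this mean-value/integral step (or a sharper two-term Mills-ratio expansion) to make the Item 2 claim rigorous; as written, the argument does not follow from the quoted tail form.
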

\begin{proof}
Recall the tail bound of $g \sim \calN(0, 1)$ for every $\epsilon > 0$:
\begin{equation}
\begin{aligned}
\frac{1}{2}\frac{1}{\sqrt{2\pi}} \frac{\epsilon}{\epsilon^2 + 1} e^{-\epsilon^2/2} \le \mathbb{P}\left[ g \ge \epsilon \right] \le \frac{1}{2} \frac{1}{\sqrt{2\pi}} \frac{1}{\epsilon} e^{-\epsilon^2/2}
\end{aligned}
\end{equation}
First note that for any $r\in[m]$, $\{\langle \vw_{+,r}^{(0)}, \vv \rangle\}_{\vv\in\calV}$ is a sequence of iid random variables with distribution $\calN(0, \sigma_0^2)$. 

The proof of the first point proceeds in two steps.
\begin{enumerate}
    \item The following properties hold at $t=0$:
    \begin{equation}
    \begin{aligned}
        p_1 \coloneqq & \mathbb{P}\left[ \langle \vw_{+,r}^{(0)}, \vv \rangle \ge \sigma_0 \sqrt{4 + 2c_0} \sqrt{\ln(d) + \frac{1}{\ln^5(d)}} \right] \\
        \in & \frac{1}{\sqrt{8\pi}} d^{-2-c_0} e^{(-2-c_0)/\ln^5(d)}\left[ \frac{\sqrt{(4+2c_0)\left(\ln(d) + \frac{1}{\ln^5(d)}\right)}}{(4+2c_0)\left(\ln(d) + \frac{1}{\ln^5(d)}\right) + 1}, \frac{1}{\sqrt{(4+2c_0)\left(\ln(d) + \frac{1}{\ln^5(d)}\right)}}\right] \\
        = &\Theta\left(\frac{1}{\sqrt{\ln(d)}}\right)d^{-2-c_0}
    \end{aligned}
    \end{equation}
    and 
    \begin{equation}
    \begin{aligned}
        p_2 \coloneqq & \mathbb{P}\left[ \langle \vw_{+,r}^{(0)}, \vv \rangle \ge \sigma_0 \sqrt{4 + 2c_0} \sqrt{\ln(d) - \frac{1}{\ln^5(d)}} \right] \\
        \in & \frac{1}{\sqrt{8\pi}} d^{-2-c_0} e^{-(-2-c_0)/\ln^5(d)}\left[ \frac{\sqrt{(4+2c_0)\left(\ln(d) - \frac{1}{\ln^5(d)}\right)}}{(4+2c_0)\left(\ln(d) -  \frac{1}{\ln^5(d)}\right) + 1}, \frac{1}{\sqrt{(4+2c_0)\left(\ln(d) -  \frac{1}{\ln^5(d)}\right)}}\right] \\
        = & \Theta\left(\frac{1}{\sqrt{\ln(d)}}\right)d^{-2-c_0}
    \end{aligned}
    \end{equation}
    Therefore, for any $r\in[m]$, the random event described in $S_{+}^{*(0)}$ holds with probability
    \begin{equation}
        p_1 \times (1-p_2)^{d-1} = \Theta\left(\frac{1}{\sqrt{\ln(d)}}\right)d^{-2-c_0} \times \left( 1 -  \Theta\left(\frac{1}{\sqrt{\ln(d)}}\right)d^{-2-c_0}\right)^{d-1} =  \Theta\left(\frac{1}{\sqrt{\ln(d)}}\right)d^{-2-c_0}
    \end{equation}
    The last equality holds because defining $f(d) = d^{-2-c_0}$ and $d$ being sufficiently large,
    \begin{equation}
        g(d) \coloneqq |(d-1)\ln(1-f(d))| \le (d-1) \times (f(d) + O(f(d)^2)) \le O(d^{-1})
    \end{equation}
    which means 
    \begin{equation}
        (1-f(d))^{d-1} = e^{-g(d)} \in (1 - O(d^{-1}) , 1)
    \end{equation}
    
    \item Given $\vv\in\calV$, $|S_+^{*(0)}(\vv)|$ is a binomial random variable, with each Bernoulli trial (ranging over $r\in[m]$) having success probability $p_1 (1-p_2)^{d-1}$. Therefore, $\mathbb{E}\left[ |S_+^{*(0)}(\vv)| \right] = m p_1 (1-p_2)^{d-1} = \Theta\left(\frac{1}{\sqrt{\ln(d)}}\right) d^{c_0}$. 
    
    Now recall the Chernoff bound of binomial random variables. Let $\{X_n\}_{n=1}^m$ be an iid sequence of Bernoulli random variable with success rate $p$, and $S_n = \sum_{n=1}^m X_n$. Then for any $\delta \in (0,1)$,
    \begin{equation}
    \begin{aligned}
        & \mathbb{P}[S_n \ge (1+\delta) mp] \le \exp\left(- \frac{\delta^2 mp}{3} \right) \\
        & \mathbb{P}[S_n \le (1-\delta) mp] \le \exp\left(- \frac{\delta^2 mp}{2} \right)
    \end{aligned}
    \end{equation}
    It follows that, for each $\vv\in\calV$, $|S_+^{*(0)}(\vv)| = \Theta\left(\frac{1}{\sqrt{\ln(d)}}\right) d^{c_0}$ with probability at least $1- \exp(-\Omega(\ln^{-1/2}(d)) d^{c_0})$. Taking union bound over all possible  $\vv\in\calD$, the random event still holds with probability at least $1- \exp(-\Omega(\ln^{-1/2}(d)) d^{c_0} + \calO(\ln(d))) \ge 1- \exp(-\Omega(d^{0.5 c_0})) $ (in sufficiently high dimension).
    
\end{enumerate}

The proof for $S_+^{(0)}(\vv)$ proceeds in virtually the same way, so we omit the calculations here.

To show the second point, in particular $ \left\vert \frac{|S_+^{*(0)}(\vv)|}{|S_+^{(0)}(\vv')|} - 1 \right\vert \le O\left( \frac{1}{\ln^5(d)}\right)$, we need to be a bit more careful in our bounds of the relevant sets. In particular, we need to directly use the CDF of gaussian random variables:
\begin{equation}
\begin{aligned}
& \Bigg\vert \mathbb{P}\left[ \langle \vw_{+,r}^{(0)}, \vv \rangle \ge \sigma_0 \sqrt{4 + 2c_0} \sqrt{\ln(d) + \frac{1}{\ln^5(d)}} \right](1 \pm O(d^{-1})) \\
& - \mathbb{P}\left[ \langle \vw_{+,r}^{(0)}, \vv' \rangle \ge \sigma_0 \sqrt{4 + c_0} \sqrt{\ln(d) - \frac{1}{\ln^5(d)}} \right] \Bigg\vert \\
\le  & \frac{1}{2\sqrt{2\pi}} \int^{\sqrt{4 + 2c_0} \sqrt{\ln(d) + \frac{1}{\ln^5(d)}}}_{\sqrt{4 + 2c_0} \sqrt{\ln(d) - \frac{1}{\ln^5(d)}}} e^{-\epsilon^2/2} d\epsilon + O\left(\frac{1}{d^{3+c_0}\sqrt{\ln(d)}}\right)\\
\le & \frac{1}{2\sqrt{2\pi}} d^{-2-c_0}e^{(2+c_0)/\ln^5(d)} \sqrt{4 + 2c_0}\left( \sqrt{\ln(d) + \frac{1}{\ln^5(d)}} - \sqrt{\ln(d) - \frac{1}{\ln^5(d)}}\right) + O\left(\frac{1}{d^{3+c_0}\sqrt{\ln(d)}}\right)\\
= & \frac{1}{2\sqrt{2\pi}} d^{-2-c_0}e^{(2+c_0)/\ln^5(d)} \sqrt{4 + 2c_0} \frac{\frac{2}{\ln^5(d)}}{\sqrt{\ln(d) + \frac{1}{\ln^5(d)}} + \sqrt{\ln(d) - \frac{1}{\ln^5(d)}}} + O\left(\frac{1}{d^{3+c_0}\sqrt{\ln(d)}}\right)
\end{aligned}
\end{equation}
The expected difference in number between the two sets is just the above expression multiplied by $m = \Theta(d^{2 + 2c_0})$, and with probability at least $1 - \exp(-\Omega(d^{-c_0/4}))$, the difference term satisfies
\begin{equation}
\begin{aligned}
    & \frac{1}{2\sqrt{2\pi}} (1\pm d^{-c_0/2}) \Theta(d^{c_0}) e^{(2+c_0)/\ln^5(d)} \sqrt{4 + 2c_0} \frac{\frac{2}{\ln^5(d)}}{\sqrt{\ln(d) + \frac{1}{\ln^5(d)}} + \sqrt{\ln(d) - \frac{1}{\ln^5(d)}}} \pm  O\left(\frac{d^{2+2c_0}}{d^{3+c_0}\sqrt{\ln(d)}}\right) \\
    \in & \Theta\left(\frac{1}{\sqrt{\ln(d)}}\right) d^{c_0} \times \frac{1}{\ln^5(d)}
\end{aligned}
\end{equation}
By further noting from before that $|S_+^{(0)}(\vv)| = \Theta\left(\frac{1}{\sqrt{\ln(d)}}\right) d^{c_0}$, $ \left\vert \frac{|S_+^{*(0)}(\vv)|}{|S_+^{(0)}(\vv')|} - 1 \right\vert \le O\left( \frac{1}{\ln^5(d)}\right)$ follows. The proof of $\left\vert \frac{|S_+^{*(0)}(\vv)|}{|S_+^{*(0)}(\vv')|} - 1 \right\vert\le O\left( \frac{1}{\ln^5(d)}\right)$ follows a very similar argument, so we omit the calculations here.

Now, as for the set $S_{reg}^{(0)}$, we know for any $r\in[m]$ and $\vv_i\in\calD$, 
\begin{equation}
    \mathbb{P}\left[ \langle \vw_{+,r}^{(0)}, \vv_i \rangle \ge \sigma_0 \sqrt{10} \sqrt{\ln(d)} \right] \le O\left(\frac{1}{\sqrt{\ln(d)}}\right)d^{-5}.
\end{equation}
Taking the union bound over $r$ and $i$ yields
\begin{equation}
    \mathbb{P}\left[ \exists r \text{ and } i \text{ s.t.} \langle \vw_{+,r}^{(0)}, \vv_i \rangle \ge \sigma_0 \sqrt{10} \sqrt{\ln(d)} \right] \le md O\left(\frac{1}{\sqrt{\ln(d)}}\right)d^{-5} < d^{-2}.
\end{equation}

Finally, to show $\left\vert \calU_{+,r}^{(0)} \right\vert \le O(1)$ holds for every $(+,r)$, we just need to note that for any arbitrary $(+,r)$ neuron, the probability of $\left\vert \calU_{+,r}^{(0)} \right\vert > 4$ is no greater than
\begin{equation}
\begin{aligned}
p_2^{4}\binom{d}{4} \le O\left(\frac{1}{\ln^{2}d}\right) d^{-8-4c_0} \times d^4 \le  O\left(\frac{1}{\ln^{2}d}\right) d^{-4-4c_0}
\end{aligned}
\end{equation}
Taking union bound over all $m \le O\left(d^{2+2c_0}\right)$ neurons yields the desired result.

\end{proof}

\newpage
\section{Coarse-grained SGD Phase I: (Almost) Constant Loss, Neurons Diversify}
\label{section: appendix, phase I coarse}
\begin{definition}
We define $T_0$ to be the first time which there exists some sample $n$ such that 
\begin{equation}
    F_c^{(T_0)}(\mX_n^{(T_0)}) \ge d^{-1}
\end{equation}
Without loss of generality assume $c = +$. Define phase I to be the time $t \in [0, T_0)$.
\end{definition}

\subsection{Main results}
\begin{theorem}[Phase 1 SGD update properties]
\label{prop: phase 1 sgd induction}

The following properties hold with probability at least $ 1-O\left( \frac{mNPk_+t}{\poly(d)}\right) - O(e^{-\Omega(\ln^2(d))})$ for every $t \in [0, T_0)$.
\begin{enumerate}
    \item (On-diagonal common-feature neuron growth) For every $(+,r), (+,r') \in S_{+}^{*(0)}(\vv_+)$,
    \begin{equation}
        \vw_{+,r}^{(t)} - \vw_{+,r}^{(0)} = \vw_{+,r'}^{(t)} - \vw_{+,r'}^{(0)}
    \end{equation}

    Moreover,
    \begin{equation}
    \begin{aligned}
    \Delta \vw_{+,r}^{(t)} 
    = & \eta \Bigg(  \left(\frac{1}{2} \pm \psi_1\right) \sqrt{1 \pm \iota}\left(1 \pm s^{*-1/3}\right) \pm O\left(\frac{1}{\ln^{10}(d)}\right)\Bigg) \frac{s^*}{2P} \vv_{+}  + \Delta \vzeta^{(t)}_{+,r}
    \end{aligned}
    \end{equation}
    where $\Delta \vzeta^{(t)}_{+,r} \sim \calN(\vzero, \sigma_{\Delta \zeta_{+,r}}^{(t)2} \mI)$,  $\sigma_{\Delta\zeta_{+,r}}^{(t)} = \eta \sigma_{\zeta}\left(  \left(\frac{1}{2} \pm \psi_1\right) \sqrt{1 \pm s^{*-1/3}} \right) \frac{\sqrt{s^*}}{P\sqrt{2N}}$, and $\vert \psi_1 \vert \le O(d^{-1})$.

    Furthermore, every $(+,r) \in S_{+}^{*(0)}(\vv_+)$ activates on $\vv_+$-dominated patches at time $t$.
    
    \item (On-diagonal finegrained-feature neuron growth) For every possible choice of $c$ and every $(+,r), (+,r') \in S_{+}^{*(0)}(\vv_{+,c})$, 
    \begin{equation}
        \vw_{+,r}^{(t)} - \vw_{+,r}^{(0)} = \vw_{+,r'}^{(t)} - \vw_{+,r'}^{(0)}
    \end{equation}

    Moreover,
    \begin{equation}
    \begin{aligned}
    \Delta \vw_{+,r}^{(t)} 
    = & \eta \Bigg(  \left(\frac{1}{2} \pm \psi_1\right) \sqrt{1 \pm \iota}\left(1 \pm s^{*-1/3}\right) \pm O\left(\frac{1}{\ln^{10}(d)}\right)\Bigg)  \frac{s^*}{2 k_+ P} \vv_{+,c}  + \Delta\vzeta^{(t)}_{+,r}
    \end{aligned}
    \end{equation}
    where $\vzeta^{(t)}_{+,r} \sim \calN(\vzero, \sigma_{\Delta\zeta_{+,r}}^{(t)2} \mI)$, and $\sigma_{\Delta\zeta_{+,r}}^{(t)} = \eta\sigma_{\zeta}\left(  \left(\frac{1}{2} \pm \psi_1\right) \sqrt{1 \pm s^{*-1/3}} \right) \frac{\sqrt{s^*}}{P\sqrt{2Nk_+}}$.

    Furthermore, every $(+,r) \in S_{+}^{*(0)}(\vv_{+,c})$ activates on $\vv_+$-dominated patches at time $t$.
    
    \item The above results also hold with the ``$+$'' and ``$-$'' signs flipped.
\end{enumerate}
\end{theorem}
\begin{proof} 

The SGD update rule produces the following update:
\begin{align}
     \vw_{+,r}^{(t+1)}
    = & \vw_{+,r}^{(t)} + \eta \frac{1}{NP} \times\\
    & \sum_{n=1}^N \Bigg( 
    \mathbbm{1}\{y_n = +\}[1-\text{logit}_+^{(t)}(\mX_n^{(t)})]\sum_{p\in[P]} \sigma'(\langle \vw_{+,r}^{(t)}, \vx_{n,p}^{(t)} \rangle + b_{+,r}^{(t)}) \vx_{n,p}^{(t)} \label{expression: common feat, on-diag}\\
    & + \mathbbm{1}\{y_n = -\} [-\text{logit}_+^{(t)}(\mX_n^{(t)}) ]\sum_{p\in[P]} \sigma'(\langle \vw_{+,r}^{(t)}, \vx_{n,p}^{(t)}  \rangle + b^{(t)}_{+,r})  \vx_{n,p}^{(t)} \Bigg) \label{expression: common feat, off-diag}
\end{align}
In particular,
\begin{equation}
\begin{aligned}
\ref{expression: common feat, on-diag} 
= & \sum_{n=1}^N \mathbbm{1} \{ y_n = +\}\left(\frac{1}{2} \pm \psi_1\right) \times \\
& \Bigg\{\mathbbm{1}\{|\calP(\mX_n^{(t)}; \vv_{+})|>0\} \Bigg[\sum_{p\in\calP(\mX_n^{(t)}; \vv_{+})} \sigma'(\langle \vw_{+,r}^{(t)}, \alpha_{n,p}^{(t)} \vv_{+} + \vzeta^{(t)}_{n,p} \rangle + b_{+,r}^{(t)}) \left(\alpha_{n,p}^{(t)} \vv_{+} + \vzeta_{n,p}^{(t)} \right) \\
& + \sum_{p\notin\calP(\mX_n^{(t)}; \vv_{+})} \sigma'(\langle \vw_{+,r}^{(t)}, \vx_{n,p}^{(t)} \rangle + b_{+,r}^{(t)})  \vx_{n,p}^{(t)} \Bigg] \\
& + \mathbbm{1}\{|\calP(\mX_n^{(t)}; \vv_{+})|=0\} \sum_{p\in[P]} \sigma'(\langle \vw_{+,r}^{(t)}, \vx_{n,p}^{(t)} \rangle + b_{+,r}^{(t)}) \vx_{n,p}^{(t)} \Bigg\} \\
= & \sum_{n=1}^N \mathbbm{1} \{ y_n = +\}\left(\frac{1}{2} \pm \psi_1\right) \times \\
& \Bigg\{\mathbbm{1}\{|\calP(\mX_n^{(t)}; \vv_{+})|>0\} \Bigg[\sum_{p\in\calP(\mX_n^{(t)}; \vv_{+})} \mathbbm{1}\left\{ \langle \vw_{+,r}^{(t)}, \alpha_{n,p}^{(t)} \vv_{+} + \vzeta_{n,p}^{(t)} \rangle \ge b_{+,r}^{(t)} \right\}  \left(\alpha_{n,p}^{(t)}\vv_{+} + \vzeta_{n,p}^{(t)}\right) \\
& + \sum_{p\notin\calP(\mX_n^{(t)}; \vv_{+})} \mathbbm{1}\left\{\langle \vw_{+,r}^{(t)}, \vx_{n,p}^{(t)} \rangle \ge  b_{+,r}^{(t)}\right\} \vx_{n,p}^{(t)} \Bigg] \\
& + \mathbbm{1}\{|\calP(\mX_n^{(t)}; \vv_{+})|=0\} \sum_{p\in[P]} \mathbbm{1}\left\{\langle \vw_{+,r}^{(t)}, \vx_{n,p}^{(t)} \rangle \ge  b_{+,r}^{(t)}\right\} \vx_{n,p}^{(t)} \Bigg\}
\end{aligned}
\end{equation}

The rest of the proof proceeds by induction (in Phase 1).

First, recall that we set $b_{c,r}^{(0)} = - \sqrt{4 + 2c_0}\sqrt{\ln(d)}$, and $\Delta b_{c,r}^{(t)} = -\frac{\| \Delta \vw_{c,r}^{(t)} \|_2}{\ln^5(d)}$ for all $t$ in phase 1, and for any $+$-class sample $\mX_n$ with $p\in\calP(\mX_n^{(t)}; \vv_{+})$, $\alpha_{n,p}^{(t)} \in \sqrt{1\pm \iota}$ by our data assumption.

\textcolor{blue}{\textbf{Base case $t = 0$.}}

\textcolor{brown}{\textit{1. (On-diagonal common-feature neuron growth)}}

The base case for the neuron expression of point 1. is trivially true.

We show that the neurons $(+,r) \in S_{+}^{*(0)}(\vv_+)$ only activate on $\vv_+$-dominated patches at time $t=0$.

With probability at least $1-O\left(\frac{mNP}{\poly(d)}\right)$, by Lemma \ref{lemma: independent gaussian vector inner product concentration}, we have for all possible choices of $r,n,p$:
\begin{equation}
    \left\vert \langle \vw_{+,r}^{(0)}, \vzeta_{n,p}^{(0)} \rangle \right\vert \le O(\sigma_0\sigma_{\zeta} \sqrt{d\ln(d)}) \le O\left(\frac{\sigma_0}{\ln^{9}(d)}\right)
\end{equation}
It follows that
\begin{equation}
\begin{aligned}
    \langle \vw_{+,r}^{(0)}, \alpha_{n,p}^{(0)} \vv_{+} + \vzeta_{n,p}^{(0)} \rangle 
    = & \sigma_0\left\{\sqrt{1 \pm \iota}\times \left(\sqrt{4+2 c_0} \sqrt{\ln(d) + 1/\ln^5(d)}, \sqrt{10}\sqrt{\ln(d)} \right) \pm \frac{1}{\ln^{9}(d)} \right\} \\
    = & \sigma_0\left\{\left(\sqrt{1 - \iota}\sqrt{4 + 2c_0}\sqrt{\ln(d) + 1/\ln^5(d)}, \sqrt{1 + \iota}\sqrt{10}\sqrt{\ln(d)} \right) \pm \frac{1}{\ln^{9}(d)} \right\} 
\end{aligned}
\end{equation}

Employing the basic identity $a - b = \frac{a^2 - b^2}{a + b}$, we have the lower bound
\begin{equation}
\begin{aligned}
    &\sigma_0^{-1}\left(\langle \vw_{+,r}^{(0)}, \alpha_{n,p}^{(0)} \vv_{+} + \vzeta_{n,p}^{(0)} \rangle + b_{+,r}^{(0)} \right) \\
    & \ge \sqrt{(1 - \iota)(4 + 2c_0)(\ln(d) + 1/\ln^5(d))} -  \sqrt{(4 + 2c_0)\ln(d)} - O\left(\frac{1}{\ln^{9}(d)}\right) \\
    & = \frac{(1 - \iota)(4 + 2c_0)(\ln(d) + 1/\ln^5(d)) - (4 + 2c_0)\ln(d)}{\sqrt{(1 - \iota)(4 + 2c_0)(\ln(d) + 1/\ln^5(d))} +  \sqrt{(4 + 2c_0)\ln(d)}} - O\left(\frac{1}{\ln^{9}(d)}\right) \\
    & = \frac{(4+2c_0)(-\iota\ln(d) + (1-\iota)/\ln^5(d))}{\sqrt{(1 - \iota)(4 + 2c_0)(\ln(d) + 1/\ln^5(d))} +  \sqrt{(4 + 2c_0)\ln(d)}} - O\left(\frac{1}{\ln^{9}(d)}\right) \\
    & > 0
\end{aligned}
\end{equation}
The last inequality holds since $\iota \le \frac{1}{\polyln(d)}$ and $d$ is sufficiently large such that $\frac{1}{\ln^{9}(d)}$ does not drive the positive term down past $0$.

Therefore, the neurons in $S^{*(0)}_+(\vv_+)$ indeed activate on the $\vv_+$-dominated patches at $t=0$.

The rest of the patches $\vx_{n,p}^{(0)}$ is either a feature patch (not dominated by $\vv_+$) or a noise patch. 
By definition, $(+,r)\in S^{*(0)}_+(\vv_+)\implies (+,r)\in S^{(0)}_+(\vv_+)$. Therefore, by Theorem \ref{thm: sgd, universal nonact properties}, with probability at least $1 - O\left( \frac{mk_+NP}{\poly(d)} \right)$, at time $t=0$, the $(+,r)\in S^{*(0)}_+(\vv_+)$ neurons we are considering cannot activate on any feature patch dominated by $\vv\perp\vv_+$, nor on any noise patches.

It follows that the expression \ref{expression: common feat, on-diag} at time $t = 0$ is as follows:
\begin{equation}
\begin{aligned}
\ref{expression: common feat, on-diag} 
= & \sum_{n=1}^N \mathbbm{1} \{ y_n = +\}\left(\frac{1}{2} \pm \psi_1\right) \times \\
& \Bigg\{\mathbbm{1}\{|\calP(\mX_n^{(0)}; \vv_{+})|>0\} \Bigg[\sum_{p\in\calP(\mX_n^{(0)}; \vv_{+})}\left(\sqrt{1 \pm \iota} \vv_{+} + \vzeta_{n,p}^{(0)}\right) + \sum_{p\notin\calP(\mX_n^{(0)}; \vv_{+})} 0 \Bigg] \\
& + \mathbbm{1}\{|\calP(\mX_n^{(0)}; \vv_{+})|=0\} \sum_{p\in[P]} 0 \Bigg\} \\
= & \left(\frac{1}{2} \pm \psi_1\right) \sum_{n=1}^N \mathbbm{1}\{ y_n = +, |\calP(\mX_n^{(0)}; \vv_{+})| > 0\} \sum_{p\in\calP(\mX_n^{(0)}; \vv_{+})}\left(\sqrt{1 \pm \iota}\vv_{+} + \vzeta_{n,p}^{(0)}\right) \\
= & \left(\frac{1}{2} \pm \psi_1\right) \times \\
& \left\vert \left\{(n,p)\in[N]\times[P]: y_n = +, |\calP(\mX_n^{(0)}; \vv_{+})| > 0, p \in \calP(\mX_n^{(0)}; \vv_{+}) \right\} \right\vert \left(\sqrt{1 \pm \iota} \vv_{+} \right) \\
& +  \sum_{n=1}^N \sum_{p\in\calP(\mX_n^{(0)}; \vv_{+})}\{ y_n = +\}\left(\frac{1}{2} \pm \psi_1\right) \vzeta_{n,p}^{(0)}
\end{aligned}
\end{equation}

On average,
\begin{equation}
\begin{aligned}
    & \mathbb{E}\left[ \left\vert \left\{(n,p)\in[N]\times[P]: y_n = +, |\calP(\mX_n^{(0)}; \vv_{+})| > 0, p \in \calP(\mX_n^{(0)}; \vv_{+}) \right\} \right\vert \right] \\
    = & \frac{s^*}{P} \times P \times \frac{N}{2} = \frac{s^* N}{2}
\end{aligned}
\end{equation}
Furthermore, with our parameter choices, and by concentration of binomial random variables, with probability at least $1 - e^{-\Omega(\polyln(d))}$,
\begin{equation}
     \left\vert \left\{(n,p)\in[N]\times[P]: y_n = +, |\calP(\mX_n^{(0)}; \vv_{+})| > 0, p \in \calP(\mX_n^{(0)}; \vv_{+}) \right\} \right\vert =  \frac{s^* N}{2} \left(1 \pm s^{*-1/3}\right)
\end{equation}
must be true. 

It follows that
\begin{equation}
\begin{aligned}
    \ref{expression: common feat, on-diag} 
    = & \left(\frac{1}{2} \pm \psi_1\right) \times \frac{s^* N}{2} \left(1 \pm s^{*-1/2}\right) \times  \left(\sqrt{1 \pm \iota} \vv_{+} \right)  \\
    & + \sum_{n=1}^N \sum_{p\in\calP(\mX_n^{(0)}; \vv_{+})}\{ y_n = +\}\left(\frac{1}{2} \pm \psi_1\right) \vzeta_{n,p}^{(0)}
\end{aligned}
\end{equation}

The other component expression \ref{expression: common feat, off-diag} is zero with probability at least $1 - O\left( \frac{mk_+NP}{\poly(d)} \right)$ by Theorem \ref{thm: sgd, universal nonact properties}.

By noting that
\begin{equation}
\begin{aligned}
    \text{Var}\left(\Delta \vzeta_{+,r}^{(0)}\right) 
    = & \text{Var}\left(\frac{\eta}{NP}\sum_{n=1}^N \sum_{p\in\calP(\mX_n^{(0)}; \vv_{+})}\{ y_n = +\}\left(\frac{1}{2} \pm \psi_1\right) \vzeta_{n,p}^{(0)}\right) \\
    = & \eta^2 \left(\frac{1}{2} \pm \psi_1\right)^2 \frac{s^* }{2NP^2} \left(1 \pm s^{*-1/3}\right) \sigma_{\zeta}^2,
\end{aligned}
\end{equation}
and
\begin{equation}
    \E\left[\Delta \vzeta_{+,r}^{(0)}\right]
    = \E\left[\frac{\eta}{NP}\sum_{n=1}^N \sum_{p\in\calP(\mX_n^{(0)}; \vv_{+})}\{ y_n = +\}\left(\frac{1}{2} \pm \psi_1\right) \vzeta_{n,p}^{(0)}\right] = \vzero,
\end{equation}
we finish the proof of the base case for point 1.

\textcolor{brown}{\textit{2. (On-diagonal finegrained-feature neuron growth)}}

The proof of the base case of point 2. is virtually identical to point 1, so we omit the computations here.

\textcolor{blue}{\textbf{Inductive step}}: We condition on the high probability events of the induction hypothesis for $t\in [0, T]$ (with $T < T_0$ of course), and prove the statements for $t = T+1$.

\textcolor{brown}{\textit{1. (On-diagonal common-feature neuron growth)}}

By the induction hypothesis, up to time $t=T$, with probability at least $1 - O\left( \frac{mk_+NPT}{\poly(d)} \right)$, for all $(+,r)\in S_+^{*(T)}(\vv_+)$,
\begin{equation}
\begin{aligned}
\Delta \vw_{+,r}^{(t)} 
= & \eta \Bigg(  \left(\frac{1}{2} \pm \psi_1\right) \sqrt{1 \pm \iota}\left(1 \pm s^{*-1/3}\right) \Bigg) \frac{s^*}{2P} \vv_{+}  + \Delta \vzeta^{(t)}_{+,r}
\end{aligned}
\end{equation}
where $\Delta \vzeta^{(t)}_{+,r} \sim \calN(\vzero, \sigma_{\Delta \zeta}^{(t)2} \mI)$,  $\sigma_{\Delta\zeta}^{(t)} = \eta \sigma_{\zeta}\left(  \left(\frac{1}{2} \pm \psi_1\right) \sqrt{1 \pm s^{*-1/3}} \right) \frac{\sqrt{s^*}}{P\sqrt{2N}}$. 

\textbf{Expression of $\vw_{+,r}^{(T+1)}$}.

Conditioning on the high-probability event of the induction hypothesis, at time $t = T+1$,
\begin{equation}
\begin{aligned}
\vw_{+,r}^{(T+1)} 
= & \vw_{+,r}^{(0)} + \sum_{\tau = 0}^T \Delta \vw_{+,r}^{(\tau)}  \\
= & \eta T \Bigg(  \left(\frac{1}{2} \pm \psi_1\right) \sqrt{1 \pm \iota}\left(1 \pm s^{*-1/3}\right) \Bigg) \frac{s^*}{2P} \vv_{+}  +  \vzeta^{(t)}_{+,r}
\end{aligned}
\end{equation}
where $ \vzeta^{(t)}_{+,r} \sim \calN(\vzero, \sigma_{\zeta}^{(t)2} \mI)$,  $\sigma_{\zeta}^{(t)} = \eta \sigma_{\zeta}\sqrt{T}\left(  \left(\frac{1}{2} \pm \psi_1\right) \sqrt{1 \pm s^{*-1/3}} \right) \frac{\sqrt{s^*}}{P\sqrt{2N}}$. 

Let us compute $\Delta \vw_{+,r}^{(T+1)} $. 

We first want to show that $\vw_{+,r}^{(T+1)}$ activates on $\vv_+$-dominated patches $\vx_{n,p}^{(T+1)} = \sqrt{1\pm\iota}\vv_+ + \vzeta_{n,p}^{(T+1)}$. We need to show that the following expression is above 0:
\begin{equation}
\begin{aligned}
    & \langle \vw_{+,r}^{(T+1)}, \vx_{n,p}^{(T+1)} \rangle + b_{+,r}^{(T+1)} \\ 
    = &  \langle \vw_{+,r}^{(0)}, \sqrt{1 \pm \iota}\vv_+ + \vzeta_{n,p}^{(T+1)} \rangle + b_{+,r}^{(0)} \\
    & + \Bigg\langle \eta T \Bigg(  \left(\frac{1}{2} \pm \psi_1\right) \sqrt{1 \pm \iota}\left(1 \pm s^{*-1/3}\right) \pm O\left(\frac{1}{\ln^{10}(d)}\right)\Bigg) \frac{s^*}{2P} \vv_{+}  + \vzeta^{(T+1)}_{+,r}, \sqrt{1\pm\iota}\vv_+ + \vzeta_{n,p}^{(T+1)} \Bigg\rangle \\
    & + \sum_{\tau=0}^{T} \Delta b_{+,r}^{(\tau)}
\end{aligned}
\end{equation}

Let us treat the three terms (on three lines) separately.

First, following virtually the same argument as in the base case, the following lower bound holds with probability at least $1 - O\left( \frac{mNP}{\poly(d)} \right)$ for all $n,p$ and $(+,r) \in S_+^{*(T)}(\vv_+)$:
\begin{equation}
\begin{aligned}
    & \langle \vw_{+,r}^{(0)}, \sqrt{1\pm\iota} \vv_{+} + \vzeta_{n,p}^{(T+1)} \rangle + b_{+,r}^{(0)} \\
    \ge & \sigma_0\left\{\sqrt{(1 - \iota)(4 + 2c_0)(\ln(d) + 1/\ln^5(d))} -  \sqrt{(4 + 2c_0)\ln(d)} - O\left(\frac{1}{\ln^{9}(d)}\right) \right\}\\
    > & 0
\end{aligned}
\end{equation}

Now consider the second term.

We know, with probability at least $1 - e^{-\Omega(d)}$, for all $n$ and $p$,
\begin{equation}
\begin{aligned}
    \left\vert \langle \vzeta_{n,p}^{(T+1)}, \vv_{+} \rangle \right\vert \le O\left(\frac{1}{\ln^{10}(d)} \right),
\end{aligned}
\end{equation}
therefore,
\begin{equation}
\begin{aligned}
    & \left\vert \langle \eta T \Bigg(  \left(\frac{1}{2} \pm \psi_1\right) \sqrt{1 \pm \iota}\left(1 \pm s^{*-1/3}\right) \pm O\left(\frac{1}{\ln^{10}(d)}\right)\Bigg) \frac{s^*}{2P} \vv_{+} , \vzeta_{n,p}^{(T+1)} \rangle \right\vert \\
    \le & \eta T \frac{s^*}{2P}O\left(\frac{1}{\ln^{10}(d)}\right).
\end{aligned}
\end{equation}

Moreover, with probability at least $1-e^{-\Omega(d)}$,
\begin{equation}
\begin{aligned}
    \left\vert \langle \vzeta_{+,r}^{(T+1)}, \vv_+ \rangle \right\vert \le \eta\sqrt{T}\frac{\sqrt{s^*}}{P\sqrt{2N}} \times O\left(\frac{1}{\ln^{10}(d)}\right)
\end{aligned}
\end{equation}
and with probability at least $1 - e^{-\Omega(d)}$,
\begin{equation}
    \left\vert \langle \vzeta_{+,r}^{(T)}, \vzeta_{n,p}^{(T+1)} \rangle \right\vert \le O\left( \sigma_{\zeta} \sigma_{\zeta}^{(T)} d\right) \le O\left(\eta\sqrt{T} \frac{\sqrt{s^*}}{P\sqrt{2N}}\frac{1}{\ln^{20}(d) d} d \right) \le \eta\sqrt{T} \frac{\sqrt{s^*}}{P\sqrt{2N}} \frac{1}{\ln^{19}(d)}
\end{equation}
therefore
\begin{equation}
\begin{aligned}
\langle \eta T \vzeta^{(T+1)}_{+,r}, \sqrt{1\pm\iota}\vv_+ + \vzeta_{n,p}^{(T+1)} \rangle 
\le \eta \sqrt{T} \frac{\sqrt{s^*}}{P\sqrt{2N}} O \left(\frac{1}{\ln^{10}(d)} \right).
\end{aligned}
\end{equation}

It follows that with probability at least $1 - O(e^{-\Omega(d)})$,
\begin{equation}
\begin{aligned}
    &\Bigg\langle \eta T \Bigg(  \left(\frac{1}{2} \pm \psi_1\right) \sqrt{1 \pm \iota}\left(1 \pm s^{*-1/3}\right) \Bigg) \frac{s^*}{2P} \vv_{+}  + \vzeta^{(T+1)}_{+,r}, \sqrt{1\pm\iota}\vv_+ + \vzeta_{n,p}^{(T+1)} \Bigg\rangle \\
    = & \langle \eta T \Bigg(  \left(\frac{1}{2} \pm \psi_1\right) \sqrt{1 \pm \iota}\left(1 \pm s^{*-1/3}\right) \Bigg) \frac{s^*}{2P} \vv_{+} , \sqrt{1\pm\iota}\vv_+ \rangle \\
    & + \langle \eta T \Bigg(  \left(\frac{1}{2} \pm \psi_1\right) \sqrt{1 \pm \iota}\left(1 \pm s^{*-1/3}\right) \Bigg) \frac{s^*}{2P} \vv_{+} , \vzeta_{n,p}^{(T+1)} \rangle \\
    & + \langle \eta \vzeta^{(T+1)}_{+,r}, \sqrt{1\pm\iota}\vv_+ + \vzeta_{n,p}^{(T+1)} \rangle \\
    \ge & \eta T \left(\frac{1}{2} - \psi_1^{(T+1)}\right) (1 - \iota)\left(1 - s^{*-1/3}\right) \frac{s^*}{2P} - \eta \sqrt{T} \frac{\sqrt{s^*}}{P\sqrt{2N}} O \left(\frac{1}{\ln^{10}(d)} \right).
\end{aligned}
\end{equation}

Now we compute the third term. By the induction hypothesis,

\begin{equation}
\begin{aligned}
& \sum_{t=0}^{T} \Delta b_{+,r}^{(t)} \\
= & \sum_{t=0}^{T} \frac{\|\Delta \vw_{+,r}^{(t)}\|_2}{\ln^5(d)} \\
= & \sum_{t=0}^{T}\frac{1}{\ln^5(d)}\left\| \eta   \left(\frac{1}{2} \pm \psi_1\right) \sqrt{1 \pm \iota}\left(1 \pm s^{*-1/3}\right) \frac{s^*}{2P} \vv_{+}  + \Delta\vzeta^{(t)}_{+,r}\right\|_2 \\
\le & \sum_{t=0}^{T} \frac{1}{\ln^5(d)} \eta \left(\frac{1}{2} + \psi_1\right) \sqrt{1 + \iota}\left(1 + s^{*-1/3}\right)  \frac{s^*}{2P} \left\|  \vv_{+}\right\|_2 + \sum_{t=0}^{T} \frac{1}{\ln^5(d)}\left\| \Delta \vzeta^{(t)}_{+,r} \right\|_2 \\ 
= &  \frac{1}{\ln^5(d)} \eta T \left(\frac{1}{2} + \psi_1\right) \sqrt{1 + \iota}\left(1 + s^{*-1/3}\right)  \frac{s^*}{2P} + \sum_{t=0}^{T} \frac{1}{\ln^5(d)}\left\| \Delta \vzeta^{(t)}_{+,r} \right\|_2
\end{aligned}
\end{equation}

With probability at least $1 - O\left(\frac{mT}{\poly(d)}\right)$, for all $t\in[0,T]$ and $r$ in consideration,
\begin{equation}
    \left\| \Delta \vzeta^{(t)}_{+,r} \right\|_2 \le \eta \frac{\sqrt{s^*}}{P\sqrt{2N}} O \left(\frac{1}{\ln^{10}(d)} \right)
\end{equation}

Therefore, 
\begin{equation}
\begin{aligned}
& \sum_{t=0}^{T} \Delta b_{+,r}^{(t)} \\ 
\le &  \frac{1}{\ln^5(d)} \left( \eta T \left(\frac{1}{2} + \psi_1\right) \sqrt{1 + \iota}\left(1 + s^{*-1/3}\right)  \frac{s^*}{2P} + \eta T \frac{\sqrt{s^*}}{P\sqrt{2N}} O \left(\frac{1}{\ln^{10}(d)} \right) \right)
\end{aligned}
\end{equation}

Combining our calculations of the three terms from above, we find the following estimate:
\begin{equation}
\begin{aligned}
    & \langle \vw_{+,r}^{(T+1)}, \vx_{n,p}^{(T+1)} \rangle + b_{+,r}^{(T+1)} \\ 
    > & \; 0 \\
    & + \eta T \left(\frac{1}{2} - \psi_1\right) (1 - \iota)\left(1 - s^{*-1/3}\right) \frac{s^*}{2P} - \eta \sqrt{T} \frac{\sqrt{s^*}}{P\sqrt{2N}} O \left(\frac{1}{\ln^{10}(d)} \right) \\
    & -  \frac{1}{\ln^5(d)} \left( \eta T \left(\frac{1}{2} + \psi_1\right) \sqrt{1 + \iota}\left(1 + s^{*-1/3}\right)  \frac{s^*}{2P} + \eta T\frac{\sqrt{s^*}}{P\sqrt{2N}} O \left(\frac{1}{\ln^{10}(d)} \right) \right) \\
    > & \eta T \left( \left(\frac{1}{2} - \psi_1\right) (1 - \iota)\left(1 - s^{*-1/3}\right) - O \left(\frac{1}{\ln^{4}(d)} \right)  \right)\frac{s^*}{2P} \\
    > & \; 0
\end{aligned}
\end{equation}

On the other hand, by Theorem \ref{thm: sgd, universal nonact properties}, with probability at least $1 - O\left( \frac{mk_+NPT}{\poly(d)} \right)$, none of the $(+,r)\in S_+^{*(T)}(\vv_+)$ can activate on $\vx_{n,p}^{(T+1)}$ that are feature-patches dominated by $\vv\perp\vv_+$ or noise patches. 

Combining the above observations, with probability at least $1 - O\left( \frac{mk_+NP(T+1)}{\poly(d)} \right)$, the update expressions up to time $t=T+1$ can be written as follows:
\begin{equation}
\begin{aligned}
&\Delta \vw_{+,r}^{(t)} = \left(\frac{1}{2} \pm \psi_1\right) \\
& \times \Bigg\{ \left\vert \left\{(n,p)\in[N]\times[P]: y_n = +, |\calP(\mX_n^{(t)}; \vv_{+})| > 0, p \in \calP(\mX_n^{(t)}; \vv_{+}) \right\} \right\vert \left(\sqrt{1 \pm \iota} \vv_{+} \right) \\
& +  \sum_{n=1}^N \sum_{p\in\calP(\mX_n^{(0)}; \vv_{+})}\{ y_n = +\}\left(\frac{1}{2} \pm \psi_1\right) \vzeta_{n,p}^{(t)} \Bigg\}
\end{aligned}
\end{equation}

The rest of the derivations proceeds virtually the same as in the base case; we just need to rely on the concentration of binomial random variables to calculate
\begin{equation}
     \left\vert \left\{(n,p)\in[N]\times[P]: y_n = +, |\calP(\mX_n^{(0)}; \vv_{+})| > 0, p \in \calP(\mX_n^{(0)}; \vv_{+}) \right\} \right\vert =  \frac{s^* N}{2} \left(1 \pm s^{*-1/3}\right)
\end{equation}
which completes the proof of the expression of $\Delta \vw_{+,r}^{(t)}$.

Additionally, to show 
\begin{equation}
    \vw_{+,r}^{(T+1)} - \vw_{+,r}^{(0)} = \vw_{+,r'}^{(T+1)} - \vw_{+,r'}^{(0)}
\end{equation}
we just need to note that, by the above sequence of derivations, for every $(+,r) \in S_{+}^{*(0)}(\vv_+)$, these neurons receive exactly the same update at time $t=T+1$
\begin{equation}
    \sum_{n=1}^N \mathbbm{1} \{ y_n = +\} \mathbbm{1}\{|\calP(\mX_n^{(T+1)}; \vv_{+})|>0\} [1-\text{logit}_+^{(T+1)}(\mX_n^{(T+1)})] 
     \sum_{p\in\calP(\mX_n^{(T+1)}; \vv_{+})} \left(\alpha_{n,p}^{(T+1)} \vv_{+} + \vzeta_{n,p}^{(T+1)} \right). \\
\end{equation}

\textcolor{brown}{\textit{2. (On-diagonal finegrained-feature neuron growth)}}

For point 2, the proof strategy is almost identical, the only difference is that at every iteration, the expected number of patches in which subclass features appear in is
\begin{equation}
\begin{aligned}
     &\left\vert \left\{(n,p)\in[N]\times([P] - \calP(\mX_n^{(T)}); \vv_{+,c}): y_n = +, |\calP(\mX_n^{(T)}; \vv_{+,c})| > 0, p \in \calP(\mX_n^{(T)}; \vv_{+,c}) \right\} \right\vert \\
     = &  \frac{s^* N}{2k_+} \left(1 \pm s^{*-1/3}\right)
\end{aligned}
\end{equation}
which holds with probability at least $1 - e^{-\Omega(\ln^2(d))}$ for the relevant neurons.
\end{proof}

\begin{corollary}
$T_0 < O\left( \left(\eta \frac{s^*}{P} \right)^{-1}\right) \in \poly(d)$.
\end{corollary}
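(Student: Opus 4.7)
The plan is to lower bound $F_+^{(t)}(\mX)$ for a typical normal $+$-class sample $\mX$ using only the positive contributions of neurons in $S_+^{*(0)}(\vv_+)$ acting on common-feature patches, and then to solve for the smallest $t$ at which this lower bound exceeds $d^{-1}$. Because ReLU is nonnegative, discarding all other summands only loses a nonnegative quantity, so the bound remains valid.

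I would argue by contradiction. Suppose $T_0 > T := c_1\,P/(\eta s^*)$ for a sufficiently small absolute constant $c_1$. Then Theorem~\ref{prop: phase 1 sgd induction} applies at every $t \le T$. Fix any $(+,c)$-normal sample $\mX$ present in the batch at iteration $T$ (the balanced-batch assumption guarantees at least one such sample). By point~3 of that theorem, every $(+,r)\in S_+^{*(0)}(\vv_+)$ stays activated on every patch $p\in\calP(\mX;\vv_+)$, and the on-diagonal activation estimate already carried out inside the inductive step gives
\[
\langle\vw_{+,r}^{(T)},\vx_p\rangle + b_{+,r}^{(T)} \;\ge\; \Omega\!\left(\eta T\cdot\tfrac{s^*}{P}\right).
\]
Combining with $|S_+^{*(0)}(\vv_+)| = \Theta(d^{c_0}/\sqrt{\ln d})$ from Proposition~\ref{prop: init geometry, coarse} and $|\calP(\mX;\vv_+)| = \Theta(s^*)$ (a Binomial$(P,s^*/P)$ concentration), this yields
\[
F_+^{(T)}(\mX) \;\ge\; \Omega\!\left(\frac{d^{c_0}\, s^{*2}}{P\sqrt{\ln d}}\cdot \eta T\right).
\]

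Setting this lower bound equal to $d^{-1}$ shows that $F_+^{(T)}(\mX) > d^{-1}$ already holds once $\eta T$ exceeds $P\sqrt{\ln d}/(d^{c_0+1} s^{*2})$, a condition far weaker than $\eta T \gtrsim P/s^*$. Hence at $T = c_1 P/(\eta s^*)$ we must already have $F_+^{(T)}(\mX) > d^{-1}$, contradicting $T < T_0$. Thus $T_0 \le O(P/(\eta s^*)) = O((\eta s^*/P)^{-1})$, and the remaining chain $(\eta s^*/P)^{-1} \ll \eta^{-1} d^2 \in \poly(d)$ is immediate from $P \le d^2$, $s^* \in \polyln(d)$, and $\eta = \Theta(\sigma_0) = \Theta(d^{-C_{\text{init}}})$. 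I do not foresee any substantive obstacle: the corollary is essentially a calibration of constants using estimates already proved in Theorem~\ref{prop: phase 1 sgd induction}; the only point to be careful about is that the theorem's expressions be valid at the time $T$ at which we evaluate, which is automatic from the contradiction setup.
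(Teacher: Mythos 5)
Your proof is correct and is the natural way to fill in the details the paper leaves implicit (the paper's proof is a single line, "Easily follows from Theorem 4.1"). You correctly invoke the on-diagonal activation lower bound from the inductive step of Theorem 4.1 — which gives $\langle\vw_{+,r}^{(T)},\vx_p\rangle + b_{+,r}^{(T)} \ge \Omega(\eta T s^*/P)$ for every $(+,r)\in S_+^{*(0)}(\vv_+)$, valid as long as $T<T_0$ — and multiply by $|S_+^{*(0)}(\vv_+)|=\Theta(d^{c_0}/\sqrt{\ln d})$ and $|\calP(\mX;\vv_+)|=\Theta(s^*)$ to get the network response; the contradiction at $T=c_1 P/(\eta s^*)$ then follows because the resulting $F_+^{(T)}(\mX)$ is polynomially large, far exceeding $d^{-1}$. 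Your calculation in fact yields the tighter estimate $T_0 \le O\bigl(P\sqrt{\ln d}/(\eta d^{c_0+1}s^{*2})\bigr)$, but this only confirms the stated bound is conservative, not wrong. The final chain $(\eta s^*/P)^{-1} \ll \eta^{-1}d^2 \in \poly(d)$ via $P\le d^2$, $s^*\ge\polyln(d)$, and $\eta=\Theta(d^{-C_{\text{init}}})$ is also correct.
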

\begin{proof}
Follows from Theorem \ref{prop: phase 1 sgd induction}.
\end{proof}

\subsection{Lemmas}
\begin{lemma}
\label{lemma: phase 1, loss scale bound}
During the time $t \in [0, T_0)$, for any $\mX_n^{(t)}$,
\begin{equation}
    1 - \logit^{(t)}_+(\mX_n^{(t)}) = \frac{1}{2} \pm O(d^{-1})
\end{equation}
The same holds for $1 - \logit^{(t)}_-(\mX_n^{(t)})$.

Therefore, $\vert \psi_1 \vert \le O(d^{-1})$ for $t \in [0, T_0)$.
\end{lemma}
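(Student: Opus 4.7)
The claim is almost a direct consequence of the definition of $T_0$ and elementary sigmoid estimates, so the plan is short.

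First, I would record the two-sided bound on the logits during phase I. Since $a_{c,r}=1$ is frozen and the activation is ReLU, the pre-softmax outputs satisfy $F_c^{(t)}(\mX_n^{(t)}) \geq 0$ for every $c$, $n$, $t$. By the very definition of $T_0$, for every $t < T_0$, every sample $n$ in the batch, and $c\in\{+,-\}$, we also have $F_c^{(t)}(\mX_n^{(t)}) < d^{-1}$. Consequently the margin $\Delta^{(t)}_n \coloneqq F_-^{(t)}(\mX_n^{(t)}) - F_+^{(t)}(\mX_n^{(t)})$ lies in $[-d^{-1},\,d^{-1}]$ during all of phase I.

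Second, I would rewrite the softmax in binary form:
\begin{equation}
\logit^{(t)}_+(\mX_n^{(t)}) \;=\; \frac{1}{1 + \exp(\Delta^{(t)}_n)}.
\end{equation}
Since $|\Delta^{(t)}_n| \le d^{-1}$, a first-order Taylor expansion of the exponential gives $\exp(\Delta^{(t)}_n) = 1 \pm O(d^{-1})$, so $1 + \exp(\Delta^{(t)}_n) = 2 \pm O(d^{-1})$. Taking the reciprocal and expanding $1/(2+\epsilon) = \tfrac12 - \tfrac{\epsilon}{4} + O(\epsilon^2)$ for $\epsilon = O(d^{-1})$ yields $\logit^{(t)}_+(\mX_n^{(t)}) = \tfrac{1}{2} \pm O(d^{-1})$, and hence
\begin{equation}
1 - \logit^{(t)}_+(\mX_n^{(t)}) \;=\; \tfrac{1}{2} \pm O(d^{-1}).
\end{equation}
The argument is symmetric under swapping $+$ and $-$, which handles the second claim. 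Comparing with the definition $1 - \logit^{(t)}_c(\mX_n^{(t)}) = \tfrac12 \pm \psi_1^{(t)}$ used throughout Theorem~\ref{prop: phase 1 sgd induction}, one then immediately reads off $|\psi_1^{(t)}| \le O(d^{-1})$ for $t \in [0, T_0)$.

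There is essentially no obstacle: all nontrivial work has already been done in defining $T_0$ via the pre-logit bound and in the nonnegativity afforded by ReLU plus $a_{c,r}=1$. The only thing to be slightly careful about is not circularly invoking the phase I induction — the lemma uses only the definition of $T_0$ and the ReLU structure, not the particular form of $\vw_{c,r}^{(t)}$ derived in Theorem~\ref{prop: phase 1 sgd induction}. This is important since the lemma is used to control $\psi_1^{(t)}$ inside that same induction.
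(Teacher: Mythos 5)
Your proof is correct and uses essentially the same two ingredients as the paper's: the definition of $T_0$ (so $0 \le F_c^{(t)} < d^{-1}$ during phase~I, with nonnegativity coming from ReLU and $a_{c,r}=1$), and an elementary first-order estimate on the exponential. The only cosmetic difference is that you reduce to the binary sigmoid form $\logit_+ = 1/(1+\exp(\Delta))$ and Taylor-expand, whereas the paper bounds the numerator and denominator of $\exp(F_-)/(\exp(F_+)+\exp(F_-))$ separately (using $\exp(x)\ge 1+x$ for the lower bound); both routes land in the same place, and your note about non-circularity with Theorem~\ref{prop: phase 1 sgd induction} is correct and worth keeping in mind.
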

\begin{proof}
By definition of $T_0$, for any $t \in [0, T_0]$, we have $F_c^{(t)}(\mX_n^{(t)}) < d^{-1} + O\left(\eta \right)$ for all $n$, therefore, using Taylor approximation,
\begin{equation}
    1 - \logit^{(t)}_+(\mX_n^{(t)}) = \frac{\exp(F_{-}^{(t)}(\mX_n^{(t)}))}{\exp(F_{+}^{(t)}(\mX_n^{(t)})) + \exp(F_{-}^{(t)}(\mX_n^{(t)}))} < \frac{\exp(d^{-1})}{1 + 1} \le \frac{1}{2} + O(d^{-1})
\end{equation}
The lower bound can be proven due to convexity of the exponential:
\begin{equation}
    \frac{\exp(F_{-}^{(t)}(\mX_n^{(t)}))}{\exp(F_{+}^{(t)}(\mX_n^{(t)})) + \exp(F_{-}^{(t)}(\mX_n^{(t)}))} > \frac{1}{2}\exp(-d^{-1}) \ge \frac{1}{2} - \frac{1}{2d}
\end{equation}
\end{proof}

\newpage
\section{Coarse-grained SGD Phase II: Loss Convergence, Large Neuron Movement}
\label{section: appendix, phase II coarse}
Recall that the desired probability events in Phase I happens with probability at least $1 - o(1)$.

In phase II, common-feature neurons start gaining large movement and drive the training loss down to $o(1)$. We show that the desired probability events occur with probability at least $1-o(1)$. 

We study the case of $T_1 \le \poly(d)$, where $T_1$ denotes the time step at the end of training.

\subsection{Main results}

\begin{theorem}
With probability at least $1-O\left( \frac{mk_+NPT_1}{\poly(d)}\right)$, the following events take place:
\begin{enumerate}
    \item There exists time $T^* \in \poly(d)$ such that for any $t \in [T^*, \poly(d)]$, for any $n\in[N]$, the training loss $L(F; \mX_n^{(t)}, y_n) \in o(1)$.
    \item (Easy sample test accuracy is nearly perfect) Given an easy test sample $(\mX_{\text{easy}},y)$, for $y'\in \{+1, -1\} - \{y\}$, for $t \in [T^*, \poly(d)]$,
    \begin{equation}
        \mathbb{P}\left[F_y^{(t)}(\mX_{\text{easy}}) \le F_{y'}^{(t)}(\mX_{\text{easy}})\right] \le o(1).
    \end{equation}
    \item (Hard sample test accuracy is bad) However, for all $t\in[0,\poly(d)]$, given a hard test sample $(\mX_{\text{hard}},y)$, 
    \begin{equation}
        \mathbb{P}\left[F_y^{(t)}(\mX_{\text{hard}}) \le F_{y'}^{(t)}(\mX_{\text{hard}})\right] \ge \Omega(1).
    \end{equation}
\end{enumerate}

\end{theorem}
\begin{proof}
The training loss property follows from Lemma \ref{lemma: phase II, after T1,1} and Lemma \ref{lem: net response, final}. We can set $T^* = T_{1,1}$ or any time beyond it (and upper bounded by $\poly(d)$).

The test accuracy properties follow from Lemma \ref{lemma: coarse, hard sample mistake prob} and Lemma \ref{lemma: coarse, easy sample mistake prob}.

\end{proof}

\subsection{Lemmas}

\begin{lemma}[Phase II, Update Expressions]
\label{lemma: phase II, general}
For any $T_1 \in \poly(d)$, with probability at least $1 - O\left(\frac{mNPk_+t}{\poly(d)} \right)$, during $t \in [T_0, T_1]$, for any $(+,r) \in S_+^{*(0)}(\vv_+)$,
\begin{equation}
\begin{aligned}
& \Delta \vw_{+,r}^{(t)} \\
= & \eta \sum_{n=1}^N \mathbbm{1} \{ y_n = +\}\exp\left\{ - F_+^{(t)}(\mX_n^{(t)}) \right\}  \\
& \times   \frac{\exp(F_-^{(t)}(\mX_n^{(t)}))}{\exp\left(F_-^{(t)}(\mX_n^{(t)})- F_+^{(t)}(\mX_n^{(t)})\right) + 1 } (1\pm s^{*-1/3})\frac{s^*}{NP} \left(\sqrt{1\pm\iota}\vv_{+} + \vzeta_{n,p}^{(t)}\right),
\end{aligned}
\end{equation}
(where $c_n^t$ denotes the subclass index of sample $\mX_n^{(t)}$) and for any $(+,r) \in S_+^{*(0)}(\vv_{+,c})$,
\begin{equation}
\begin{aligned}
& \Delta \vw_{+,r}^{(t)} \\
= & \eta\exp\Bigg\{ - (1\pm s^{*-1/3})\sqrt{1\pm\iota}  \left(1 \pm O\left(\frac{1}{\ln^5(d)}\right)\right) s^*\left(A_{+,r^*}^{*(t)}\left\vert S^{*(0)}_+(\vv_+) \right\vert + A_{+,c,r^*}^{*(t)}\left\vert S^{*(0)}_+(\vv_{+,c}) \right\vert \right) \Bigg\}  \\
& \times  \sum_{n=1}^N \mathbbm{1} \{ y_n = (+,c)\} \frac{\exp(F_-^{(t)}(\mX_n^{(t)}))}{\exp\left(F_-^{(t)}(\mX_n^{(t)})- F_+^{(t)}(\mX_n^{(t)})\right) + 1 } (1\pm s^{*-1/3})\frac{s^*}{NP} \left(\sqrt{1\pm\iota}\vv_{+,c} + \vzeta_{n,p}^{(t)}\right),
\end{aligned}
\end{equation}

In fact, for any $\vv\in\{\vv_+\}\cup\{\vv_{+,c}\}_{c=1}^{k_+}$, every neuron in $S_+^{*(0)}(\vv)$ remain activated (on $\vv$-dominated patches) and receive exactly the same updates at every iteration as shown above.

For simpler exposition, for any $(+,r^*) \in S_+^{*(0)}(\vv_+)$, we write $A_{+,r^*}^{*(t)} \coloneqq \langle \vw_{+,r^*}^{(t)}, \vv_+ \rangle$; similarly for $A_{+,c,r^*}^{*(t)} \coloneqq \langle \vw_{+,r^*}, \vv_{+,c} \rangle$ for neurons $(+,r^*) \in S_+^{*(0)}(\vv_{+,c})$.

Moreover, on ``$+$''-class samples, the neural network response satisfies the estimate for every $(+,r^*) \in S_+^{*(0)}(\vv_+)$:
\begin{equation}
\begin{aligned}
    F_+^{(t)}(\mX_n^{(t)}) 
    = & (1\pm s^{*-1/3})\sqrt{1\pm\iota}  \left(1 \pm O\left(\frac{1}{\ln^5(d)}\right)\right)  \\
    & \times s^*\left(A_{+,r^*}^{*(t)}\left\vert S^{*(0)}_+(\vv_+) \right\vert + A_{+,c_n^t,r^*}^{*(t)}\left\vert S^{*(0)}_+(\vv_{+,c_n^t}) \right\vert \right),
\end{aligned}
\end{equation}

The same claims hold for the ``$-$'' class neurons (with the class signs flipped).
\end{lemma}

\begin{proof}
In this proof we focus on the neurons in $S_+^{*(0)}(\vv_{+})$; the proof for the update expressions for those in $S_+^{*(0)}(\vv_{+,c})$ are proven in virtually the same way.

\textbf{Base case}, $t = T_0$.

First define $A_{+,r^*}^{*(t)} \coloneqq \langle \vw_{+,r^*}, \vv_+ \rangle$, $(+,r^*) \in S_+^{*(0)}(\vv_+)$; similarly for $A_{+,c,r^*}^{*(t)} \coloneqq \langle \vw_{+,r^*}, \vv_{+,c} \rangle$. Note that the choice of $r^*$ does not really matter, since we know from phase I that every neuron in $S_+^{*(0)}(\vv_+)$ evolve at exactly the same rate, so by the end of phase I, $\|\vw_{+,r}^{(T_0)} - \vw_{+,r'}^{(T_0)}\|_2 \le O(\sigma_0\ln(d)) \ll \|\vw_{+,r}^{(T_0)}\|_2$ for any $(+,r), (+,r') \in S_+^{*(0)}(\vv_+)$.

Let $(+,r) \in S_+^{*(0)}(\vv_+)$.
Similar to phase I, consider the update equation
\begin{align}
    \vw_{+,r}^{(t+1)}
    = & \vw_{+,r}^{(t)} + \eta \frac{1}{NP} \times\\
    & \sum_{n=1}^N \Bigg( 
    \mathbbm{1}\{y_n = +\}[1-\text{logit}_+^{(t)}(\mX_n^{(t)})]\sum_{p\in[P]} \sigma'(\langle \vw_{+,r}^{(t)}, \vx_{n,p}^{(t)} \rangle + b_{+,r}^{(t)}) \vx_{n,p}^{(t)}\\
    & + \mathbbm{1}\{y_n = -\} [-\text{logit}_+^{(t)}(\mX_n^{(t)}) ]\sum_{p\in[P]} \sigma'(\langle \vw_{+,r}^{(t)}, \vx_{n,p}^{(t)}  \rangle + b^{(t)}_{+,r})  \vx_{n,p}^{(t)} \Bigg)
\end{align}

For the on-diagonal update expression, we have
\begin{equation}
\begin{aligned}
& \sum_{n=1}^N 
\mathbbm{1}\{y_n = +\}[1-\text{logit}_+^{(t)}(\mX_n^{(t)})]\sum_{p\in[P]} \sigma'(\langle \vw_{+,r}^{(t)}, \vx_{n,p}^{(t)} \rangle + b_{+,r}^{(t)}) \vx_{n,p}^{(t)} \\
= & \sum_{n=1}^N \mathbbm{1} \{ y_n = +\} [1-\text{logit}_+^{(t)}(\mX_n^{(t)})]\\
& \Bigg\{\mathbbm{1}\{|\calP(\mX_n^{(t)}; \vv_{+})|>0\} \Bigg[\sum_{p\in\calP(\mX_n^{(t)}; \vv_{+})} \mathbbm{1}\left\{ \langle \vw_{+,r}^{(t)}, \alpha_{n,p}^{(t)} \vv_{+} + \vzeta_{n,p}^{(t)} \rangle \ge b_{+,r}^{(t)} \right\}  \left(\alpha_{n,p}^{(t)}\vv_{+} + \vzeta_{n,p}^{(t)}\right) \\
& + \sum_{p\notin\calP(\mX_n^{(t)}; \vv_{+})} \mathbbm{1}\left\{\langle \vw_{+,r}^{(t)}, \vx_{n,p}^{(t)} \rangle \ge  b_{+,r}^{(t)}\right\} \vx_{n,p}^{(t)} \Bigg] \\
& + \mathbbm{1}\{|\calP(\mX_n^{(t)}; \vv_{+})|=0\} \sum_{p\in[P]} \mathbbm{1}\left\{\langle \vw_{+,r}^{(t)}, \vx_{n,p}^{(t)} \rangle \ge  b_{+,r}^{(t)}\right\} \vx_{n,p}^{(t)} \Bigg\}
\end{aligned}
\end{equation}
Following from Theorem \ref{prop: phase 1 sgd induction} and \ref{thm: sgd, universal nonact properties}, the neurons' non-activation on the patches that do not contain $\vv_{+}$, and activation on the $\vv_+$-dominated patches hold with probability at least $1 - O\left(\frac{mNPk_+T_0}{\poly(d)}\right)$ at time $T_0$. Therefore, the above update expression reduces to
\begin{equation}
\begin{aligned}
& \sum_{n=1}^N \mathbbm{1} \{ y_n = +, |\calP(\mX_n^{(t)}; \vv_{+})|>0\} [1-\text{logit}_+^{(t)}(\mX_n^{(t)})] \sum_{p\in\calP(\mX_n^{(t)}; \vv_{+})} \left(\alpha_{n,p}^{(t)}\vv_{+} + \vzeta_{n,p}^{(t)}\right)
\end{aligned}
\end{equation}
Note that for samples $\mX_n^{(t)}$ with $y_n = +$,

\begin{equation}
\begin{aligned}
    [1-\text{logit}_+^{(t)}(\mX_n^{(t)})] 
    = & \frac{\exp(F_-^{(t)}(\mX_n^{(t)}))}{\exp(F_-^{(t)}(\mX_n^{(t)})) + \exp(F_+^{(t)}(\mX_n^{(t)}))} \\
\end{aligned}
\end{equation}

Now we need to estimate the network response $F_+^{(t)}(\mX_n^{(t)})$. With probability at least $1 - \exp(-\Omega(s^{*1/3}))$, we have the upper bound (let $(+,c_n^t)$ denote the subclass which sample $\mX_n^{(t)}$ belongs to):
\begin{equation}
\begin{aligned}
    & F_+^{(t)}(\mX_n^{(t)}) \\
    \le & \sum_{p\in\calP(\mX^{(t)}; \vv_+)} \sum_{(+,r)\in S^{(0)}_+(\vv_+)} \langle \vw_{+,r}^{(t)}, \vv_+ + \vzeta_{n,p}^{(t)} \rangle + b_{+,r}^{(t)} \\
    & +  \sum_{p\in\calP(\mX^{(t)}; \vv_{+,c_n^t})} \sum_{(+,r)\in S^{(0)}_+(\vv_{+,c_n^t})} \langle \vw_{+,r}^{(t)}, \vv_{+,c_n^t} + \vzeta_{n,p}^{(t)} \rangle + b_{+,r}^{(t)} \\
    \le & (1+s^{*-1/3})\sqrt{1+\iota} s^* \left(1 + O\left(\frac{1}{\ln^{9}(d)}\right)\right) \left(A_{+,r^*}^{*(t)}\left\vert S^{(0)}_+(\vv_+) \right\vert + A_{+,c_n^t,r^*}^{*(t)}\left\vert S^{(0)}_+(\vv_{+,c_n^t}) \right\vert \right)
\end{aligned}
\end{equation}

The second inequality is true since $\max_r \langle \vw_{+,r}^{(t)}, \vv_+ \rangle \le A_{+,r^*}^{*(t)} + O(\sigma_0\ln(d))$, and for any $(+,r)\in S^{(0)}_+(\vv_+)$, $\vert\langle \vw_{+,r}^{(t)}, \vzeta_{n,p}^{(t)}\rangle\vert \le O(1/\ln^{9}(d))A_{+,r^*}^{*(t)}$. The bias value is negative (and so less than $0$). 

To further refine the bound, we recall $\left\vert S^{*(0)}_+(\vv) \right\vert / \left\vert S^{*(0)}_+(\vv') \right\vert, \left\vert S^{*(0)}_+(\vv) \right\vert / \left\vert S^{(0)}_+(\vv') \right\vert = 1 \pm O(1/\ln^5(d))$. 

Therefore, we obtain the bound
\begin{equation}
\begin{aligned}
    F_+^{(t)}(\mX_n^{(t)}) 
    \le & (1+s^{*-1/3})\sqrt{1+\iota} s^* \left(1 + O\left(\frac{1}{\ln^5(d)}\right)\right) \left( 1 + O\left( \frac{1}{\ln^5(d)} \right) \right) \\
    & \times \left(A_{+,r^*}^{*(t)}\left\vert S^{*(0)}_+(\vv_+) \right\vert + A_{+,c_n^t,r^*}^{*(t)}\left\vert S^{*(0)}_+(\vv_{+,c_n^t}) \right\vert \right)
\end{aligned}
\end{equation}

Following a similar argument, we also have the lower bound
\begin{equation}
\begin{aligned}
    & F_+^{(t)}(\mX_n^{(t)}) \\
    \ge & \sum_{p\in\calP(\mX^{(t)}; \vv_+)} \sum_{(+,r)\in S^{*(0)}_+(\vv_+)} \sigma\left(\langle \vw_{+,r}^{(t)}, \vv_+ + \vzeta_{n,p}^{(t)} \rangle + b_{+,r}^{(t)}\right) \\
    & +  \sum_{p\in\calP(\mX^{(t)}; \vv_{+,c_n^t})} \sum_{(+,r)\in S^{*(0)}_+(\vv_{+,c_n^t})} \sigma\left(\langle \vw_{+,r}^{(t)}, \vv_{+,c_n^t} + \vzeta_{n,p}^{(t)} \rangle + b_{+,r}^{(t)} \right)\\
    \ge & (1-s^{*-1/3})\sqrt{1-\iota} s^* \left(1 - O\left(\frac{1}{\ln^5(d)}\right)\right) \left( 1 - O\left( \frac{1}{\ln^5(d)} \right) \right)  \\
    & \times \left(A_{+,r^*}^{*(t)}\left\vert S^{*(0)}_+(\vv_+) \right\vert + A_{+,c_n^t,r^*}^{*(t)}\left\vert S^{*(0)}_+(\vv_{+,c_n^t}) \right\vert \right)
\end{aligned}
\end{equation}

The neurons in $ S^{*(0)}_+(\vv_+)$ have to activate, therefore they serve a key role in the lower bound, the bias bound for them is simply $-A_{+,r^*}^{*(t)}\Theta(1/\ln^5(d))$; the neurons in $S^{(0)}_+(\vv_{+,c})$ contribute at least $0$ due to the ReLU activation; the rest of the neurons do not activate. The same reasoning holds for the $S^{*(0)}_+(\vv_{+,c})$.

Knowing that neurons in $S^{*(0)}_+(\vv_+)$ cannot activate on the patches in samples belonging to the ``$-$'' class, now we may write the update expression for every $(+,r)\in S^{*(t)}_+(\vv_+)$ as (their updates are identical, same as in phase I):
\begin{equation}
\begin{aligned}
& \Delta \vw_{+,r}^{(t)} \\
= & \frac{\eta}{NP}\sum_{n=1}^N 
\mathbbm{1}\{y_n = +\}[1-\text{logit}_+^{(t)}(\mX_n^{(t)})]\sum_{p\in[P]} \sigma'(\langle \vw_{+,r}^{(t)}, \vx_{n,p}^{(t)} \rangle + b_{+,r}^{(t)}) \vx_{n,p}^{(t)} \\
= &\frac{\eta}{NP} \sum_{n=1}^N \mathbbm{1} \{ y_n = +, |\calP(\mX_n^{(t)}; \vv_{+})|>0\} \exp(-F_+^{(t)}(\mX_n^{(t)}))  \\
& \times \frac{\exp(F_-^{(t)}(\mX_n^{(t)}))}{\exp\left(F_-^{(t)}(\mX_n^{(t)})- \exp(F_+^{(t)}(\mX_n^{(t)}))\right) + 1 }\sum_{p\in\calP(\mX_n^{(t)}; \vv_{+})}  \left(\alpha_{n,p}^{(t)}\vv_{+} + \vzeta_{n,p}^{(t)}\right) \\
= & \eta  \sum_{n=1}^N  \mathbbm{1} \{ y_n = +\}\exp\Bigg\{ - (1+s^{*-1/3})\sqrt{1+\iota} s^* \left(1 + O\left(\frac{1}{\ln^5(d)}\right)\right) \\
& \times \left(A_{+,r^*}^{*(t)}\left\vert S^{*(0)}_+(\vv_+) \right\vert + A_{+,c_n^t,r^*}^{*(t)}\left\vert S^{*(0)}_+(\vv_{+,c_n^t}) \right\vert \right) \Bigg\} \\
& \times  \frac{\exp(F_-^{(t)}(\mX_n^{(t)}))}{\exp\left(F_-^{(t)}(\mX_n^{(t)})- F_+^{(t)}(\mX_n^{(t)})\right) + 1 } (1\pm s^{*-1/3})\frac{s^*}{NP} \left(\sqrt{1\pm\iota}\vv_{+} + \vzeta_{n,p}^{(t)}\right)
\label{expression: phase II, common neuron update estimate}
\end{aligned}
\end{equation}

This concludes the proof of the base case.

\textcolor{blue}{\textbf{Induction step}}. Assume the statements hold for time period $[T_0, t]$, prove for time $t+1$.

At step $t+1$, based on the induction hypothesis, we know that with probability at least $1 - O\left(\frac{mNPk_+t}{\poly(d)} \right)$, during time $\tau\in[T_0, t]$, for any $(+,r) \in S_+^{*(0)}(\vv_+)$,
\begin{equation}
\begin{aligned}
& \Delta \vw_{+,r}^{(\tau)} \\
= & \eta \sum_{n=1}^N \mathbbm{1} \{ y_n = +\} \exp\Bigg\{ - (1+s^{*-1/3})\sqrt{1+\iota} s^* \left(1 + O\left(\frac{1}{\ln^5(d)}\right)\right) \\
& \times \left(A_{+,r^*}^{*(\tau)}\left\vert S^{*(0)}_+(\vv_+) \right\vert + A_{+,c_n^t,r^*}^{*(\tau)}\left\vert S^{*(0)}_+(\vv_{+,c_n^t}) \right\vert \right) \Bigg\} \\
& \times   \frac{\exp(F_-^{(\tau)}(\mX_n^{(\tau)}))}{\exp\left(F_-^{(\tau)}(\mX_n^{(\tau)})- \exp(F_+^{(\tau)}(\mX_n^{(\tau)}))\right) + 1 } (1\pm s^{*-1/3})\frac{s^*}{NP} \left(\sqrt{1\pm\iota}\vv_{+} + \vzeta_{n,p}^{(\tau)}\right)
\end{aligned}
\end{equation}
and for the bias,
\begin{equation}
\begin{aligned}
& \Delta b_{+,r}^{(\tau)} \\
\le & - \eta \frac{1}{\ln^5(d)} \sum_{n=1}^N \mathbbm{1} \{ y_n = +\} \exp\Bigg\{ - (1+s^{*-1/3})\sqrt{1+\iota} s^* \left(1 + O\left(\frac{1}{\ln^5(d)}\right)\right) \\
& \times \left(A_{+,r^*}^{*(\tau)}\left\vert S^{*(0)}_+(\vv_+) \right\vert + A_{+,c_n^t,r^*}^{*(\tau)}\left\vert S^{*(0)}_+(\vv_{+,c_n^t}) \right\vert \right) \Bigg\}  \\
& \times  (1 - s^{*-1/3}) \frac{s^*}{NP} \left(\sqrt{1-\iota} - \frac{1}{\ln^{10}(d)}\right) \frac{\exp(F_-^{(\tau)}(\mX_n^{(\tau)}))}{\exp\left(F_-^{(\tau)}(\mX_n^{(\tau)})- \exp(F_+^{(\tau)}(\mX_n^{(\tau)}))\right) + 1 }
\end{aligned}
\end{equation}

Conditioning on the high-probability events of the induction hypothesis, 

\begin{equation}
\begin{aligned}
& \vw_{+,r}^{(t+1)} \\
= & \vw_{+,r}^{(T_0)} \\
& + \eta \sum_{\tau=T_0}^{t} \sum_{n=1}^N \mathbbm{1} \{ y_n = +\}\exp\Bigg\{ - (1+s^{*-1/3})\sqrt{1+\iota} s^* \left(1 + O\left(\frac{1}{\ln^5(d)}\right)\right) \\
& \times \left(A_{+,r^*}^{*(\tau)}\left\vert S^{*(0)}_+(\vv_+) \right\vert + A_{+,c_n^t,r^*}^{*(\tau)}\left\vert S^{*(0)}_+(\vv_{+,c_n^t}) \right\vert \right) \Bigg\}  \\
& \times \frac{\exp(F_-^{(\tau)}(\mX_n^{(\tau)}))}{\exp\left(F_-^{(\tau)}(\mX_n^{(\tau)})- \exp(F_+^{(\tau)}(\mX_n^{(\tau)}))\right) + 1 } (1\pm s^{*-1/3})\frac{s^*}{NP} \left(\sqrt{1\pm\iota}\vv_{+} + \vzeta_{n,p}^{(\tau)}\right)
\end{aligned}
\end{equation}

It follows that, with probability at least $1 - O\left(\frac{mNP}{\poly(d)} \right)$, for all $\vv_+$-dominated patch $\vx_{n,p}^{(t+1)}$,

\begin{equation}
\begin{aligned}
& \langle \vw_{+,r}^{(t+1)}, \vx_{n,p}^{(t+1)} \rangle + b_{+,r}^{(t+1)}\\
= & \langle \vw_{+,r}^{(T_0)}, \sqrt{1\pm\iota}\vv_+ + \vzeta_{n,p}^{(t+1)} \rangle + b_{+,r}^{(T_0)} \\
& + \eta \sum_{\tau=T_0}^{t} \sum_{n=1}^N \mathbbm{1} \{ y_n = +\} \exp\Bigg\{ - (1+s^{*-1/3})\sqrt{1+\iota} s^* \left(1 + O\left(\frac{1}{\ln^5(d)}\right)\right) \\
& \times \left(A_{+,r^*}^{*(\tau)}\left\vert S^{*(0)}_+(\vv_+) \right\vert + A_{+,c_n^t,r^*}^{*(\tau)}\left\vert S^{*(0)}_+(\vv_{+,c_n^t}) \right\vert \right) \Bigg\}  \\
& \times  \frac{\exp(F_-^{(\tau)}(\mX_n^{(\tau)}))}{\exp\left(F_-^{(\tau)}(\mX_n^{(\tau)})- F_+^{(\tau)}(\mX_n^{(\tau)})\right) + 1 } (1\pm s^{*-1/3})\frac{s^*}{NP} \\
& \times \langle \sqrt{1\pm\iota}\vv_{+} + \vzeta_{n,p}^{(\tau)}, \sqrt{1\pm\iota}\vv_+ + \vzeta_{n,p}^{(t+1)} \rangle + \Delta b_{+,r}^{(\tau)}\\
\ge & \; 0 \\
& + \eta \sum_{\tau=T_0}^{t} \sum_{n=1}^N \mathbbm{1} \{ y_n = +\} \exp\Bigg\{ - (1+s^{*-1/3})\sqrt{1+\iota} s^* \left(1 + O\left(\frac{1}{\ln^5(d)}\right)\right) \\
& \times \left(A_{+,r^*}^{*(\tau)}\left\vert S^{*(0)}_+(\vv_+) \right\vert + A_{+,c_n^t,r^*}^{*(\tau)}\left\vert S^{*(0)}_+(\vv_{+,c_n^t}) \right\vert \right) \Bigg\} \\
& \times  \frac{\exp(F_-^{(\tau)}(\mX_n^{(\tau)}))}{\exp\left(F_-^{(\tau)}(\mX_n^{(\tau)})- F_+^{(\tau)}(\mX_n^{(\tau)})\right) + 1 } (1\pm s^{*-1/3})\frac{s^*}{NP} \\
& \times \left(1-\iota - O\left( \frac{1}{\ln^5(d)}\right)\right) \\
> & \; 0
\end{aligned}
\end{equation}

Therefore the neurons $(+,r) \in S_+^{*(0)}(\vv_+)$ activate on the $\vv_+$-dominated patches $\vx_{n,p}^{(t+1)}$. We also know that they cannot activate on patches that are not dominated by $\vv_+$ by Theorem \ref{thm: sgd, universal nonact properties}. Following a similar derivation to the base case, we arrive at the result that, conditioning on the events of the induction hypothesis, with probability at least $1 - O\left(\frac{mNPk_+}{\poly(d)} \right)$, for all $(+,r) \in S_+^{*(0)}(\vv_+)$,
\begin{equation}
\begin{aligned}
& \Delta \vw_{+,r}^{(t+1)} \\
= & \eta  \sum_{n=1}^N \mathbbm{1} \{ y_n = +\} \exp\Bigg\{ - (1+s^{*-1/3})\sqrt{1+\iota} s^* \left(1 + O\left(\frac{1}{\ln^5(d)}\right)\right) \\
& \times \left(A_{+,r^*}^{*(t+1)}\left\vert S^{*(0)}_+(\vv_+) \right\vert + A_{+,c_n^t,r^*}^{*(t+1)}\left\vert S^{*(0)}_+(\vv_{+,c_n^t}) \right\vert \right) \Bigg\}  \\
& \times  (1\pm s^{*-1/3})\frac{s^*}{NP} \frac{\exp(F_-^{(t+1)}(\mX_n^{(t+1)}))}{\exp\left(F_-^{(t+1)}(\mX_n^{(t+1)})- F_+^{(t+1)}(\mX_n^{(t+1)})\right) + 1 } \left(\sqrt{1\pm\iota}\vv_{+} + \vzeta_{n,p}^{(t+1)}\right)
\end{aligned}
\end{equation}

Consequently, with probability at least $1 - O\left(\frac{mNP}{\poly(d)} \right)$,
\begin{equation}
\begin{aligned}
& \Delta b_{+,r}^{(t+1)}\\
\le & - \frac{1}{\ln^5(d)} \sum_{n=1}^N \mathbbm{1} \{ y_n = +\}\eta \exp\Bigg\{ - (1+s^{*-1/3})\sqrt{1+\iota} s^* \left(1 + O\left(\frac{1}{\ln^5(d)}\right)\right) \\
& \times \left(A_{+,r^*}^{*(t+1)}\left\vert S^{*(0)}_+(\vv_+) \right\vert + A_{+,c_n^t,r^*}^{*(t+1)}\left\vert S^{*(0)}_+(\vv_{+,c_n^t}) \right\vert \right) \Bigg\}  \\
& \times   \frac{\exp(F_-^{(t+1)}(\mX_n^{(t+1)}))}{\exp\left(F_-^{(t+1)}(\mX_n^{(t+1)})- F_+^{(t+1)}(\mX_n^{(t+1)})\right) + 1 } (1 - s^{*-1/3})\frac{s^*}{NP} \\
& \times \left(1-\iota - O\left( \frac{1}{\ln^9(d)}\right)\right)\\
\end{aligned}
\end{equation}

Utilizing the definition of conditional probability, we conclude that the expressions for $\Delta \vw_{+,r}^{(\tau)}$ and $\Delta b_{+,r}^{(t+1)}$ are indeed as described in the theorem during time $\tau \in [T_0, t+1]$ with probability at least $\left(1 - O\left(\frac{mNPk_+t}{\poly(d)} \right)\right) \times \left(1 - O\left(\frac{mNPk_+}{\poly(d)} \right)\right) \ge 1 - O\left(\frac{mNPk_+(t+1)}{\poly(d)} \right)$.

Moreover, based on the expression of $\Delta \vw_{+,r}^{(\tau)}$ and $\Delta b_{+,r}^{(t+1)}$, following virtually the same argument as in the base case, we can estimate the network output for any $(\mX_n^{(t+1)}, y_n=+)$:
\begin{equation}
\begin{aligned}
    F_+^{(t+1)}(\mX_n^{(t+1)}) 
    = & (1\pm s^{*-1/3})\sqrt{1\pm\iota}  \left(1 \pm O\left(\frac{1}{\ln^5(d)}\right)\right) s^* \\
    & \times \left(A_{+,r^*}^{*(t)}\left\vert S^{*(0)}_+(\vv_+) \right\vert + A_{+,c_n^t,r^*}^{*(t)}\left\vert S^{*(0)}_+(\vv_{+,c_n^t}) \right\vert \right)
\end{aligned}
\end{equation}

\end{proof}

\begin{lemma}
\label{lemma: phase II, after T1,1}
Define time $T_{1,1}$ to be the first point in time which the following identity holds on all $\mX_n^{(t)}$ belonging to the ``$+$'' class:
\begin{equation}
    \frac{\exp(F_-^{(t)}(\mX_n^{(t)}))}{\exp(F_-^{(t)}(\mX_n^{(t)}) - F_+^{(t)}(\mX_n^{(t)})) + 1} \ge 1 - O\left( \frac{1}{\ln^5(d)}\right)
\end{equation}
Then $T_{1,1} \le \poly(d)$, and for all $t \in [T_{1,1}, T_1]$, the above holds. The following also holds for this time period:
\begin{equation}
    [1 - \logit_+^{(t)}(\mX_n^{(t)})] \le O\left( \frac{1}{\ln^5(d)}\right)
\end{equation}

The same results also hold with the class signs flipped.
\end{lemma}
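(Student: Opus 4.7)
The quantity on the left equals (modulo what looks like a typo in the numerator) the softmax $\logit_+^{(t)}(\mX_n^{(t)}) = 1/(1+\exp(F_-^{(t)}(\mX_n^{(t)}) - F_+^{(t)}(\mX_n^{(t)})))$, so the event defining $T_{1,1}$ is that the network's confidence on some ``$+$''-class sample first crosses $1 - \Theta(1/\ln^5(d))$; equivalently, $F_+^{(t)}(\mX_n) - F_-^{(t)}(\mX_n) \ge \Omega(\ln\ln d)$. The plan is to extend the Phase~I induction of Theorem~\ref{prop: phase 1 sgd induction} past $T_0$, bound the time required for this gap to cross the $\Omega(\ln\ln d)$ threshold, and then use synchronization of the neurons in $S_+^{*(0)}(\vv_+)$ together with binomial concentration of common-feature patches to lift ``some sample'' to ``every sample'' at the same instant $T_{1,1}$.

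For the time bound, while $t < T_{1,1}$ I would argue inductively that the structural conclusions of Theorem~\ref{prop: phase 1 sgd induction} continue to hold: neurons in $S_+^{*(0)}(\vv_+)$ and $S_+^{*(0)}(\vv_{+,c})$ still activate only on their respective feature patches, and off-class/off-feature neurons remain below threshold because the $1/\ln^5(d)$ bias-shrinkage margin dominates the noise level. The only change from Phase~I is that the loss-derivative factor $(1-\logit_+^{(t)})$ is no longer pinned near $1/2$; by definition of $T_{1,1}$ it still lies in $[\Omega(1/\ln^5(d)),\,1]$ on every ``$+$''-class sample. Therefore, for every $(+,r) \in S_+^{*(0)}(\vv_+)$ the per-step increment of $\langle \vw_{+,r}^{(t)},\vv_+\rangle$ is at least $\eta\cdot\Omega(1/\ln^5(d))\cdot s^*/P$, and by the ``$-$''-class analog of point~5 of Theorem~\ref{prop: phase 1 sgd induction}, $F_-^{(t)}(\mX_n) = o(1)$ on any ``$+$''-class sample. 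Summing the increments over $t - T_0$ iterations and multiplying by the factor $(1 \pm s^{*-1/2})\, s^*\, |S_+^{*(0)}(\vv_+)| \sqrt{1\pm\iota}$ coming from activations on the common-feature patches yields
\[
F_+^{(t)}(\mX_n) - F_-^{(t)}(\mX_n) \;\ge\; \Omega\!\left(\tfrac{\eta\, s^{*2}\, |S_+^{*(0)}(\vv_+)|}{P\ln^5(d)}\right)(t - T_0).
\]
Plugging in $\eta = \Theta(\sigma_0) = \Theta(d^{-C_{\text{init}}})$, $|S_+^{*(0)}(\vv_+)| = \Theta(d^{c_0}/\sqrt{\ln d})$, $s^* \in \polyln(d)$ and $P \le d^2$ shows the right-hand side exceeds $\Theta(\ln\ln d)$ after $T_{1,1} - T_0 \in \poly(d)$ steps, so $T_{1,1} \in \poly(d)$.

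For uniformity across ``$+$''-class samples, point~3 of Theorem~\ref{prop: phase 1 sgd induction} (extended into Phase~II by the induction above) gives that every neuron in $S_+^{*(0)}(\vv_+)$ has received an identical cumulative update, while binomial concentration says each normal ``$+$''-class sample contains $s^*(1 \pm s^{*-1/2})$ common-feature patches with overwhelming probability. Together these imply that $F_+^{(T_{1,1})}(\mX_n)$ varies across normal ``$+$''-class samples by a multiplicative factor of only $1 \pm O(s^{*-1/2}) = 1 \pm o(1/\ln^5(d))$, while $F_-^{(T_{1,1})}(\mX_n) = o(1)$ uniformly. Hence the near-saturation of $\logit_+$ achieved on one normal sample is inherited by every normal ``$+$''-class sample; the ``$-$''-class version is symmetric. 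The delicate step, and the main technical obstacle, is verifying that the Phase~I activation pattern really does survive past $T_0$: once $(1-\logit_+^{(t)})$ drops below a constant, both the on-feature updates and the bias increments shrink in tandem (both scale with $\|\Delta\vw_{+,r}^{(t)}\|_2$), so the $1/\ln^5(d)$ ratio between the thresholding bias and the noise level is preserved; making this bookkeeping precise is what allows the Phase~I inductive argument to transfer wholesale into Phase~II.
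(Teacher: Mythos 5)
Your proposal follows the paper's proof essentially step for step: both argue that the loss factor $[1-\logit_+^{(t)}]$ stays bounded below by $\Omega(1/\ln^5(d))$ until $F_+^{(t)}$ crosses the $\ln\ln^5(d)$ threshold, appeal to the Phase~II update expressions (the paper delegates these to Lemma~\ref{lemma: phase II, general} rather than re-deriving them inline, as you do) to conclude the threshold is reached in $\poly(d)$ time, and then use the fact that $F_+^{(t)}$ varies across ``$+$''-class samples by only a $1 \pm O(1/\ln^5(d))$ multiplicative factor to transfer the near-saturation from one sample to all of them. The only cosmetic differences are bookkeeping ones (the paper's uniform ratio bound cites the $\sqrt{1\pm\iota}$ variation and uses $s^{*-1/3}$ rather than $s^{*-1/2}$ to get an exponentially small failure probability from the binomial tail), which do not change the argument.
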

\begin{proof}
We first note that, the training loss $[1 - \logit_+^{(t)}(\mX_n^{(t)})]$ on samples belonging to the ``$+$'' class at any time during $t \in [T_0, T_1]$ is, asymptotically speaking, monotonically decreasing from $\frac{1}{2} - O(d^{-1})$. This can be easily proven by observing the way $s^*\left(A_{+,r^*}^{*(t)}\left\vert S^{*(0)}_+(\vv_+) \right\vert + A_{+,c,r^*}^{*(t)}\left\vert S^{*(0)}_+(\vv_{+,c}) \right\vert \right)$ monotonically increases from the proof of Lemma \ref{lemma: phase II, general}: before $F_+^{(t)}(\mX_n^{(t)}) \ge \ln\ln^5(d)$ on all $\mX_n^{(t)}$ belonging to the ``$+$'' class, there must be some samples $\mX_n^{(t)}$ on which
\begin{equation}
\begin{aligned}
    [1 - \logit_+^{(t)}(\mX_n^{(t)})] 
    = & \frac{\exp(F_-^{(t)}(\mX_n^{(t)}))}{\exp(F_-^{(t)}(\mX_n^{(t)})) + \exp(F_+^{(t)}(\mX_n^{(t)}))} \\
    \ge & \frac{1 - O(\sigma_0\ln(d)s^*d^{c_0})}{1 + O(\sigma_0\ln(d)s^*d^{c_0}) + \ln^5(d)} \\ 
    \ge & \Omega\left(\frac{1}{\ln^5(d)}\right).
\end{aligned}
\end{equation}
Therefore, by the update expressions in the proof of Lemma \ref{lemma: phase II, general}, $F_+^{(t)}(\mX_n^{(t)})$ can reach $\ln\ln^5(d)$ in time at most $O\left( \frac{NP\ln^5(d)}{\eta s^*} \right) \in \poly(d)$ (in the worst case scenario). At time $T_{1,1}$ and beyond,
\begin{equation}
\begin{aligned}
    1 - \frac{\exp(F_-^{(t)}(\mX_n^{(t)}))}{\exp(F_-^{(t)}(\mX_n^{(t)}) - F_+^{(t)}(\mX_n^{(t)})) + 1}
    \le & 1 - \frac{\exp(1 - O(\sigma_0d^{c_0}s^*))}{\exp(1 + O(\sigma_0d^{c_0}s^*))\frac{1}{\ln^5(d)} + 1}\\
    \le & O\left( \frac{1}{\ln^5(d)}\right).
\end{aligned}
\end{equation}
\end{proof}

\begin{lemma}
\label{lem: net response, final}
Denote $C = \eta \frac{s^*}{2k_+P}$, and write (for any $c\in[k_+]$)
\begin{equation}
    A_c(t) = s^*\left(A_{+,r^*}^{*(t)}\left\vert S^{*(0)}_+(\vv_+) \right\vert + A_{+,c,r^*}^{*(t)}\left\vert S^{*(0)}_+(\vv_{+,c}) \right\vert \right)
\end{equation}
(see Lemma \ref{lemma: phase II, general} for definition of $A_{\cdot}^{*(t)}$). Define $t_{c,0} = \exp(A_c(T_{1,1}))$. We write $A(t)$ and $t_0$ below for cleaner notations.

Then with probability at least $1 - o(1)$, during $t \in [T_{1,1}, T_1]$, 
\begin{equation}
    A(t) = \ln(C(t - T_{1,1}) +t_0) + E(t)
\end{equation}
where $|E(t)| \le O\left(\frac{1}{\ln^4(d)}\right) \sum_{\tau=C^{-1}t_0}^{t-T_{1,1}+C^{-1}t_0} \frac{1}{\tau} \le O\left(\frac{\ln(t) - \ln(C^{-1}t_0)}{\ln^4(d)}\right)$.

The same results also hold with the class signs flipped.
\end{lemma}
\begin{proof}
\textbf{Sidenote}: To make the writing a bit cleaner, we assume in the proof below that $C^{-1}t_0$ is an integer. The general case is easy to extend to by observing that $\left\vert \frac{1}{t - T_{1,1} + \ceil{C^{-1}t_0} } - \frac{1}{t - T_{1,1} + C^{-1}t_0 } \right\vert \le \frac{1}{(t - T_{1,1} + \ceil{C^{-1}t_0})(t - T_{1,1} + C^{-1}t_0) }$, which can be absorbed into the error term at every iteration since $\frac{1}{t - T_{1,1} + \ceil{C^{-1}t_0} } \ll \frac{1}{\ln^4(d)}$ due to $C^{-1}t_0 \ge \Omega(\sigma_0^{-1}/(\polyln(d)d^{c_0})) \gg d \gg \ln^4(d)$.

Based on result from Lemmas \ref{lemma: phase II, general} and \ref{lemma: phase II, after T1,1}, as long as $A(t)\le O\left(\ln(d)\right)$, we know during time $t\in[T_{1,1}, T_1]$ the update rule for $A(t)$ is as follows:
\begin{equation}
\begin{aligned}
A(t+1) - A(t)
= & C \exp\left\{ - (1\pm s^{*-1/3})\sqrt{1\pm\iota} \left(1 \pm O\left(\frac{1}{\ln^5(d)}\right)\right)   A(t) \right\}\\
& \times \left(1 \pm O\left(\frac{1}{\ln^5(d)}\right)\right)(1\pm s^{*-1/3}) \left(\sqrt{1\pm\iota} \pm \frac{1}{\ln^{10}(d)}\right) \\
= & C \exp\left\{ - A(t) \right\} \exp\left\{\pm O\left(\frac{1}{\ln^4(d)}\right)\right\} \left(1 \pm O\left(\frac{1}{\ln^5(d)}\right)\right) \\
= & C \exp\left\{ - A(t) \right\} \left(1 \pm \frac{C_1}{\ln^4(d)}\right) 
\end{aligned}
\end{equation}
where we write $C_1$ in place of $O(\cdot)$ for a more concrete update expression.

The base case $t = T_{1,1}$ is trivially true.

We proceed with the induction step. Assume the hypothesis true for $t \in [T_{1,1}, T]$, prove for $t + 1 = T + 1$.

Note that by Lemma \ref{lemma: harmonic series},
\begin{equation}
\begin{aligned}
A(t+1) 
= & \ln(C(t - T_{1,1}) + t_0) + E(t)  \\
& + C \exp\left\{ - \ln(C(t - T_{1,1}) + t_0) - E(t)) \right\} \left(1 \pm \frac{C_1}{\ln^4(d)} \right) \\
= & \ln(C) + \ln(t - T_{1,1} + C^{-1} t_0) + E(t) \\
& + C \frac{1}{C(t - T_{1,1}) +  t_0}\left(1 - E(t) \pm O(E(t)^2) \right)\left(1 \pm \frac{C_1}{\ln^4(d)}\right) \\
= & \ln(C) + \sum_{\tau=1}^{t-T_{1,1} + C^{-1}t_0 - 1} \frac{1}{\tau} + \frac{1}{2}\frac{1}{t-T_{1,1} + C^{-1}t_0} + \left[0, \frac{1}{8}\frac{1}{(t-T_{1,1} + C^{-1}t_0)^2}\right] \\
& + \frac{1}{t - T_{1,1} + C^{-1} t_0}\pm \frac{C_1}{\ln^4(d)}\frac{1}{t - T_{1,1} + C^{-1} t_0} \\
& + E(t) + \frac{1}{t - T_{1,1} + C^{-1} t_0} \left(- E(t) \pm O(E(t)^2) \right)\left(1 \pm \frac{C_1}{\ln^4(d)}\right)\\
= & \ln(C) + \sum_{\tau=1}^{t-T_{1,1} + C^{-1}t_0} \frac{1}{\tau} + \frac{1}{2}\frac{1}{t-T_{1,1} + C^{-1}t_0} + \left[0, \frac{1}{8}\frac{1}{(t-T_{1,1} + C^{-1}t_0)^2}\right] \\
& \pm \frac{C_1}{\ln^4(d)}\frac{1}{t - T_{1,1} + C^{-1} t_0} \\
& + E(t) + \frac{1}{t - T_{1,1} + C^{-1} t_0} \left(- E(t) \pm O(E(t)^2) \right)\left(1 \pm \frac{C_1}{\ln^4(d)}\right)\\
\end{aligned}
\end{equation}

Invoking Lemma \ref{lemma: harmonic series} again,
\begin{equation}
\begin{aligned}
A(t+1) 
= & \ln(C) + \ln(t+1 - T_{1,1} + C^{-1}t_0) \\
& - \frac{1}{2}\frac{1}{t+1-T_{1,1} + C^{-1}t_0} +\frac{1}{2}\frac{1}{t-T_{1,1} + C^{-1}t_0} \\
& + \left[- \frac{1}{8}\frac{1}{(t+1-T_{1,1} + C^{-1}t_0)^2}, 0 \right] + \left[0, \frac{1}{8}\frac{1}{(t-T_{1,1} + C^{-1}t_0)^2}\right] \\
& \pm \frac{C_1}{\ln^4(d)}\frac{1}{t - T_{1,1} + C^{-1} t_0} \\
& + E(t) + \frac{1}{t - T_{1,1} + C^{-1} t_0} \left(- E(t) \pm O(E(t)^2) \right)\left(1 \pm \frac{C_1}{\ln^4(d)}\right)\\
= & \ln(C(t+1-T_{1,1}) + t_0) \\
& +\frac{1}{2} \frac{1}{(t+1-T_{1,1} + C^{-1}t_0)(t-T_{1,1} + C^{-1}t_0)} \pm O\left( \frac{1}{(t+1-T_{1,1} + C^{-1}t_0)^2}\right) \\
& \pm \frac{C_1}{\ln^4(d)}\frac{1}{t - T_{1,1} + C^{-1} t_0} \\
& + E(t) + \frac{1}{t - T_{1,1} + C^{-1} t_0} \left(- E(t) \pm O(E(t)^2) \right)\left(1 \pm \frac{C_1}{\ln^4(d)}\right)\\
\end{aligned}
\end{equation}

To further refine the expression, first note that the error passed down from the previous step $t$ does not grow in this step (in fact it slightly decreases):
\begin{equation}
\begin{aligned}
    & \left\vert E(t) + \frac{1}{t - T_{1,1} + C^{-1} t_0} \left(- E(t) \pm O(E(t)^2) \right)\left(1 \pm \frac{C_1}{\ln^4(d)}\right) \right\vert \\
    < & |E(t)| \\
    \le & O\left(\frac{1}{\ln^4(d)}\right) \sum_{\tau=C^{-1}t_0}^{t-T_{1,1}+C^{-1}t_0} \frac{1}{\tau}.
\end{aligned}
\end{equation}

Moreover, notice that at step $t+1$, since $\frac{1}{t+1-T_{1,1} + C^{-1}t_0} \ll \frac{1}{\ln^4(d)}$, the error term $\vert E(t + 1) \vert = \vert A(t+1) - \ln(C(t+1-T_{1,1}) + t_0) \vert \le O\left(\frac{1}{\ln^4(d)}\right) \sum_{\tau=C^{-1}t_0}^{t+1-T_{1,1}+C^{-1}t_0} \frac{1}{\tau}$, which finishes the inductive step.

\end{proof}

\begin{lemma}
\label{lemma: sgd phase II, common vs finegrained ratio}
    With probability at least $1 - O\left(\frac{mNPk_+T_1}{\poly(d)} \right)$, for all $t\in[0, T_1]$, all $c\in[k_+]$,
    \begin{equation}
    \begin{aligned}
        \frac{\Delta A_{+,c,r^*}^{*(t)}}{\Delta A_{+,r^*}^{*(t)}} &= \Theta\left( \frac{1}{k_+}\right),\\
        \frac{A_{+,c,r^*}^{*(t)}}{A_{+,r^*}^{*(t)}} & = \Theta\left( \frac{1}{k_+}\right).
    \end{aligned}
    \end{equation}

    The same identity holds for the ``$-$''-classes.
\end{lemma}

\begin{proof}
    The statements in the lemma follow trivially from Theorem \ref{prop: phase 1 sgd induction} for time period $[0,T_0]$. Let us focus on the phase $[T_0, T_1]$.

    In this proof, we condition on the high-probability events of Lemma \ref{lem: net response, final} and Lemma \ref{lemma: phase II, general}.
    
    First of all, based on Lemma \ref{lem: net response, final}, we know that $s^*A_{+,r^*}^{*(t)}\left\vert S^{*(0)}_+(\vv_+) \right\vert \le O(\ln(d))$. We will make use of this fact later.

    \textbf{Base case}, $t=T_0$.

    The base case directly follows from our Theorem \ref{prop: phase 1 sgd induction}.

    \textbf{Induction step}, assume statement holds for $\tau \in [T_0, t]$, prove statement for $t+1$.

    By Lemma \ref{lemma: phase II, general}, we know that 
    \begin{equation}
    \begin{aligned}
    & \Delta A_{+,r^*}^{*(t)} \\
    = & \eta \sum_{c=1}^{k_+}\exp\Bigg\{ - (1\pm s^{*-1/3})\sqrt{1\pm\iota} s^* \left(1 \pm O\left(\frac{1}{\ln^5(d)}\right)\right) \\
    & \times \left(A_{+,r^*}^{*(t)}\left\vert S^{*(0)}_+(\vv_+) \right\vert + A_{+,c,r^*}^{*(t)}\left\vert S^{*(0)}_+(\vv_{+,c}) \right\vert \right) \Bigg\}  \\
    & \times [1/3,1] (1\pm s^{*-1/3})\frac{s^*}{2k_+P} \left(\sqrt{1\pm\iota}\pm O\left(\frac{1}{\ln^9(d)}\right)\right),
    \end{aligned}
    \end{equation}
    and for any $c\in[k_+]$,
    \begin{equation}
    \begin{aligned}
    & \Delta A_{+,c,r^*}^{*(t)} \\
    = & \eta\exp\Bigg\{ - (1\pm s^{*-1/3})\sqrt{1\pm\iota} s^* \left(1 \pm O\left(\frac{1}{\ln^5(d)}\right)\right) \\
    & \times \left(A_{+,r^*}^{*(t)}\left\vert S^{*(0)}_+(\vv_+) \right\vert + A_{+,c,r^*}^{*(t)}\left\vert S^{*(0)}_+(\vv_{+,c}) \right\vert \right) \Bigg\}  \\
    & \times [1/3,1] (1\pm s^{*-1/3})\frac{s^*}{2k_+P} \left(\sqrt{1\pm\iota}\pm O\left(\frac{1}{\ln^9(d)}\right)\right),
    \end{aligned}
    \end{equation}

    Relying on the induction hypothesis, we can reduce the above expressions to
    \begin{equation}
    \begin{aligned}
    & \Delta A_{+,r^*}^{*(t)} \\
    = & \eta \sum_{c=1}^{k_+}\exp\Bigg\{ - (1\pm s^{*-1/3})\sqrt{1\pm\iota}  \left(1 \pm O\left(\frac{1}{\ln^5(d)}\right)\right) \left( 1 \pm O\left( \frac{1}{k_+}\right) \right) s^*A_{+,r^*}^{*(t)}\left\vert S^{*(0)}_+(\vv_+) \right\vert   \Bigg\}  \\
    & \times [1/3,1] (1\pm s^{*-1/3})\frac{s^*}{2k_+P} \left(\sqrt{1\pm\iota}\pm O\left(\frac{1}{\ln^9(d)}\right)\right) \\
    = & \eta \exp\Bigg\{ - (1\pm s^{*-1/3})\sqrt{1\pm\iota}  \left(1 \pm O\left(\frac{1}{\ln^5(d)}\right)\right) \left( 1 \pm O\left( \frac{1}{k_+}\right) \right) s^*A_{+,r^*}^{*(t)}\left\vert S^{*(0)}_+(\vv_+) \right\vert   \Bigg\}  \\
    & \times \Theta(1) \times \frac{s^*}{2P} ,
    \end{aligned}
    \end{equation}
    and for any $c\in[k_+]$,
    \begin{equation}
    \begin{aligned}
    & \Delta A_{+,c,r^*}^{*(t)} \\
    = & \eta\exp\Bigg\{ - (1\pm s^{*-1/3})\sqrt{1\pm\iota}  \left(1 \pm O\left(\frac{1}{\ln^5(d)}\right)\right) \left( 1 \pm O\left( \frac{1}{k_+}\right) \right) s^* A_{+,r^*}^{*(t)}\left\vert S^{*(0)}_+(\vv_+) \right\vert  \Bigg\}  \\
    & \times \Theta(1) \times \frac{s^*}{2k_+P} .
    \end{aligned}
    \end{equation}

    By invoking the property that $s^*A_{+,r^*}^{*(t)}\left\vert S^{*(0)}_+(\vv_+) \right\vert \le O(\ln(d))$, we find that for all $c\in[k_+]$,
    \begin{equation}
    \begin{aligned}
    \frac{\Delta A_{+,c,r^*}^{*(t)}}{\Delta A_{+,r^*}^{*(t)}} 
    = & \exp\Bigg\{ \pm O\left( \frac{1}{\ln^5(d)}\right) s^* A_{+,r^*}^{*(t)}\left\vert S^{*(0)}_+(\vv_+) \right\vert  \Bigg\}  \times \Theta\left( \frac{1}{k_+}\right)  \\
    = & \left( 1 \pm O\left( \frac{1}{\ln^4(d)}\right)\right) \times \Theta\left( \frac{1}{k_+}\right)\\
    = & \Theta\left( \frac{1}{k_+}\right).
    \end{aligned}
    \end{equation}

    Therefore, we can finish our induction step:
    \begin{equation}
        \frac{A_{+,c,r^*}^{*(t+1)}}{A_{+,r^*}^{*(t+1)}} = \frac{A_{+,c,r^*}^{*(t)} + \Delta A_{+,c,r^*}^{*(t)}}{A_{+,r^*}^{*(t)} + \Delta A_{+,r^*}^{*(t)}} = \frac{A_{+,c,r^*}^{*(t)} + \Delta A_{+,c,r^*}^{*(t)}}{\Theta\left(k_+\right) \times \left( A_{+,c,r^*}^{*(t)} + \Delta A_{+,c,r^*}^{*(t)} \right)} = \Theta\left( \frac{1}{k_+}\right).
    \end{equation}
\end{proof}

\begin{lemma}
    Let $T_{\Omega(1)}$ be the first point in time such that either $s^*A_{+,r^*}^{*(t)}\left\vert S^{*(0)}_+(\vv_+) \right\vert \ge \Omega(1)$ or $s^*A_{-,r^*}^{*(t)}\left\vert S^{*(0)}_-(\vv_-) \right\vert \ge \Omega(1)$. Then for any $t < T_{\Omega(1)}$,
    \begin{equation}
        \frac{A_{-,r^*}^{*(t)}}{A_{+,r^*}^{*(t)}} = \Theta(1)
    \end{equation}
    and for any $t \in [T_{\Omega(1)}, T_1]$,
    \begin{equation}
        \frac{A_{-,r^*}^{*(t)}}{A_{+,r^*}^{*(t)}}, \frac{A_{+,r^*}^{*(t)}}{A_{-,r^*}^{*(t)}} \ge \Omega\left(\frac{1}{\ln(d)}\right).
    \end{equation}
\end{lemma}
\begin{proof}
    This lemma is a consequence of Theorem \ref{prop: phase 1 sgd induction}, Lemma\ref{lemma: phase II, general} and Lemma \ref{lem: net response, final}.

    Due to Theorem \ref{prop: phase 1 sgd induction}, we already know that $\frac{A_{-,r^*}^{*(t)}}{A_{+,r^*}^{*(t)}} = \Theta(1)$ up to time $T_0$. In addition, with Lemma \ref{lemma: phase II, general} we know that before $s^*A_{+,r^*}^{*(t)}\left\vert S^{*(0)}_+(\vv_+) \right\vert \ge \Omega(1)$, the loss term (on a $+$-class sample) $1 - \logit_+^{(t)}(\mX_n^{(t)}) = \Theta(1)$ (the same holds with the class signs flipped), in which case it is also easy to derive $\frac{A_{-,r^*}^{*(t)}}{A_{+,r^*}^{*(t)}} = \Theta(1)$ by noting that the update expressions $\Delta A_{-,r^*}^{*(t)}/ \Delta A_{+,r^*}^{*(t)} = \Theta(1)$. 

    Beyond time $T_{\Omega(1)}$, by Lemma \ref{lem: net response, final}, we know that $s^*A_{+,r^*}^{*(t)}\left\vert S^{*(0)}_+(\vv_+) \right\vert, s^*A_{-,r^*}^{*(t)}\left\vert S^{*(0)}_-(\vv_-) \right\vert \le O(\ln(d))$. With the understanding that $s^*A_{+,r^*}^{*(t)}\left\vert S^{*(0)}_+(\vv_+) \right\vert,  s^*A_{-,r^*}^{*(t)}\left\vert S^{*(0)}_-(\vv_-) \right\vert \ge \Omega(1)$ beyond $T_{\Omega(1)}$ due to the monotonicity of these functions, and the property $\left\vert \frac{|S^{*(0)}_-(\vv_-)|}{|S^{*(0)}_+(\vv_+)|} - 1 \right\vert \le O\left( \frac{1}{\ln^5(d)} \right)$ from Proposition \ref{prop: init geometry, coarse}, the rest of the lemma follows.
\end{proof}

\begin{lemma}
With probability at least $1 - O\left(\frac{mNPk_+t}{\poly(d)} \right)$, for all $t\in[0,T_1]$ and all $(+,r)\in S^{*(0)}_+(\vv_+)$,
\begin{equation}
    \frac{\Delta b_{+,r}^{(t)}}{\Delta A_{+,r}^{(t)}} = -\Theta\left( \frac{1}{\ln^5(d)}\right).
\end{equation}

The same holds with the $+$-class sign replaced by the $-$-class sign.
\end{lemma}
\begin{proof}
Choose any $(+,r)\in S^{*(0)}_+(\vv_+)$.

The statement in this lemma for time period $t\in[0,T_0]$ follows easily from Theorem \ref{prop: phase 1 sgd induction} and its proof. Let us examine the period $t\in[T_0, T_1]$.

Based on Lemma \ref{lemma: phase II, general} and its proof and Lemma \ref{lemma: sgd phase II, common vs finegrained ratio}, we know that for $t\in [T_0,T_1]$, with probability at least $1 - O\left(\frac{mNPk_+t}{\poly(d)} \right)$,
\begin{equation}
\begin{aligned}
& \Delta A_{+,r}^{(t)} \\
= & \eta  \exp\Bigg\{ - (1\pm s^{*-1/3})\sqrt{1\pm\iota}  \left(1 \pm O\left(\frac{1}{\ln^5(d)}\right)\right) \left(1 \pm O\left(\frac{1}{k_+}\right)\right) s^*A_{+,r^*}^{*(t)}\left\vert S^{*(0)}_+(\vv_+) \right\vert\Bigg\}  \\
& \times (1\pm s^{*-1/3})\frac{s^*}{NP} \left(\sqrt{1\pm\iota} \pm O\left( \frac{1}{\ln^9(d)} \right)\right) \sum_{n=1}^N  \mathbbm{1} \{ y_n = +\} \frac{\exp(F_-^{(t)}(\mX_n^{(t)}))}{\exp\left(F_-^{(t)}(\mX_n^{(t)})- F_+^{(t)}(\mX_n^{(t)})\right) + 1 } 
\end{aligned}
\end{equation}

Furthermore,
\begin{equation}
\begin{aligned}
& \Delta b_{+,r}^{(t)} \\
= & - \frac{\|\Delta \vw_{+,r}^{(t)}\|_2}{\ln^5(d)} \\
= & - \eta \frac{1}{\ln^5(d)} \exp\Bigg\{ - (1 \pm s^{*-1/3})\sqrt{1\pm\iota}  \left(1 \pm O\left(\frac{1}{\ln^5(d)}\right)\right) \left(1 \pm O\left(\frac{1}{k_+}\right)\right) s^*A_{+,r^*}^{*(t)}\left\vert S^{*(0)}_+(\vv_+) \right\vert\Bigg\}  \\
& \times (1 \pm s^{*-1/3}) \frac{s^*}{NP} \left(1\pm\iota \pm \frac{1}{\ln^{9}(d)}\right)  \sum_{n=1}^N  \mathbbm{1} \{ y_n = +\}  \frac{\exp(F_-^{(t)}(\mX_n^{(t)}))}{\exp\left(F_-^{(t)}(\mX_n^{(t)})- F_+^{(t)}(\mX_n^{(t)})\right) + 1 } \\
\end{aligned}
\end{equation}

With the understanding that $s^*A_{+,r^*}^{*(t)}\left\vert S^{*(0)}_+(\vv_+) \right\vert \le O(\ln(d))$ from Lemma \ref{lem: net response, final} and the fact that $ \frac{\exp(F_-^{(t)}(\mX_n^{(t)}))}{\exp\left(F_-^{(t)}(\mX_n^{(t)})- F_+^{(t)}(\mX_n^{(t)})\right) + 1 } = \Theta(1)$, we have

\begin{equation}
\begin{aligned}
& \frac{\Delta b_{+,r}^{(t)}}{\Delta A_{+,r}^{(t)}} \\
= & -\Theta\left( \frac{1}{\ln^5(d)}\right)\exp\Bigg\{ - \left(1 \pm O\left(\frac{1}{\ln^5(d)}\right)\right) s^*A_{+,r^*}^{*(t)}\left\vert S^{*(0)}_+(\vv_+) \right\vert\Bigg\} \\
= & -\Theta\left( \frac{1}{\ln^5(d)}\right) \left(1 \pm O\left(\frac{1}{\ln^4(d)}\right)\right)  \\
= & -\Theta\left( \frac{1}{\ln^5(d)}\right).
\end{aligned}
\end{equation}

\end{proof}

\begin{lemma}[Probability of mistake on hard samples is high]
\label{lemma: coarse, hard sample mistake prob}

For all $t\in[0,T_1]$, given a hard test sample $(\mX_{\text{hard}},y)$,
\begin{equation}
    \mathbb{P}\left[F_y^{(T)}(\mX_{\text{hard}}) \le F_{y'}^{(T)}(\mX_{\text{hard}})\right] \ge \Omega(1).
\end{equation}
\end{lemma}
\begin{proof}
We first show that at time $t=0$, the probability of the network making a mistake on hard test samples is $\Omega(1)$, then prove that for the rest of the time, i.e. $t\in(0, T_1]$, the model still makes mistake on hard test samples with probability $\Omega(1)$.

At time $t=0$, by Lemma \ref{lemma: independent gaussian vector inner product concentration}, we know that for any $r\in[m]$, with probability $\Omega(1)$,
\begin{equation}
    \langle \vw_{+,r}^{(0)}, \vzeta^*\rangle \ge \Omega(\sigma_0\sigma_{\zeta^*}\sqrt{d}) \ge \Omega(\sigma_0\polyln(d)) \gg \Omega\left(\sigma_0 \sqrt{\ln(d)} \right).
\end{equation}

Relying on concentration of the binomial random variable, with probability at least $1 - e^{-\Omega(\polyln(d))}$,
\begin{equation}
    \sum_{r=1}^m \sigma\left(\langle \vw_{+,r}^{(0)}, \vzeta^*\rangle + b_{+,r}^{(0)} \right) \ge \Omega(m\sigma_0\sigma_{\zeta^*}\sqrt{d}),
\end{equation}
which is asymptotically larger than the activation from the features, which, following from Proposition \ref{prop: init geometry, coarse}, is upper bounded by $O\left(\sigma_0\sqrt{\ln(d)}s^*d^{c_0} \right)$. The same can be said for the ``$-$'' class. In other words,
\begin{equation}
\begin{aligned}
    &F_-^{(0)}(\mX_{\text{hard}}) - F_+^{(0)}(\mX_{\text{hard}}) > 0 \\
    \iff & \Bigg\{\sum_{r=1}^m\mathbbm{1}\{\langle \vw_{-,r}^{(0)}, \vzeta^*\rangle + b_{-,r}^{(0)} > 0\}\langle \vw_{-,r}^{(0)}, \vzeta^*\rangle \\
    & - \sum_{r=1}^m\mathbbm{1}\{\langle \vw_{+,r}^{(0)}, \vzeta^*\rangle + b_{+,r}^{(0)} > 0\}\langle \vw_{+,r}^{(0)}, \vzeta^*\rangle\Bigg\}(1\pm o(1)) > 0
\end{aligned}
\end{equation}
which clearly holds with probability $\Omega(1)$.

Now consider $t\in (0, T_1]$.

During this period of time, by Theorem \ref{prop: phase 1 sgd induction} and Lemma \ref{lemma: phase II, general}, we note that for any $c\in[k_+]$ and $(+,r)\in S_{+}^{*(0)}(\vv_{+,c})$, $\Delta\vzeta_{+,r}^{(t)} \sim \calN(\vzero, \sigma_{\Delta \zeta_{+,r}}^{(t)2} \mI_d)$, with $\sigma_{\Delta \zeta_{+,r}}^{(t)} =\Theta\left(\Delta A_{+,c,r}^{(t)} \sqrt{\frac{2k_+}{s^*N}} \sigma_{\zeta}\right)$. The same can be said for $(+,r)\in S_{+}^{*(0)}(\vv_+)$, although with the $\Delta A_{+,c,r}^{(t)}\sqrt{\frac{2k_+}{s^*N}}$ factor replaced by $\Delta A_{+,r}^{(t)}\sqrt{\frac{2}{s^*N}}$. Also from the proofs of Theorem \ref{prop: phase 1 sgd induction} and Lemma \ref{lemma: phase II, general}, and using the property $\vert \calU^{(0)}_{+,r}\vert\le O(1)$ from Proposition \ref{prop: init geometry, coarse}, we know that for all neurons, the updates to the neurons also take the feature-plus-Gaussian-noise form of $\sum_{\vv'\in\calU^{(0)}_{+,r}}c^{(t)}(\vv')\vv' + \Delta \vzeta_{+,r}^{(t)}$, with $c^{(t)}(\vv')\le \left(1 + O\left( \frac{1}{\ln^5(d)}\right) \right)\Delta A_{+,c,r}^{(t)}$ if $\vv'=\vv_{+,c}$ for some $c\in[k_+]$, or $c^{(t)}(\vv')\le  \left(1 + O\left( \frac{1}{\ln^5(d)}\right) \right)\Delta A_{+,r}^{(t)} $ if $\vv'=\vv_+$ (because the $\vv'$ component of a $\vv'$-singleton neuron's update is already the maximum possible). Moreover, if $\vv_+\in\calU_{+,r}^{(0)}$, then $\sigma_{\Delta \zeta_{+,r}}^{(t)} \le O\left( \Delta A_{+,r}^{(t)}\sqrt{\frac{2}{s^*N}}\sigma_{\zeta}\right) + O\left(\Delta A_{+,c,r}^{(t)} \sqrt{\frac{2k_+}{s^*N}} \sigma_{\zeta}\right) \le O\left( \Delta A_{+,r}^{(t)}\sqrt{\frac{2}{s^*N}}\sigma_{\zeta}\right)$, otherwise, if $\calU_{+,r}^{(0)}$ only contains the fine-grained features, then $\sigma_{\Delta \zeta_{+,r}}^{(t)} \le O\left(\Delta A_{+,c,r}^{(t)} \sqrt{\frac{2k_+}{s^*N}} \sigma_{\zeta}\right)$.

With the understanding that only neurons in $S_{y}^{(0)}(\vv_y)$ and $S_{y}^{(0)}(\vv_{y,c})$ can possibly activate on the feature patches of a sample when $t \le T_1$ (coming from Theorem \ref{thm: sgd, universal nonact properties}), we have
\begin{equation}
\begin{aligned}
    F_+^{(t)}(\mX_{\text{hard}}) 
    \le & \sum_{(+,r)\in S^{(0)}_+(\vv_{+,c})}  \sum_{p\in\calP(\mX_{\text{hard}};\vv_{+,c})} \sigma\left( \langle \vw_{+,r}^{(0)} +\sum_{\tau=0}^{t-1}\Delta\vw_{+,r}^{(\tau)}, \sqrt{1\pm\iota}\vv_{+,c} + \vzeta_{p} \rangle + b_{+,r}^{(t)}\right) \\
    & + \sum_{r\in[m]} \sigma\left( \langle \vw_{+,r}^{(0)} +\sum_{\tau=0}^{t-1}\Delta\vw_{+,r}^{(\tau)}, \vzeta^* \rangle + b_{+,r}^{(t)}\right)  \\
    & + \sum_{(+,r)\in S^{(0)}_+(\vv_{-})} \sum_{p\in\calP(\mX_{\text{hard}};\vv_{-})} \sigma\left( \langle \vw_{+,r}^{(0)} +\sum_{\tau=0}^{t-1}\Delta\vw_{+,r}^{(\tau)}, \alpha_{p}^{\dagger}\vv_{-} + \vzeta_{p} \rangle + b_{+,r}^{(t)}\right)
\end{aligned}
\end{equation}

To further refine this upper bound, we first note that with probability at least $1 - O\left(\frac{mNPk_+t}{\poly(d)} \right)$, the following holds with arbitrary choice of $(+,r^*)\in S^{(0)}_+(\vv_{+,c})$:
\begin{equation}
\begin{aligned}
    &\sum_{(+,r)\in S^{(0)}_+(\vv_{+,c})} \sum_{p\in\calP(\mX_{\text{hard}};\vv_{+,c})} \langle \sum_{\tau=0}^{t-1}\Delta\vw_{+,r}^{(\tau)}, \sqrt{1\pm\iota}\vv_{+,c} + \vzeta_{p} \rangle \le O\left(s^* \left\vert S^{(0)}_+(\vv_{+,c}) \right\vert \sum_{\tau=0}^{t-1} \Delta A_{+,c,r^*}^{(\tau)}\right)
\end{aligned}
\end{equation}

Invoking Lemma \ref{lemma: sgd phase II, common vs finegrained ratio}, we obtain (for arbitrary $(+,r^*)\in S^{(0)}_+(\vv_{+})$):
\begin{equation}
\begin{aligned}
    &\sum_{(+,r)\in S^{(0)}_+(\vv_{+,c})} \sum_{p\in\calP(\mX_{\text{hard}};\vv_{+,c})} \langle \sum_{\tau=0}^{t-1}\Delta\vw_{+,r}^{(\tau)}, \sqrt{1\pm\iota}\vv_{+,c} + \vzeta_{p} \rangle 
    \le O\left(\frac{1}{k_+} s^* \left\vert S^{(0)}_+(\vv_{+,c}) \right\vert \sum_{\tau=0}^{t-1} \Delta A_{+,r^*}^{(\tau)}\right)
\end{aligned}
\end{equation}

Let us examine the term $\sum_{r\in[m]} \sigma\left( \langle \vw_{+,r}^{(0)} +\sum_{\tau=0}^{t-1}\Delta\vw_{+,r}^{(\tau)}, \vzeta^* \rangle + b_{+,r}^{(t)}\right)$ more carefully. First of all, denoting $S_+^{(0)} = \cup_{c=1}^{k_+} S^{(0)}_+(\vv_{+,c}) \cup \cup_{c=1}^{k_-}  S^{(0)}_+(\vv_{-,c}) \cup S^{(0)}_+(\vv_{+}) \cup S^{(0)}_+(\vv_{-})$, neurons $(+,r)\notin S_+^{(0)}$ cannot receive any update at all during training due to Theorem \ref{thm: sgd, universal nonact properties}. Therefore we can rewrite the term
\begin{equation}
\begin{aligned}
    &\sum_{r\in[m]} \sigma\left( \langle \vw_{+,r}^{(0)} +\sum_{\tau=0}^{t-1}\Delta\vw_{+,r}^{(\tau)}, \vzeta^* \rangle + b_{+,r}^{(t)}\right) \\
    = & \sum_{(+,r)\in S^{(0)}_+} \sigma\left( \langle \vw_{+,r}^{(0)} +\sum_{\tau=0}^{t-1}\Delta\vw_{+,r}^{(\tau)}, \vzeta^* \rangle + b_{+,r}^{(t)}\right) 
    + \sum_{(+,r)\notin S^{(0)}_+ } \sigma\left( \langle \vw_{+,r}^{(0)} , \vzeta^* \rangle + b_{+,r}^{(0)}\right)
\end{aligned}
\end{equation}

Relying on Corollary \ref{coro: coarse training, bias upper bd}, we know
\begin{equation}
    \sum_{\tau=0}^{t-1}\Delta b_{+,r}^{(\tau)} <\sum_{\tau=0}^{t-1} -\Omega\left(\frac{\polyln(d)}{\ln^5(d)}\right)\left\vert\langle \Delta \vw_{+,r}^{(\tau)}, \vzeta^* \rangle \right\vert.
\end{equation}

Therefore, we know that for $r\in[m]$, 
\begin{equation}
    \sum_{\tau=0}^{t-1} \langle \Delta\vw_{+,r}^{(\tau)}, \vzeta^* \rangle + \Delta b_{+,r}^{(\tau)} \le 0
\end{equation}

As a consequence, we can write the naive upper bound
\begin{equation}
\begin{aligned}
    &\sum_{r\in[m]} \sigma\left( \langle \vw_{+,r}^{(0)} +\sum_{\tau=0}^{t-1}\Delta\vw_{+,r}^{(\tau)}, \vzeta^* \rangle + b_{+,r}^{(t)}\right) \\
    \le & \sum_{(+,r)\in S^{(0)}_+} \sigma\left( \langle \vw_{+,r}^{(0)} , \vzeta^* \rangle + b_{+,r}^{(0)}\right) 
    + \sum_{(+,r)\notin S^{(0)}_+ } \sigma\left( \langle \vw_{+,r}^{(0)} , \vzeta^* \rangle + b_{+,r}^{(0)}\right) \\
    = & \sum_{r\in[m] } \sigma\left( \langle \vw_{+,r}^{(0)} , \vzeta^* \rangle + b_{+,r}^{(0)}\right)
\end{aligned}
\end{equation}

Additionally, due to Theorem \ref{thm: sgd, universal nonact properties} (and its proof), we know that
\begin{equation}
\begin{aligned}
    & \sum_{(+,r)\in S^{(0)}_+(\vv_{-})} \sum_{p\in\calP(\mX_{\text{hard}};\vv_{-})} \sigma\left( \langle \vw_{+,r}^{(0)} +\sum_{\tau=0}^{t-1}\Delta\vw_{+,r}^{(\tau)}, \alpha_{p}^{\dagger}\vv_{-} + \vzeta_{p} \rangle + b_{+,r}^{(t)}\right) \\
    \le & \sum_{(+,r)\in S^{(0)}_+(\vv_{-})} \sum_{p\in\calP(\mX_{\text{hard}};\vv_{-})} \sigma\left( \langle \vw_{+,r}^{(0)}, \alpha_{p}^{\dagger}\vv_{-} + \vzeta_{p} \rangle + b_{+,r}^{(0)}\right)
\end{aligned}
\end{equation}

It follows that
\begin{equation}
\begin{aligned}
    & F_+^{(t)}(\mX_{\text{hard}}) \\
    \le & O\left(\frac{1}{k_+} s^* \left\vert S^{(0)}_+(\vv_{+,c}) \right\vert \sum_{\tau=0}^{t-1} \Delta A_{+,r^*}^{(\tau)}\right) + \sum_{(+,r)\in S^{(0)}_+(\vv_{+,c})}  \sum_{p\in\calP(\mX_{\text{hard}};\vv_{+,c})} \left\vert \langle \vw_{+,r}^{(0)}, \sqrt{1\pm\iota}\vv_{+,c} + \vzeta_{p} \rangle \right\vert\\
    & + \sum_{r\in[m] } \sigma\left( \langle \vw_{+,r}^{(0)} , \vzeta^* \rangle + b_{+,r}^{(0)}\right) 
    + \sum_{(+,r)\in S^{(0)}_+(\vv_{-})} \sum_{p\in\calP(\mX_{\text{hard}};\vv_{-})} \sigma\left( \langle \vw_{+,r}^{(0)}, \alpha_{p}^{\dagger}\vv_{-} + \vzeta_{p} \rangle + b_{+,r}^{(0)}\right)
\end{aligned}
\end{equation}

On the other hand, for the ``$-$'' neurons, denoting $S_-^{(0)} = \cup_{c=1}^{k_+} S^{(0)}_-(\vv_{+,c}) \cup \cup_{c=1}^{k_-}  S^{(0)}_-(\vv_{-,c}) \cup S^{(0)}_-(\vv_{+}) \cup S^{(0)}_-(\vv_{-})$,
\begin{equation} 
\begin{aligned}
    F_-^{(t)}(\mX_{\text{hard}}) 
    \ge & \sum_{(+,r)\in S^{*(0)}_-(\vv_{-})} \sum_{p\in\calP(\mX_{\text{hard}};\vv_{-})} \sigma\left(\langle \vw_{-,r}^{(0)} + \sum_{\tau=0}^{t-1} \Delta \vw_{-,r}^{(\tau)}, \alpha_{p}^{\dagger}\vv_{-} + \vzeta_{p} \rangle + b_{+,r}^{(t)} \right)\\
    & + \sum_{(+,r)\notin S_-^{(0)}} \sigma\left(\langle \vw_{-,r}^{(0)}, \vzeta^* \rangle + b_{+,r}^{(0)} \right),
\end{aligned}
\end{equation}
note that the last line is true because neurons outside the set $S^{(0)}_-$ cannot receive any update during training with probability at least $1 - O\left(\frac{mNPk_+t}{\poly(d)} \right)$ due to Theorem \ref{thm: sgd, universal nonact properties}. Estimating the activation value of the neurons from $S^{*(0)}_-(\vv_{-})$ on the feature noise patches requires some care. We define time $t_-$ to be the first point in time such that any $(-,r^*)\in S^{*(0)}_-(\vv_{-})$ satisfies $\sum_{\tau=0}^{t_-}\Delta A_{-,r^*}^{(\tau)} \ge \sigma_0 \ln^5(d)$, and beyond this point in time, i.e. for $t \in [t_-, T_1]$, the neurons in $S^{*(0)}_-(\vv_{-})$ have to activate with high probability, since
\begin{equation}
\begin{aligned}
    \langle \vw_{-,r}^{(0)} + \sum_{\tau=0}^{t-1} \Delta \vw_{-,r}^{(\tau)}, \alpha_{p}^{\dagger}\vv_{-} + \vzeta_{p} \rangle + b_{+,r}^{(t)} 
    \ge & \left( 1 - O\left(\frac{1}{\ln^5(d)}\right) \right)\sigma_0 \ln^5(d)/\ln^4(d) - O(\sigma_0\sqrt{\ln(d)}) \\
    > & 0.
\end{aligned}
\end{equation}

Now we can proceed to prove the lemma for $t \in (0, T_1]$ by combining the above estimates for $F_+^{(t)} (\mX_{\text{hard}})$ and $F_-^{(t)}(\mX_{\text{hard}})$.

For $t \in (0, t_-]$, relying argument similar to the situation of $t = 0$ and the fact that $m - \vert S_-^{(0)} \vert = (1 - o(1))m$,
\begin{equation} 
\begin{aligned}
    & \Bigg\{\sum_{(+,r)\notin S_-^{(0)}} \mathbbm{1}\{\langle \vw_{-,r}^{(0)}, \vzeta^*\rangle + b_{-,r}^{(0)} > 0\}\langle \vw_{-,r}^{(0)}, \vzeta^*\rangle \\
    & - \sum_{r=1}^m\mathbbm{1}\{\langle \vw_{+,r}^{(0)}, \vzeta^*\rangle + b_{+,r}^{(0)} > 0\}\langle \vw_{+,r}^{(0)}, \vzeta^*\rangle\Bigg\}(1\pm o(1)) > 0 \\
    \implies & F_-^{(t)}(\mX_{\text{hard}}) - F_+^{(t)} (\mX_{\text{hard}}) > 0 \\
\end{aligned}
\end{equation}
which has to be true with probability $\Omega(1)$.

On the other hand, with $t \in (t_-, T_1]$, we have
\begin{equation} 
\begin{aligned}
    & F_-^{(t)}(\mX_{\text{hard}}) - F_+^{(t)} (\mX_{\text{hard}})\\
    \ge  & \Bigg\{\sum_{\tau=0}^{t-1}\left( 1 - O\left(\frac{1}{\ln^5(d)}\right) \right) s^{\dagger} |S^{*(0)}_-(\vv_{-})| \Delta A_{-,r^*}^{(\tau)} - O(\sigma_0\sqrt{\ln(d)}) \\
    & - O\left(\frac{1}{k_+} s^* \left\vert S^{(0)}_+(\vv_{+,c}) \right\vert \sum_{\tau=0}^{t-1} \Delta A_{+,r^*}^{(\tau)}\right) \Bigg\}\\
    & + \Bigg\{ \sum_{(+,r)\notin S_-^{(0)}} \sigma\left(\langle \vw_{-,r}^{(0)}, \vzeta^* \rangle + b_{+,r}^{(0)} \right) - \sum_{(+,r)\in S^{(0)}_+(\vv_{+,c})}  \sum_{p\in\calP(\mX_{\text{hard}};\vv_{+,c})} \left\vert \langle \vw_{+,r}^{(0)}, \sqrt{1\pm\iota}\vv_{+,c} + \vzeta_{p} \rangle \right\vert\\
    & - \sum_{r\in[m] } \sigma\left( \langle \vw_{+,r}^{(0)} , \vzeta^* \rangle + b_{+,r}^{(0)}\right) - \sum_{(+,r)\in S^{(0)}_+(\vv_{-})} \sum_{p\in\calP(\mX_{\text{hard}};\vv_{-})} \sigma\left( \langle \vw_{+,r}^{(0)}, \alpha_{p}^{\dagger}\vv_{-} + \vzeta_{p} \rangle + b_{+,r}^{(0)}\right) \Bigg\}
\end{aligned}
\end{equation}

Let us begin analyzing the first $\{\cdot\}$ bracket. 

By Proposition \ref{prop: init geometry, coarse} we know that $\left\vert S^{*(0)}_-(\vv_{-})\right\vert = (1\pm O(1/\ln^5(d)))\left\vert S^{(0)}_+(\vv_{+,c}) \right\vert$, and by Lemma \ref{lemma: sgd phase II, common vs finegrained ratio}, we know that $\Delta A_{+,r^*}^{(\tau)} \le O(\ln(d)\Delta A_{-,r^*}^{(\tau)})$, therefore, 
\begin{equation}
\begin{aligned}
    O\left(\frac{1}{k_+} s^* \left\vert S^{(0)}_+(\vv_{+,c}) \right\vert \sum_{\tau=0}^{t-1} \Delta A_{+,r^*}^{(\tau)}\right) 
    \le & O\left(\frac{\ln(d)}{k_+} s^* \left\vert S^{*(0)}_-(\vv_{-}) \right\vert \sum_{\tau=0}^{t-1} \Delta A_{-,r^*}^{(\tau)}\right) \\
    \ll & \sum_{\tau=0}^{t-1}\left( 1 - O\left(\frac{1}{\ln^5(d)}\right) \right) s^{\dagger} |S^{*(0)}_-(\vv_{-})| \Delta A_{-,r^*}^{(\tau)} - O(\sigma_0\sqrt{\ln(d)})
\end{aligned}
\end{equation}

Therefore, we obtained the simpler lower bound
\begin{equation} 
\begin{aligned}
    & F_-^{(t)}(\mX_{\text{hard}}) - F_+^{(t)} (\mX_{\text{hard}})\\
    \ge & \Bigg\{ \sum_{(+,r)\notin S_-^{(0)}} \sigma\left(\langle \vw_{-,r}^{(0)}, \vzeta^* \rangle + b_{+,r}^{(0)} \right)  - \sum_{(+,r)\in S^{(0)}_+(\vv_{+,c})}  \sum_{p\in\calP(\mX_{\text{hard}};\vv_{+,c})} \left\vert \langle \vw_{+,r}^{(0)}, \sqrt{1\pm\iota}\vv_{+,c} + \vzeta_{p} \rangle \right\vert \\
    & - \sum_{r\in[m] } \sigma\left( \langle \vw_{+,r}^{(0)} , \vzeta^* \rangle + b_{+,r}^{(0)}\right) - \sum_{(+,r)\in S^{(0)}_+(\vv_{-})} \sum_{p\in\calP(\mX_{\text{hard}};\vv_{-})} \sigma\left( \langle \vw_{+,r}^{(0)}, \alpha_{p}^{\dagger}\vv_{-} + \vzeta_{p} \rangle + b_{+,r}^{(0)}\right) \Bigg\}
\end{aligned}
\end{equation}
which is greater than $0$ with probability $\Omega(1)$ (by relying on an argument almost identical to the $t=0$ case again, and noting that $m - |S_-^{(0)}| = (1 - o(1))m$). This concludes the proof.

\end{proof}

\begin{lemma}[Probability of mistake on easy samples is low after training]
\label{lemma: coarse, easy sample mistake prob}
For $t\in[T_{1,1},T_1]$, given an easy test sample $(\mX_{\text{easy}},y)$,
\begin{equation}
    \mathbb{P}\left[F_y^{(T)}(\mX_{\text{easy}}) \le F_{y'}^{(T)}(\mX_{\text{easy}})\right] \le o(1).
\end{equation}
\end{lemma}
\begin{proof}
    Without loss of generality, assume the true label of $\mX_{\text{easy}}$ is $+1$. Assume $t \ge T_{1,1}$.
    
    Firstly, conditioning on the events of Theorem \ref{thm: sgd, universal nonact properties}, the following upper bound on $F_-^{(t)}(\mX_{\text{easy}})$ holds with probability at least $1 - O\left(\frac{m}{\poly(d)} \right)$:
    \begin{equation}
    \begin{aligned}
        F_-^{(t)}(\mX_{\text{easy}}) 
        = & \sum_{(-,r)\in S^{(0)}_-(\vv_{+})}  \sum_{p\in\calP(\mX_{\text{easy}};\vv_{+})} \sigma\left( \langle \vw_{-,r}^{(t)}, \sqrt{1\pm\iota}\vv_{+} + \vzeta_{p} \rangle + b_{-,r}^{(t)}\right) \\
        & + \sum_{(-,r)\in S^{(0)}_-(\vv_{+,c})}  \sum_{p\in\calP(\mX_{\text{easy}};\vv_{+,c})} \sigma\left( \langle \vw_{-,r}^{(t)}, \sqrt{1\pm\iota}\vv_{+,c} + \vzeta_{p} \rangle + b_{-,r}^{(t)}\right) \\
        \le & \sum_{(-,r)\in S^{(0)}_-(\vv_{+})}  \sum_{p\in\calP(\mX_{\text{easy}};\vv_{+})} \sigma\left( \langle \vw_{-,r}^{(0)}, \sqrt{1\pm\iota}\vv_{+} + \vzeta_{p} \rangle + b_{-,r}^{(0)}\right) \\
        & + \sum_{(-,r)\in S^{(0)}_-(\vv_{+,c})}  \sum_{p\in\calP(\mX_{\text{easy}};\vv_{+,c})} \sigma\left( \langle \vw_{-,r}^{(0)}, \sqrt{1\pm\iota}\vv_{+,c} + \vzeta_{p} \rangle + b_{-,r}^{(0)}\right) \\
        < & O\left(s^* d^{c_0} \sigma_0\right) \\
        \le & o(1),
    \end{aligned}
    \end{equation}
    
    and on the other hand,
    \begin{equation}
    \begin{aligned}
        F_+^{(t)}(\mX_{\text{easy}}) 
        \ge & \sum_{(+,r)\in S^{*(0)}_+(\vv_{+})}  \sum_{p\in\calP(\mX_{\text{easy}};\vv_{+})} \sigma\left( \langle \vw_{+,r}^{(t)}, \sqrt{1\pm\iota}\vv_{+} + \vzeta_{p} \rangle + b_{+,r}^{(t)}\right) \\
        & + \sum_{(+,r)\in S^{*(0)}_+(\vv_{+,c})}  \sum_{p\in\calP(\mX_{\text{easy}};\vv_{+,c})} \sigma\left( \langle \vw_{+,r}^{(t)}, \sqrt{1\pm\iota}\vv_{+,c} + \vzeta_{p} \rangle + b_{+,r}^{(t)}\right) \\
        > & \Omega(1).
    \end{aligned}
    \end{equation}

    Therefore, $F_+^{(t)}(\mX_{\text{easy}}) \gg F_-^{(t)}(\mX_{\text{easy}})$, which completes the proof.
\end{proof}

\begin{lemma} [\cite{boas1971}]
\label{lemma: harmonic series}
The partial sum of harmonic series satisfies the following identity:
\begin{equation}
    \sum_{k=1}^{n-1}\frac{1}{k} = \ln(n) + \calE - \frac{1}{2n} - \epsilon_n
\end{equation}
where $\calE$ is the Euler–Mascheroni constant (approximately 0.58), and $\epsilon_n \in [0, 1/8n^2]$.
\end{lemma}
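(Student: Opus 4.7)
My plan is to compare $\int_k^{k+1} dx/x = \ln((k+1)/k)$ with its trapezoidal approximation $\tfrac{1}{2}(1/k + 1/(k+1))$ and then telescope the resulting identity. Concretely, since the integrand $\phi(x) = 1/x$ has $\phi''(x) = 2/x^3$, the standard trapezoidal error formula produces, for each $k \ge 1$, some $\xi_k \in (k, k+1)$ with
\begin{equation*}
\ln\!\left(\frac{k+1}{k}\right) \;=\; \frac{1}{2}\!\left(\frac{1}{k} + \frac{1}{k+1}\right) \;-\; \frac{1}{6\xi_k^{3}}.
\end{equation*}
This is the only nonelementary input; everything after it is bookkeeping.

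Next I would sum this identity over $k = 1, \ldots, n-1$. The left side telescopes to $\ln(n)$, and on the right the trapezoidal piece simplifies via $\tfrac{1}{2}\sum_{k=1}^{n-1}(1/k + 1/(k+1)) = \sum_{k=1}^{n-1} 1/k - 1/2 + 1/(2n)$, giving
\begin{equation*}
\sum_{k=1}^{n-1}\frac{1}{k} \;=\; \ln(n) + \Bigl(\tfrac{1}{2} + \sum_{k=1}^{\infty}\frac{1}{6\xi_k^{3}}\Bigr) \;-\; \frac{1}{2n} \;-\; \sum_{k=n}^{\infty}\frac{1}{6\xi_k^{3}}.
\end{equation*}
The infinite series converges absolutely because $\xi_k > k$ forces each summand to be $O(k^{-3})$. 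Letting $n \to \infty$ and matching the standard characterization $\calE = \lim_n\bigl(\sum_{k=1}^{n-1} 1/k - \ln n\bigr)$ identifies the bracketed quantity with $\calE$, so defining $\epsilon_n := \sum_{k=n}^{\infty} 1/(6\xi_k^{3})$ yields the claimed formula.

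The remaining task, which I view as the main---though still minor---obstacle, is verifying the two-sided bound $0 \le \epsilon_n \le 1/(8n^2)$ with the sharp constant $1/8$. Positivity is immediate since every summand is positive. For the upper bound I would first use $\xi_k > k$ to get $\epsilon_n < \tfrac{1}{6}\sum_{k=n}^{\infty} k^{-3}$, and then a standard integral tail bound $\sum_{k=n}^{\infty} k^{-3} \le n^{-3} + \int_n^{\infty} x^{-3}\,dx = n^{-3} + 1/(2n^2)$, which collapses to $\epsilon_n \le (n+2)/(12n^3)$. An elementary manipulation shows this is at most $1/(8n^2)$ precisely when $n \ge 4$; the finitely many cases $n = 1, 2, 3$ can be dispatched by direct numerical comparison against the known value of $\calE$. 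In the paper's usage (Appendix \ref{section: appendix, phase II coarse}) the argument of the lemma is always of order $C^{-1}t_0 \gg d$, so the small-$n$ regime never actually arises; alternatively, a sharper argument using that $\xi_k$ sits closer to the midpoint $k+\tfrac{1}{2}$ would recover the bound uniformly in $n$ without a case split, but this refinement is not needed here.
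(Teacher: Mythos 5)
The paper cites this result to Boas (1971) without proof, so there is no paper proof to compare against; you have simply supplied one, and it is essentially correct. Your derivation is sound: the trapezoidal-error decomposition, the telescoping, the reconstruction of $\mathcal{E}$ as the $n\to\infty$ limit, and the tail bound via $\xi_k>k$ are all fine, and I verified that $\tfrac{n+2}{12n^3}\le\tfrac{1}{8n^2}$ holds iff $n\ge 4$, with $\epsilon_1\approx 0.0772<1/8$, $\epsilon_2\approx 0.0204<1/32$, $\epsilon_3\approx 0.0091<1/72$ closing the remaining cases.

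One suggestion to tighten it and avoid the case split entirely: replace the mean-value form of the trapezoidal remainder (which only gives existence of some $\xi_k\in(k,k+1)$ and hence the slack estimate $\xi_k>k$) with the integral form
\begin{equation*}
\ln\!\left(\tfrac{k+1}{k}\right)-\tfrac12\!\left(\tfrac1k+\tfrac1{k+1}\right)
=\int_k^{k+1}\frac{(x-k)(x-k-1)}{x^{3}}\,dx
=-\int_k^{k+1}\frac{\{x\}\bigl(1-\{x\}\bigr)}{x^{3}}\,dx,
\end{equation*}
which follows by integrating by parts twice against $P(x)=\tfrac12(x-k)(x-k-1)$. Summing the tail gives $\epsilon_n=\int_n^{\infty}\{x\}(1-\{x\})x^{-3}\,dx$; since $0\le\{x\}(1-\{x\})\le\tfrac14$ pointwise, this yields $0\le\epsilon_n\le\tfrac14\int_n^{\infty}x^{-3}\,dx=\tfrac{1}{8n^2}$ uniformly in $n\ge1$, with the constant $1/8$ arising exactly from $\max_t t(1-t)=1/4$. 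This removes both the case split and the informal remark about $\xi_k$ sitting near the midpoint.

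One unrelated nitpick you inherited from the paper: $\mathcal{E}\approx 0.5772$, not $0.58$; this does not affect any of the asymptotics used in Appendix \ref{section: appendix, phase II coarse}.
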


\newpage

\section{Coarse-grained SGD, Poly-time properties}
In this section, set $T_e \in \poly(d)$.

Please note that we are performing stochastic gradient descent on easy samples only.

\begin{theorem}
\label{thm: sgd, universal nonact properties}
Fix any $t\in[0, T_e]$.
\begin{enumerate}
    \item (Non-activation invariance) For any $\tau \ge t$, with probability at least $ 1-O\left( \frac{mk_+NPt}{\poly(d)}\right)$, any feature $\vv \in \{\vv_{+,c}\}_{c=1}^{k_+} \cup \{\vv_{-,c}\}_{c=1}^{k_-}\cup\{\vv_+,\vv_-\}$, any $t'\le t$, $(+,r)\notin S_+^{(0)}(\vv)$ and $\vv$-dominated patch sample $\vx_{n,p}^{(\tau)} = \alpha_{n,p}^{(\tau)}\vv + \vzeta_{n,p}^{(\tau)}$, the following holds:
    \begin{equation}
        \sigma\left(\langle \vw_{+,r}^{(t')}, \vx_{n,p}^{(\tau)} \rangle + b_{+,r}^{(t')}\right) = 0
    \end{equation}
    
    \item (Non-activation on noise patches) For any $\tau \ge t$, with probability at least $ 1-O\left( \frac{mNPt}{\poly(d)}\right)$, for every $t'\le t$, $r\in[m]$ and noise patch $\vx_{n,p}^{(\tau)} = \vzeta_{n,p}^{(\tau)}$, the following holds: 
    \begin{equation}
        \sigma\left(\langle \vw_{+,r}^{(t')}, \vx_{n,p}^{(\tau)} \rangle + b_{+,r}^{(t')}\right) = 0
    \end{equation}

    \item (Off-diagonal nonpositive growth) For any $\tau \ge t$, with probability at least $ 1-O\left( \frac{mk_+NPt}{\poly(d)}\right)$, for any $t'\le t$, any feature $\vv \in \{\vv_{-,c}\}_{c=1}^{k_-}\cup\{\vv_-\}$, any $(+,r) \in S^{(0)}_+(\vv)$ and $\vv$-dominated patch $\vx_{n,p}^{(\tau)} = \alpha_{n,p}^{(\tau)}\vv + \vzeta_{n,p}^{(\tau)}$, $\sigma\left(\langle \vw_{+,r}^{(t')}, \vx_{n,p}^{(\tau)} \rangle + b_{+,r}^{(t')}\right) \le \sigma\left(\langle \vw_{+,r}^{(0)}, \vx_{n,p}^{(\tau)} \rangle + b_{+,r}^{(0)}\right)$.
\end{enumerate}
\end{theorem}
\begin{proof}

\textcolor{blue}{\textbf{Base case $t = 0$.}}

\textcolor{brown}{\textit{1. (Nonactivation invariance)}}

Choose any $\tau \ge 0$, $\vv^*$ from the set $\{\vv_{+,c}\}_{c=1}^{k_+} \cup \{\vv_{-,c}\}_{c=1}^{k_-}\cup\{\vv_+,\vv_-\}$. We will work with neuron sets in the ``$+$'' class in this proof; the ``$-$''-class case can be handled in the same way.
    
First, we need to show that, for every $n$ such that $\vert \calP(\mX_n^{(\tau)}; \vv^*)\vert > 0$ and $p\in\calP(\mX_n^{(\tau)}; \vv^*)$, for every $(+,r)$ neuron index, 
\begin{equation}
    \langle \vw_{+,r}^{(0)}, \vv^* \rangle < \sigma_0 \sqrt{4 + 2c_0} \sqrt{\ln(d) - \frac{1}{\ln^5(d)}} \implies \sigma\left(\langle \vw_{+,r}^{(0)}, \vx_{n,p}^{(\tau)}\rangle + b_{+,r}^{(0)} \right) = 0
\end{equation}

This is indeed true. The following holds with probability at least $1 - O\left(\frac{mNP}{\poly(d)}\right)$ for all $(+,r)\notin S_+^{(0)}(\vv)$ and all such $\vx_{n,p}^{(\tau)}$:
\begin{equation}
\begin{aligned}
    \langle \vw_{+,r}^{(0)}, \vx_{n,p}^{(\tau)}\rangle + b_{+,r}^{(0)} 
    \le & \sigma_0 \sqrt{1+\iota} \sqrt{(4+2c_0)(\ln(d) - 1/\ln^5(d))} + O\left(\frac{\sigma_0}{\ln^{9}(d)}\right) - \sqrt{4 + 2c_0}\sqrt{\ln(d) } \sigma_0\\
    = & \sigma_0 \left(\frac{(4+2c_0)(1 + \iota)(\ln(d) - 1/\ln^5(d)) - (4+2c_0)\ln(d)}{\sqrt{(4+2c_0)(\ln(d) - 1/\ln^5(d))} + \sqrt{4 + 2c_0}\sqrt{\ln(d) }}   + O\left(\frac{1}{\ln^{9}(d)}\right)\right) \\
    = & \sigma_0 \left(\frac{(4+2c_0)\iota\ln(d) - (1+\iota)/\ln^5(d)}{\sqrt{(4+2c_0)(\ln(d) - 1/\ln^5(d))} + \sqrt{4 + 2c_0}\sqrt{\ln(d) }}   + O\left(\frac{1}{\ln^{9}(d)}\right)\right) \\
    < & 0,
\end{aligned}
\end{equation}

The first equality holds by utilizing the identity $a - b = \frac{a^2 - b^2}{a + b}$. As a consequence, $\sigma(\langle \vw_{+,r}^{(0)}, \vx_{n,p}^{(\tau)}\rangle + b_{+,r}^{(0)}) = 0$.

\textcolor{brown}{\textit{2. (Non-activation on noise patches)}}
Invoking Lemma \ref{lemma: independent gaussian vector inner product concentration}, for any $\tau \ge 0$, with probability at least $1-O\left(\frac{mNP}{\poly(d)}\right)$, we have for all possible choices of $r\in[m]$ and the noise patches $\vx_{n,p}^{(\tau)} = \vzeta_{n,p}^{(\tau)}$:
\begin{equation}
    \left\vert \langle \vw_{+,r}^{(0)}, \vzeta_{n,p}^{(\tau)} \rangle \right\vert \le O(\sigma_0\sigma_{\zeta} \sqrt{d\ln(d)}) \le O\left(\frac{\sigma_0}{\ln^{9}(d)}\right) \ll b_{+,r}^{(0)}.
\end{equation}

Therefore, no neuron can activate on the noise patches at time $t=0$.

\textcolor{brown}{\textit{3. (Off-diagonal nonpositive growth)}}
This point is trivially true at $t=0$.

\vspace{4ex}
\textcolor{blue}{\textbf{Inductive step}}: we assume the induction hypothesis for $t\in [0, T]$ (with $T < T_e$ of course), and prove the statements for $t = T+1$.

\textcolor{brown}{\textit{1. (Nonactivation invariance)}}

Choose any $\vv^*$ from the set $\{\vv_{+,c}\}_{c=1}^{k_+} \cup \{\vv_{-,c}\}_{c=1}^{k_-}\cup\{\vv_+,\vv_-\}$. We will work with neuron sets in the ``$+$'' class in this proof; the ``$-$''-class case can be handled in the same way. 

We need to prove that given $\tau \ge T+1$, with probability at least $ 1-O\left( \frac{mNP(T+1)}{\poly(d)}\right)$, for every $t'\le T+1$, $(+,r)$ neuron index and $\vv^*$-dominated patch $\vx_{n,p}^{(\tau)}$,
\begin{equation}
    (+,r)\notin S_+^{(0)}(\vv^*) \implies \sigma\left(\langle \vw_{+,r}^{(t')}, \vx_{n,p}^{(\tau)}\rangle + b_{+,r}^{(t')} \right) = 0.
\end{equation}

Conditioning on the (high-probability) event of the induction hypothesis of point 1., the following is already true on all the $\vv^*$-dominated patches at time $t' \le T$:
\begin{equation}
    (+,r)\notin S_+^{(0)}(\vv^*) \implies \sigma\left(\langle \vw_{+,r}^{(t')}, \vx_{n,p}^{(T)}\rangle + b_{+,r}^{(t')} \right) = 0.
\end{equation}
In particular, $\sigma\left(\langle \vw_{+,r}^{(T)}, \vx_{n,p}^{(T)}\rangle + b_{+,r}^{(T)} \right) = 0$.

In other words, no $(+,r)\notin S_+^{(0)}(\vv^*)$ can be updated on the $\vv^*$-dominated patches at time $t=T$. Furthermore, the induction hypothesis of point 2. also states that the network cannot activate on any noise patch $\vx_{n,p}^{(T)} = \vzeta_{n,p}^{(T)}$ with probability at least $ 1-O\left( \frac{mNPT}{\poly(d)}\right)$. Therefore, the neuron update for those $(+,r)\notin S_+^{(0)}(\vv^*)$ takes the form
\begin{equation}
\begin{aligned}
    \Delta \vw_{+,r}^{(T)} 
    = & \frac{\eta}{NP} \sum_{\vv\in\calC(\vv^*)}\sum_{n=1}^N \mathbbm{1}\{|\calP(\mX_n^{(T)}; \vv)|>0\} [\mathbbm{1}\{y_n=+\}-\text{logit}_+^{(T)}(\mX_n^{(T)})] \\
    & \times  \sum_{p\in\calP(\mX_n^{(T)}; \vv)} \mathbbm{1}\{\langle \vw_{+,r}^{(T)}, \alpha_{n,p}^{(T)} \vv +\vzeta_{n,p}^{(T)}\rangle + b_{c,r}^{(T)} > 0\} \left(\alpha_{n,p}^{(T)} \vv +\vzeta_{n,p}^{(T)}\right)
\end{aligned}
\end{equation}

Now we can invoke Lemma \ref{lemma: sgd phase 1, nonact invar, t=1} and obtain that, with probability at least $ 1-O\left( \frac{mNP}{\poly(d)}\right)$, the following holds for all relevant neurons and $\vv^*$-dominated patches:
\begin{equation}
    \langle \Delta \vw_{+,r}^{(T)} , \vx_{n,p}^{(\tau)} \rangle + \Delta b_{+,r}^{(T)} < 0.
\end{equation}

In conclusion, with $\tau \ge T+1$, with probability at least $ 1-O\left( \frac{mNP}{\poly(d)}\right)$, for every $(+,r)\notin S_+^{(0)}(\vv^*)$ and relevant $(n,p)$'s,
\begin{equation}
    \langle \vw_{+,r}^{(T)} + \Delta \vw_{+,r}^{(T)}, \vx_{n,p}^{(\tau)}\rangle + b_{+,r}^{(T)} + \Delta b_{+,r}^{(T)} = \langle \vw_{+,r}^{(T+1)}, \vx_{n,p}^{(\tau)}\rangle + b_{+,r}^{(T+1)} < 0,
\end{equation}
which leads to $\langle \vw_{+,r}^{(t')}, \vx_{n,p}^{(\tau)}\rangle + b_{+,r}^{(t')} < 0$ for all $t' \le T+1$ with probability at least $ 1-O\left( \frac{mk_+NP(T+1)}{\poly(d)}\right)$ (also taking union bound over all the possible choices of $\vv^*$). This finishes the inductive step for point 1.

\textcolor{brown}{\textit{2. (Non-activation on noise patches)}}

Relying on the event of the induction hypothesis, for any $\tau \ge T$, the following holds for every $r\in[m]$ and noise patch $\vx_{n,p}^{(\tau)} = \vzeta_{n,p}^{(\tau)}$, 
\begin{equation}
    \langle \vw_{+,r}^{(T)}, \vx_{n,p}^{(\tau)} \rangle + b_{+,r}^{(T)} < 0.
\end{equation}

Conditioning on this high-probability event, this means no neuron $\vw_{+,r}^{(T)}$ can be updated on the noise patches.  Denoting the set of features $\calM=\{\vv_{+,c}\}_{c=1}^{k_+} \cup \{\vv_{-,c}\}_{c=1}^{k_-}\cup\{\vv_+,\vv_-\}$, for every $r\in[m]$, its update is reduced to
\begin{equation}
\begin{aligned}
    \Delta \vw_{+,r}^{(T)} 
    = & \frac{\eta}{NP} \sum_{\vv\in\calM}\sum_{n=1}^N \mathbbm{1}\{|\calP(\mX_n^{(T)}; \vv)|>0\} [\mathbbm{1}\{y_n=+\}-\text{logit}_+^{(T)}(\mX_n^{(T)})] \\
    & \times  \sum_{p\in\calP(\mX_n^{(T)}; \vv)} \mathbbm{1}\{\langle \vw_{+,r}^{(T)}, \alpha_{n,p}^{(T)} \vv +\vzeta_{n,p}^{(T)}\rangle + b_{c,r}^{(T)} > 0\} \left(\alpha_{n,p}^{(T)} \vv +\vzeta_{n,p}^{(T)}\right),
\end{aligned}
\end{equation}

Invoking Lemma \ref{lemma: sgd phase 1, nonact on noise}, we have that, for any  $\tau \ge T+1$, the following inequality holds with probability at least $ 1-O\left( \frac{mNP}{\poly(d)}\right)$ for every $r\in[m]$ and noise patches, 
\begin{equation}
    \langle \Delta \vw_{+,r}^{(T)} , \vx_{n,p}^{(\tau)} \rangle + \Delta b_{+,r}^{(T)} < 0.
\end{equation}

Consequently, for any $\tau \ge T+1$, the following inequality holds with probability at least $ 1-O\left( \frac{mNP}{\poly(d)}\right)$ for every $r\in[m]$ and noise patches $\vx_{n,p}^{(\tau)} = \vzeta_{n,p}^{(\tau)}$:
\begin{equation}
    \langle \vw_{+,r}^{(T)} + \Delta \vw_{+,r}^{(T)}, \vx_{n,p}^{(\tau)}\rangle + b_{+,r}^{(T)} + \Delta b_{+,r}^{(T)} = \langle \vw_{+,r}^{(T+1)}, \vx_{n,p}^{(\tau)}\rangle + b_{+,r}^{(T+1)} < 0.
\end{equation}
 This finishes the inductive step for point 2.

\textcolor{brown}{\textit{3. (Off-diagonal nonpositive growth)}}
Choose any $\vv^*\in\{\vv_-\}\cup\{\vv_{-,c}\}_{c=1}^{k_-}$.

Choose any neuron with index $(+,r)$. Similar to our proof for point 2., we know that its update, when taken inner product with a $\vv^*$-dominated patch $\vx_{n,p}^{(\tau)} = \sqrt{1\pm\iota}\vv^* + \vzeta_{n,p}^{(\tau)}$, has to take the form
\begin{equation}
\begin{aligned}
    & \langle \Delta \vw_{+,r}^{(T)} , \sqrt{1\pm\iota}\vv^* + \vzeta_{n,p}^{(\tau)} \rangle \\
    = & \frac{\eta}{NP} \sum_{\vv\in\calM}\sum_{n=1}^N \mathbbm{1}\{|\calP(\mX_n^{(T)}; \vv)|>0\} [\mathbbm{1}\{y_n=+\}-\text{logit}_+^{(T)}(\mX_n^{(T)})] \\
    & \times  \sum_{p\in\calP(\mX_n^{(T)}; \vv)} \mathbbm{1}\{\langle \vw_{+,r}^{(T)}, \alpha_{n,p}^{(T)} \vv +\vzeta_{n,p}^{(T)}\rangle + b_{+,r}^{(T)} > 0\} \langle \alpha_{n,p}^{(T)} \vv +\vzeta_{n,p}^{(T)}, \sqrt{1\pm\iota}\vv^* + \vzeta_{n,p}^{(\tau)} \rangle \\
    = & \frac{\eta}{NP} \sum_{\vv\in\calM-\{\vv^*\}}\sum_{n=1}^N \mathbbm{1}\{|\calP(\mX_n^{(T)}; \vv)|>0\} [\mathbbm{1}\{y_n=+\}-\text{logit}_+^{(T)}(\mX_n^{(T)})] \\
    & \times  \sum_{p\in\calP(\mX_n^{(T)}; \vv)} \mathbbm{1}\{\langle \vw_{+,r}^{(T)}, \alpha_{n,p}^{(T)} \vv +\vzeta_{n,p}^{(T)}\rangle + b_{+,r}^{(T)} > 0\} \left(\langle \vzeta_{n,p}^{(T)}, \sqrt{1\pm\iota}\vv^* \rangle + \langle \alpha_{n,p}^{(T)} \vv +\vzeta_{n,p}^{(T)}, \vzeta_{n,p}^{(\tau)} \rangle\right)\\
    & - \frac{\eta}{NP} \sum_{n=1}^N \mathbbm{1}\{|\calP(\mX_n^{(T)}; \vv^*)|>0\} [\text{logit}_+^{(T)}(\mX_n^{(T)})] \\
    & \times  \sum_{p\in\calP(\mX_n^{(T)}; \vv)} \mathbbm{1}\{\langle \vw_{+,r}^{(T)}, \alpha_{n,p}^{(T)} \vv +\vzeta_{n,p}^{(T)}\rangle + b_{+,r}^{(T)} > 0\} \langle \alpha_{n,p}^{(T)} \vv^* +\vzeta_{n,p}^{(T)}, \sqrt{1\pm\iota}\vv^* + \vzeta_{n,p}^{(\tau)} \rangle
\end{aligned}
\end{equation}

With probability at least $ 1-O\left( \frac{NP}{\poly(d)}\right)$, $\langle \alpha_{n,p}^{(T)} \vv^* +\vzeta_{n,p}^{(T)}, \sqrt{1\pm\iota}\vv^* + \vzeta_{n,p}^{(\tau)} \rangle > 0$, and $\langle \vzeta_{n,p}^{(T)}, \sqrt{1\pm\iota}\vv^* \rangle + \langle \alpha_{n,p}^{(T)} \vv +\vzeta_{n,p}^{(T)}, \vzeta_{n,p}^{(\tau)} \rangle < O(1/\ln^9(d))$. Therefore, 
\begin{equation}
\begin{aligned}
    \langle \Delta \vw_{+,r}^{(T)} , \vv^* \rangle
    < & \frac{\eta}{NP} \sum_{\vv\in\calM-\{\vv^*\}}\sum_{n=1}^N \mathbbm{1}\{|\calP(\mX_n^{(T)}; \vv)|>0\} [\mathbbm{1}\{y_n=+\}-\text{logit}_+^{(T)}(\mX_n^{(T)})] \\
    & \times  \sum_{p\in\calP(\mX_n^{(T)}; \vv)} \mathbbm{1}\{\langle \vw_{+,r}^{(T)}, \alpha_{n,p}^{(T)} \vv +\vzeta_{n,p}^{(T)}\rangle + b_{+,r}^{(T)} > 0\} O\left( \frac{1}{\ln^9(d)}\right)
\end{aligned}
\end{equation}
Invoking Lemma \ref{lemma: sgd phase 1, nonact on noise}, we know that
\begin{equation}
\begin{aligned}
    &\Delta b_{+,r}^{(T)} \\
    \le & -\frac{1}{\ln^5(d)} \frac{\eta}{NP}\left(\sqrt{1-\iota} - \frac{1}{\ln^9(d)}\right) \\
    & \times \Bigg( \sum_{\vv\in\calM}\sum_{n=1}^N \mathbbm{1}\{|\calP(\mX_n^{(T)}; \vv)|>0\} \left\vert\mathbbm{1}\{y_n=+\}-\text{logit}_+^{(T)}(\mX_n^{(T)})\right\vert \\
    & \times \sum_{p\in\calP(\mX_n^{(T)}; \vv)} \mathbbm{1}\{\langle \vw_{+,r}^{(T)}, \alpha_{n,p}^{(T)} \vv +\vzeta_{n,p}^{(T)}\rangle + b_{+,r}^{(T)} > 0\} \Bigg).
\end{aligned}
\end{equation}

It follows that 
\begin{equation}
\begin{aligned}
    & \langle \Delta \vw_{+,r}^{(T)} , \sqrt{1\pm\iota}\vv^* + \vzeta_{n,p}^{(\tau)}\rangle + \Delta b_{+,r}^{(T)} \\
    < & O\left( \frac{1}{\ln^9(d)}\right)\frac{\eta}{NP} \Bigg(\sum_{\vv\in\calM-\{\vv^*\}}\sum_{n=1}^N \mathbbm{1}\{|\calP(\mX_n^{(T)}; \vv)|>0\} [\mathbbm{1}\{y_n=+\}-\text{logit}_+^{(T)}(\mX_n^{(T)})] \\
    & \times  \sum_{p\in\calP(\mX_n^{(T)}; \vv)} \mathbbm{1}\{\langle \vw_{+,r}^{(T)}, \alpha_{n,p}^{(T)} \vv +\vzeta_{n,p}^{(T)}\rangle + b_{c,r}^{(T)} > 0\} \Bigg)  \\
    & - \Omega\left( \frac{1}{\ln^5(d)}\right)  \frac{\eta}{NP}  \Bigg( \sum_{\vv\in\calM}\sum_{n=1}^N \mathbbm{1}\{|\calP(\mX_n^{(T)}; \vv)|>0\} \left\vert\mathbbm{1}\{y_n=+\}-\text{logit}_+^{(T)}(\mX_n^{(T)})\right\vert \\
    & \times \sum_{p\in\calP(\mX_n^{(T)}; \vv)} \mathbbm{1}\{\langle \vw_{+,r}^{(T)}, \alpha_{n,p}^{(T)} \vv +\vzeta_{n,p}^{(T)}\rangle + b_{c,r}^{(T)} > 0\} \Bigg) \\
    < & 0.
\end{aligned}
\end{equation}

Consequently, 
\begin{equation}
\begin{aligned}
    &\sigma\left( \langle \vw_{+,r}^{(T+1)} , \sqrt{1\pm\iota}\vv^* + \vzeta_{n,p}^{(\tau)}\rangle + b_{+,r}^{(T+1)}\right) \\
    = & \sigma\left(\langle  \vw_{+,r}^{(T)} , \sqrt{1\pm\iota}\vv^* + \vzeta_{n,p}^{(\tau)}\rangle +  b_{+,r}^{(T)} + \langle \Delta \vw_{+,r}^{(T)} , \sqrt{1\pm\iota}\vv^* + \vzeta_{n,p}^{(\tau)}\rangle + \Delta b_{+,r}^{(T)} \right) \\
    \le & \sigma\left(\langle  \vw_{+,r}^{(T)} , \sqrt{1\pm\iota}\vv^* + \vzeta_{n,p}^{(\tau)}\rangle +  b_{+,r}^{(T)} \right) \\
    \le & \sigma\left(\langle  \vw_{+,r}^{(0)} , \sqrt{1\pm\iota}\vv^* + \vzeta_{n,p}^{(\tau)}\rangle +  b_{+,r}^{(0)} \right).
\end{aligned}
\end{equation}
\end{proof}

\begin{corollary}[Bias update upper bound]
\label{coro: coarse training, bias upper bd}
Choose any $T_e \le \poly(d)$. With probability at least $ 1-O\left( \frac{mk_+NPT_e}{\poly(d)}\right)$, for all $t \in [0,T_e]$, any neuron $\vw_{+,r}$, and any $\vv\in\calU_{+,r}^{(0)}$,
\begin{equation}
    \Delta b_{+,r}^{(t)} < -\Omega\left(\frac{\polyln(d)}{\ln^5(d)}\right)\left\vert\langle \Delta \vw_{+,r}^{(t)}, \vzeta^* \rangle \right\vert.
\end{equation}
    
\end{corollary}
\begin{proof}
    Conditioning on the high-probability events of Theorem \ref{thm: sgd, universal nonact properties} above, we know that for any neuron indexed $(+,r)$, at any time $t \le T_e$, its update takes the form 
    \begin{equation}
    \begin{aligned}
        \Delta \vw_{+,r}^{(t)} 
        = & \frac{\eta}{NP} \sum_{\vv\in\calU_{+,r}^{(0)}}\sum_{n=1}^N \mathbbm{1}\{|\calP(\mX_n^{(t)}; \vv)|>0\} [\mathbbm{1}\{y_n=+\}-\text{logit}_+^{(t)}(\mX_n^{(t)})] \\
        & \times  \sum_{p\in\calP(\mX_n^{(t)}; \vv)} \mathbbm{1}\{\langle \vw_{+,r}^{(t)}, \alpha_{n,p}^{(t)} \vv +\vzeta_{n,p}^{(t)}\rangle + b_{c,r}^{(t)} > 0\} \left(\alpha_{n,p}^{(t)} \vv +\vzeta_{n,p}^{(t)}\right),
    \end{aligned}
    \end{equation}

    It follows that, with probability at least $ 1-O\left( \frac{1}{\poly(d)}\right)$,
    \begin{equation}
    \begin{aligned}
        \left\vert \langle \Delta \vw_{+,r}^{(t)}, \vzeta^* \rangle \right\vert
        = & \Bigg\vert \frac{\eta}{NP} \sum_{\vv\in\calU_{+,r}^{(0)}}\sum_{n=1}^N \mathbbm{1}\{|\calP(\mX_n^{(t)}; \vv)|>0\} [\mathbbm{1}\{y_n=+\}-\text{logit}_+^{(t)}(\mX_n^{(t)})] \\
        & \times  \sum_{p\in\calP(\mX_n^{(t)}; \vv)} \mathbbm{1}\{\langle \vw_{+,r}^{(t)}, \alpha_{n,p}^{(t)} \vv +\vzeta_{n,p}^{(t)}\rangle + b_{c,r}^{(t)} > 0\} \langle \alpha_{n,p}^{(t)} \vv +\vzeta_{n,p}^{(t)}, \vzeta^* \rangle \Bigg\vert \\
        \le &  \frac{\eta}{NP} \sum_{\vv\in\calU_{+,r}^{(0)}}\sum_{n=1}^N \mathbbm{1}\{|\calP(\mX_n^{(t)}; \vv)|>0\} \left\vert\mathbbm{1}\{y_n=+\}-\text{logit}_+^{(t)}(\mX_n^{(t)})\right\vert \\
        & \times  \sum_{p\in\calP(\mX_n^{(t)}; \vv)} \mathbbm{1}\{\langle \vw_{+,r}^{(t)}, \alpha_{n,p}^{(t)} \vv +\vzeta_{n,p}^{(t)}\rangle + b_{c,r}^{(t)} > 0\} O\left(\frac{1}{\polyln(d)} \right)
    \end{aligned}
    \end{equation}

    On the other hand, 
    \begin{equation}
    \begin{aligned}
        \left\| \Delta \vw_{+,r}^{(t)} \right\|_2
        \ge & \Bigg\| \frac{\eta}{NP} \sum_{\vv\in\calU_{+,r}^{(0)}}\sum_{n=1}^N \mathbbm{1}\{|\calP(\mX_n^{(t)}; \vv)|>0\} [\mathbbm{1}\{y_n=+\}-\text{logit}_+^{(t)}(\mX_n^{(t)})] \\
        & \times  \sum_{p\in\calP(\mX_n^{(t)}; \vv)} \mathbbm{1}\{\langle \vw_{+,r}^{(t)}, \alpha_{n,p}^{(t)} \vv +\vzeta_{n,p}^{(t)}\rangle + b_{c,r}^{(t)} > 0\} \alpha_{n,p}^{(t)} \vv  \Bigg\|_2 \\
        & - \Bigg\| \frac{\eta}{NP} \sum_{\vv\in\calU_{+,r}^{(0)}}\sum_{n=1}^N \mathbbm{1}\{|\calP(\mX_n^{(t)}; \vv)|>0\} [\mathbbm{1}\{y_n=+\}-\text{logit}_+^{(t)}(\mX_n^{(t)})] \\
        & \times  \sum_{p\in\calP(\mX_n^{(t)}; \vv)} \mathbbm{1}\{\langle \vw_{+,r}^{(t)}, \alpha_{n,p}^{(t)} \vv +\vzeta_{n,p}^{(t)}\rangle + b_{c,r}^{(t)} > 0\} \vzeta_{n,p}^{(t)} \Bigg\|_2  \\
        \ge & \frac{\eta}{NP} \sum_{\vv\in\calU_{+,r}^{(0)}}\sum_{n=1}^N \mathbbm{1}\{|\calP(\mX_n^{(t)}; \vv)|>0\} \left\vert\mathbbm{1}\{y_n=+\}-\text{logit}_+^{(t)}(\mX_n^{(t)})\right\vert \\
        & \times  \sum_{p\in\calP(\mX_n^{(t)}; \vv)} \mathbbm{1}\{\langle \vw_{+,r}^{(t)}, \alpha_{n,p}^{(t)} \vv +\vzeta_{n,p}^{(t)}\rangle + b_{c,r}^{(t)} > 0\} \left(\sqrt{1 - \iota} - O\left(\frac{1}{\ln^9(d)} \right) \right)
    \end{aligned}
    \end{equation}

    Clearly,
    \begin{equation}
        \left\| \Delta \vw_{+,r}^{(t)} \right\|_2 \ge \Omega\left(\polyln(d)\left\vert \langle \Delta \vw_{+,r}^{(t)}, \vzeta^* \rangle \right\vert \right).
    \end{equation}

    The conclusion follows.
\end{proof}

\begin{lemma} [Nonactivation invariance]
\label{lemma: sgd phase 1, nonact invar, t=1}
    Let the assumptions in Theorem \ref{prop: phase 1 sgd induction} hold. 
    
    Denote the set of features $\calC(\vv^*)=\{\vv_{+,c}\}_{c=1}^{k_+} \cup \{\vv_{-,c}\}_{c=1}^{k_-}\cup\{\vv_+,\vv_-\} - \{\vv^*\}$. If the update term for neuron $\vw_{+,r}^{(t)}$ can be written as follows
    \begin{equation}
    \begin{aligned}
        \Delta \vw_{+,r}^{(t)} 
        = & \frac{\eta}{NP} \sum_{\vv\in\calC(\vv^*)}\sum_{n=1}^N \mathbbm{1}\{|\calP(\mX_n^{(t)}; \vv)|>0\} [\mathbbm{1}\{y_n=+\}-\text{logit}_+^{(t)}(\mX_n^{(t)})] \\
        & \times  \sum_{p\in\calP(\mX_n^{(t)}; \vv)} \mathbbm{1}\{\langle \vw_{+,r}^{(t)}, \alpha_{n,p}^{(t)} \vv +\vzeta_{n,p}^{(t)}\rangle + b_{c,r}^{(t)} > 0\} \left(\alpha_{n,p}^{(t)} \vv +\vzeta_{n,p}^{(t)}\right),
    \end{aligned}
    \end{equation}
    then given any $\tau > t$, the following inequality holds with probability at least $ 1-O\left( \frac{NP}{\poly(d)}\right)$ for all $\vv^*$-dominated patch $\vx_{n,p}^{(\tau)}$:
    \begin{equation}
        \langle \Delta \vw_{+,r}^{(t)} , \vx_{n,p}^{(\tau)} \rangle + \Delta b_{+,r}^{(t)} < 0
    \end{equation}
\end{lemma}
\begin{proof}

Let us fix a neuron $\vw_{+,r}$ satisfying the update expression in the Lemma statement, and fix some $\tau > t$.

Firstly, the bias update for this neuron can be upper bounded via the reverse triangle inequality:
\begin{equation}
\begin{aligned}
    \Delta b_{+,r}^{(t)} 
    = & -\frac{\left\| \Delta \vw_{+,r}^{(t)} \right\|_2}{\ln^5(d)} \\
    \le & -\frac{1}{\ln^5(d)} \frac{\eta}{NP} \Bigg\|\sum_{\vv\in\calC(\vv^*)}\sum_{n=1}^N \mathbbm{1}\{|\calP(\mX_n^{(t)}; \vv)|>0\} [\mathbbm{1}\{y_n=+\}-\text{logit}_+^{(t)}(\mX_n^{(t)})] \\
    & \times  \sum_{p\in\calP(\mX_n^{(t)}; \vv)} \mathbbm{1}\{\langle \vw_{+,r}^{(t)}, \alpha_{n,p}^{(t)} \vv +\vzeta_{n,p}^{(t)}\rangle + b_{c,r}^{(t)} > 0\} \alpha_{n,p}^{(t)} \vv \Bigg\|_2 \\
    & + \frac{1}{\ln^5(d)} \frac{\eta}{NP} \Bigg\|\sum_{\vv\in\calC(\vv^*)}\sum_{n=1}^N \mathbbm{1}\{|\calP(\mX_n^{(t)}; \vv)|>0\} [\mathbbm{1}\{y_n=+\}-\text{logit}_+^{(t)}(\mX_n^{(t)})] \\
    & \times  \sum_{p\in\calP(\mX_n^{(t)}; \vv)} \mathbbm{1}\{\langle \vw_{+,r}^{(t)}, \alpha_{n,p}^{(t)} \vv +\vzeta_{n,p}^{(t)}\rangle + b_{c,r}^{(t)} > 0\} \vzeta_{n,p}^{(t)} \Bigg\|_2 \\
\end{aligned}
\end{equation}

Let us further upper bound the two $\|\cdot\|_2$ terms separately. Firstly,
\begin{equation}
\begin{aligned}
    & \Bigg\|\sum_{\vv\in\calC(\vv^*)}\sum_{n=1}^N \mathbbm{1}\{|\calP(\mX_n^{(t)}; \vv)|>0\} [\mathbbm{1}\{y_n=+\}-\text{logit}_+^{(t)}(\mX_n^{(t)})] \\
    & \times \sum_{p\in\calP(\mX_n^{(t)}; \vv)} \mathbbm{1}\{\langle \vw_{+,r}^{(t)}, \alpha_{n,p}^{(t)} \vv +\vzeta_{n,p}^{(t)}\rangle + b_{c,r}^{(t)} > 0\} \alpha_{n,p}^{(t)} \vv \Bigg\|_2 \\
    = & \sum_{\vv\in\calC(\vv^*)} \Bigg\|\sum_{n=1}^N \mathbbm{1}\{|\calP(\mX_n^{(t)}; \vv)|>0\} [\mathbbm{1}\{y_n=+\}-\text{logit}_+^{(t)}(\mX_n^{(t)})] \\
    & \times \sum_{p\in\calP(\mX_n^{(t)}; \vv)} \mathbbm{1}\{\langle \vw_{+,r}^{(t)}, \alpha_{n,p}^{(t)} \vv +\vzeta_{n,p}^{(t)}\rangle + b_{c,r}^{(t)} > 0\} \alpha_{n,p}^{(t)} \vv \Bigg\|_2 \\
    = & \sum_{\vv\in\calC(\vv^*)} \sum_{n=1}^N \mathbbm{1}\{|\calP(\mX_n^{(t)}; \vv)|>0\} \left\vert\mathbbm{1}\{y_n=+\}-\text{logit}_+^{(t)}(\mX_n^{(t)})\right\vert \\
    & \times \sum_{p\in\calP(\mX_n^{(t)}; \vv)} \mathbbm{1}\{\langle \vw_{+,r}^{(t)}, \alpha_{n,p}^{(t)} \vv +\vzeta_{n,p}^{(t)}\rangle + b_{c,r}^{(t)} > 0\} \alpha_{n,p}^{(t)} \left\| \vv \right\|_2 \\
    \ge & \sum_{\vv\in\calC(\vv^*)} \sum_{n=1}^N \mathbbm{1}\{|\calP(\mX_n^{(t)}; \vv)|>0\} \left\vert\mathbbm{1}\{y_n=+\}-\text{logit}_+^{(t)}(\mX_n^{(t)})\right\vert \\
    & \times \sum_{p\in\calP(\mX_n^{(t)}; \vv)} \mathbbm{1}\{\langle \vw_{+,r}^{(t)}, \alpha_{n,p}^{(t)} \vv +\vzeta_{n,p}^{(t)}\rangle + b_{c,r}^{(t)} > 0\} \sqrt{1-\iota}
\end{aligned}
\end{equation}

Secondly, with probability at least $1-O\left(\frac{NP}{\poly(d)}\right)$,
\begin{equation}
\begin{aligned}
    & \Bigg\|\sum_{\vv\in\calC(\vv^*)}\sum_{n=1}^N \mathbbm{1}\{|\calP(\mX_n^{(t)}; \vv)|>0\} [\mathbbm{1}\{y_n=+\}-\text{logit}_+^{(t)}(\mX_n^{(t)})] \\
    & \times \sum_{p\in\calP(\mX_n^{(t)}; \vv)} \mathbbm{1}\{\langle \vw_{+,r}^{(t)}, \alpha_{n,p}^{(t)} \vv +\vzeta_{n,p}^{(t)}\rangle + b_{c,r}^{(t)} > 0\} \vzeta_{n,p}^{(t)}\Bigg\|_2 \\
    \le & \sum_{\vv\in\calC(\vv^*)}\sum_{n=1}^N \mathbbm{1}\{|\calP(\mX_n^{(t)}; \vv)|>0\} \left\vert\mathbbm{1}\{y_n=+\}-\text{logit}_+^{(t)}(\mX_n^{(t)})\right\vert \\
    & \times \sum_{p\in\calP(\mX_n^{(t)}; \vv)} \mathbbm{1}\{\langle \vw_{+,r}^{(t)}, \alpha_{n,p}^{(t)} \vv +\vzeta_{n,p}^{(t)}\rangle + b_{c,r}^{(t)} > 0\} \left\| \vzeta_{n,p}^{(t)}\right\|_2 \\
    \le & \sum_{\vv\in\calC(\vv^*)}\sum_{n=1}^N \mathbbm{1}\{|\calP(\mX_n^{(t)}; \vv)|>0\} \left\vert\mathbbm{1}\{y_n=+\}-\text{logit}_+^{(t)}(\mX_n^{(t)})\right\vert \\
    & \times \sum_{p\in\calP(\mX_n^{(0)}; \vv)} \mathbbm{1}\{\langle \vw_{+,r}^{(t)}, \alpha_{n,p}^{(t)} \vv +\vzeta_{n,p}^{(t)}\rangle + b_{c,r}^{(t)} > 0\} \frac{1}{\ln^9(d)}
\end{aligned}
\end{equation}

Therefore, with probability at least $1-O\left(\frac{NP}{\poly(d)}\right)$, we can bound the update to the bias as follows:
\begin{equation}
\begin{aligned}
    &\Delta b_{+,r}^{(t)} \\
    \le & -\frac{1}{\ln^5(d)} \frac{\eta}{NP}\left(\sqrt{1-\iota} - \frac{1}{\ln^9(d)}\right) \\
    & \times \Bigg( \sum_{\vv\in\calC(\vv^*)}\sum_{n=1}^N \mathbbm{1}\{|\calP(\mX_n^{(t)}; \vv)|>0\} \left\vert\mathbbm{1}\{y_n=+\}-\text{logit}_+^{(t)}(\mX_n^{(t)})\right\vert \\
    & \times \sum_{p\in\calP(\mX_n^{(t)}; \vv)} \mathbbm{1}\{\langle \vw_{+,r}^{(t)}, \alpha_{n,p}^{(t)} \vv +\vzeta_{n,p}^{(t)}\rangle + b_{c,r}^{(t)} > 0\} \Bigg)
\end{aligned}
\end{equation}

Furthermore, with probability at least $1 - e^{-\Omega(d) + O(\ln(d))} > 1 - O\left(\frac{NP}{\poly(d)}\right)$, the following holds for all $n,p$:
\begin{equation}
    \langle \alpha_{n,p}^{(t)} \vv, \vzeta_{n,p}^{(\tau)} \rangle,\; \langle \vzeta_{n,p}^{(t)}, \alpha_{n,p}^{(\tau)} \vv^*\rangle,\; \langle \vzeta_{n,p}^{(t)}, \vzeta_{n,p}^{(\tau)} \rangle < O\left( \frac{1}{\ln^9(d)} \right).
\end{equation}

Combining the above derivations, they imply that with probability at least $1-O\left(\frac{NP}{\poly(d)}\right)$, for any $\vx_{n,p}^{(\tau)}$ dominated by $\vv^*$,
\begin{equation}
\begin{aligned}
    &\langle \Delta \vw_{+,r}^{(t)} , \vx_{n,p}^{(\tau)}\rangle + \Delta b_{+,r}^{(t)} \\
    = &\langle \Delta \vw_{+,r}^{(t)} , \alpha_{n,p}^{(\tau)}\vv^* + \vzeta_{n,p}^{(\tau)} \rangle + \Delta b_{+,r}^{(t)} \\
    = & \frac{\eta}{NP} \sum_{\vv\in\calC(\vv^*)}\sum_{n=1}^N \mathbbm{1}\{|\calP(\mX_n^{(t)}; \vv)|>0\} [\mathbbm{1}\{y_n=+\}-\text{logit}_+^{(t)}(\mX_n^{(t)})] \\
    & \times \sum_{p\in\calP(\mX_n^{(t)}; \vv)} \mathbbm{1}\{\langle \vw_{+,r}^{(t)}, \alpha_{n,p}^{(t)} \vv +\vzeta_{n,p}^{(t)}\rangle + b_{c,r}^{(t)} > 0\} \langle \alpha_{n,p}^{(t)} \vv +\vzeta_{n,p}^{(t)}, \alpha_{n,p}^{(\tau)}\vv^* + \vzeta_{n,p}^{(\tau)} \rangle + \Delta b_{+,r}^{(t)} \\
    = & \frac{\eta}{NP} \sum_{\vv\in\calC(\vv^*)}\sum_{n=1}^N \mathbbm{1}\{|\calP(\mX_n^{(t)}; \vv)|>0\} [\mathbbm{1}\{y_n=+\}-\text{logit}_+^{(t)}(\mX_n^{(t)})] \\
    &\times \sum_{p\in\calP(\mX_n^{(t)}; \vv)} \mathbbm{1}\{\langle \vw_{+,r}^{(t)}, \alpha_{n,p}^{(t)} \vv +\vzeta_{n,p}^{(t)}\rangle + b_{c,r}^{(t)} > 0\} \left(\langle \alpha_{n,p}^{(t)} \vv, \vzeta_{n,p}^{(\tau)} \rangle + \langle \vzeta_{n,p}^{(t)}, \alpha_{n,p}^{(\tau)} \vv^*\rangle + \langle \vzeta_{n,p}^{(t)}, \vzeta_{n,p}^{(\tau)} \rangle \right) \\
    & + \Delta b_{+,r}^{(t)} \\
    \le & \frac{\eta}{NP} \sum_{\vv\in\calC(\vv^*)}\sum_{n=1}^N \mathbbm{1}\{|\calP(\mX_n^{(t)}; \vv)|>0\} \left\vert\mathbbm{1}\{y_n=+\}-\text{logit}_+^{(t)}(\mX_n^{(t)})\right\vert \\
    &\times \sum_{p\in\calP(\mX_n^{(t)}; \vv)} \mathbbm{1}\{\langle \vw_{+,r}^{(t)}, \alpha_{n,p}^{(t)} \vv +\vzeta_{n,p}^{(t)}\rangle + b_{c,r}^{(t)} > 0\} \times O\left(\frac{1}{\ln^9(d)} \right) + \Delta b_{+,r}^{(t)} \\
    \le & \frac{\eta}{NP} \left( O\left(\frac{1}{\ln^9(d)} \right) -\frac{1}{\ln^5(d)}\left(\sqrt{1-\iota} - \frac{1}{\ln^9(d)}\right) \right) \\
    & \times \Bigg( \sum_{\vv\in\calC(\vv_+)}\sum_{n=1}^N \mathbbm{1}\{|\calP(\mX_n^{(t)}; \vv)|>0\} \left\vert\mathbbm{1}\{y_n=+\}-\text{logit}_+^{(t)}(\mX_n^{(t)})\right\vert  \\
    & \times \sum_{p\in\calP(\mX_n^{(t)}; \vv)} \mathbbm{1}\{\langle \vw_{+,r}^{(t)}, \alpha_{n,p}^{(t)} \vv +\vzeta_{n,p}^{(t)}\rangle + b_{c,r}^{(t)} > 0\} \Bigg)\\
    < & 0.
\end{aligned}
\end{equation}

This completes the proof.
\end{proof}

\begin{lemma} [Nonactivation on noise patches]
\label{lemma: sgd phase 1, nonact on noise}
    Let the assumptions in Theorem \ref{prop: phase 1 sgd induction} hold. 
    
    Denote the set of features $\calM=\{\vv_{+,c}\}_{c=1}^{k_+} \cup \{\vv_{-,c}\}_{c=1}^{k_-}\cup\{\vv_+,\vv_-\}$. If the update term for neuron $\vw_{+,r}^{(t)}$ can be written as follows
    \begin{equation}
    \begin{aligned}
        \Delta \vw_{+,r}^{(t)} 
        = & \frac{\eta}{NP} \sum_{\vv\in\calM}\sum_{n=1}^N \mathbbm{1}\{|\calP(\mX_n^{(t)}; \vv)|>0\} [\mathbbm{1}\{y_n=+\}-\text{logit}_+^{(t)}(\mX_n^{(t)})] \\
        & \times  \sum_{p\in\calP(\mX_n^{(t)}; \vv)} \mathbbm{1}\{\langle \vw_{+,r}^{(t)}, \alpha_{n,p}^{(t)} \vv +\vzeta_{n,p}^{(t)}\rangle + b_{c,r}^{(t)} > 0\} \left(\alpha_{n,p}^{(t)} \vv +\vzeta_{n,p}^{(t)}\right),
    \end{aligned}
    \end{equation}
    then
    \begin{equation}
    \begin{aligned}
        &\Delta b_{+,r}^{(t)} \\
        \le & -\frac{1}{\ln^5(d)} \frac{\eta}{NP}\left(\sqrt{1-\iota} - \frac{1}{\ln^9(d)}\right) \\
        & \times \Bigg( \sum_{\vv\in\calM}\sum_{n=1}^N \mathbbm{1}\{|\calP(\mX_n^{(t)}; \vv)|>0\} \left\vert\mathbbm{1}\{y_n=+\}-\text{logit}_+^{(t)}(\mX_n^{(t)})\right\vert \\
        & \times \sum_{p\in\calP(\mX_n^{(t)}; \vv)} \mathbbm{1}\{\langle \vw_{+,r}^{(t)}, \alpha_{n,p}^{(t)} \vv +\vzeta_{n,p}^{(t)}\rangle + b_{c,r}^{(t)} > 0\} \Bigg).
    \end{aligned}
    \end{equation}
    Moreover, for any $\tau > t$, the following inequality holds with probability at least $ 1-O\left( \frac{NP}{\poly(d)}\right)$ for all noise patches $\vx_{n,p}^{(\tau)} = \vzeta_{n,p}^{(\tau)}$:
    \begin{equation}
        \langle \Delta \vw_{+,r}^{(t)} , \vx_{n,p}^{(\tau)} \rangle + \Delta b_{+,r}^{(t)} < 0
    \end{equation}
\end{lemma}
\begin{proof}
Similar to the proof of Lemma \ref{lemma: sgd phase 1, nonact invar, t=1}, we can estimate the update to the bias term
\begin{equation}
\begin{aligned}
    &\Delta b_{+,r}^{(t)} \\
    \le & -\frac{1}{\ln^5(d)} \frac{\eta}{NP}\left(\sqrt{1-\iota} - \frac{1}{\ln^9(d)}\right) \\
    & \times \Bigg( \sum_{\vv\in\calM}\sum_{n=1}^N \mathbbm{1}\{|\calP(\mX_n^{(t)}; \vv)|>0\} \left\vert\mathbbm{1}\{y_n=+\}-\text{logit}_+^{(t)}(\mX_n^{(t)})\right\vert \\
    & \times \sum_{p\in\calP(\mX_n^{(t)}; \vv)} \mathbbm{1}\{\langle \vw_{+,r}^{(t)}, \alpha_{n,p}^{(t)} \vv +\vzeta_{n,p}^{(t)}\rangle + b_{c,r}^{(t)} > 0\} \Bigg)
\end{aligned}
\end{equation}

Then for any $\vx_{n,p}^{(\tau)} = \vzeta_{n,p}^{(\tau)}$ with $\tau > t$, with probability at least $1-O\left(\frac{mNP}{\poly(d)}\right)$,
\begin{equation}
\begin{aligned}
    &\langle \Delta \vw_{+,r}^{(t)} , \vx_{n,p}^{(\tau)}\rangle + \Delta b_{+,r}^{(t)} \\
    = &\langle \Delta \vw_{+,r}^{(t)}, \vzeta_{n,p}^{(\tau)} \rangle + \Delta b_{+,r}^{(t)} \\
    = & \frac{\eta}{NP} \sum_{\vv\in\calM}\sum_{n=1}^N \mathbbm{1}\{|\calP(\mX_n^{(t)}; \vv)|>0\} [\mathbbm{1}\{y_n=+\}-\text{logit}_+^{(t)}(\mX_n^{(t)})] \\
    & \times \sum_{p\in\calP(\mX_n^{(t)}; \vv)} \mathbbm{1}\{\langle \vw_{+,r}^{(t)}, \alpha_{n,p}^{(t)} \vv +\vzeta_{n,p}^{(t)}\rangle + b_{c,r}^{(t)} > 0\} \langle \alpha_{n,p}^{(t)} \vv +\vzeta_{n,p}^{(t)}, \vzeta_{n,p}^{(\tau)} \rangle + \Delta b_{+,r}^{(t)} \\
    = & \frac{\eta}{NP} \sum_{\vv\in\calM}\sum_{n=1}^N \mathbbm{1}\{|\calP(\mX_n^{(t)}; \vv)|>0\} [\mathbbm{1}\{y_n=+\}-\text{logit}_+^{(t)}(\mX_n^{(t)})] \\
    &\times \sum_{p\in\calP(\mX_n^{(t)}; \vv)} \mathbbm{1}\{\langle \vw_{+,r}^{(t)}, \alpha_{n,p}^{(t)} \vv +\vzeta_{n,p}^{(t)}\rangle + b_{c,r}^{(t)} > 0\} \left(\langle \alpha_{n,p}^{(t)} \vv, \vzeta_{n,p}^{(\tau)} \rangle + \langle \vzeta_{n,p}^{(t)}, \vzeta_{n,p}^{(\tau)} \rangle \right) + \Delta b_{+,r}^{(t)} \\
    \le & \frac{\eta}{NP} \sum_{\vv\in\calM}\sum_{n=1}^N \mathbbm{1}\{|\calP(\mX_n^{(t)}; \vv)|>0\} \left\vert\mathbbm{1}\{y_n=+\}-\text{logit}_+^{(t)}(\mX_n^{(t)})\right\vert \\
    &\times \sum_{p\in\calP(\mX_n^{(t)}; \vv)} \mathbbm{1}\{\langle \vw_{+,r}^{(t)}, \alpha_{n,p}^{(t)} \vv +\vzeta_{n,p}^{(t)}\rangle + b_{c,r}^{(t)} > 0\} \times O\left(\frac{1}{\ln^9(d)} \right) + \Delta b_{+,r}^{(t)} \\
    \le & \frac{\eta}{NP} \left( O\left(\frac{1}{\ln^9(d)} \right) -\frac{1}{\ln^5(d)}\left(\sqrt{1-\iota} - \frac{1}{\ln^9(d)}\right) \right) \\
    & \times \Bigg( \sum_{\vv\in\calM}\sum_{n=1}^N \mathbbm{1}\{|\calP(\mX_n^{(t)}; \vv)|>0\} \left\vert\mathbbm{1}\{y_n=+\}-\text{logit}_+^{(t)}(\mX_n^{(t)})\right\vert \\
    & \times \sum_{p\in\calP(\mX_n^{(t)}; \vv)} \mathbbm{1}\{\langle \vw_{+,r}^{(t)}, \alpha_{n,p}^{(t)} \vv +\vzeta_{n,p}^{(t)}\rangle + b_{c,r}^{(t)} > 0\} \Bigg) \\
    < & 0.
\end{aligned}
\end{equation}

\end{proof}

\newpage

\section{Fine-grained Learning}
\label{section: appendix, finegrained}

This section treats the learning dynamics of using fine-grained labels to train the NN; the analysis will be much simpler since the technical analysis overlaps significantly with that in the previous sections.

The training procedure is exactly the same as in the coarse-grained training setting. We explicitly write them out here to avoid any possible confusion.

The learner for fine-grained classification is written as follows for $c\in[k_+]$:
\begin{equation}
    F_{+,c}(\mX) = \sum_{r=1}^{m_{+,c}} a_{+,c,r}\sum_{p=1}^P \sigma(\langle \vw_{+,c,r}, \vx_p\rangle + b_{+,c,r}), \;\; c\in[k_+]
\end{equation}
with frozen linear classifier weights $ a_{+,c,r} = 1$. Same definition applies to the $-$ classes.

The SGD dynamics induced by the training loss is now
\begin{equation}
\begin{aligned}
    \vw_{+, c,r}^{(t+1)} 
    = \vw_{+, c,r}^{(t)} + \eta \frac{1}{NP} \sum_{n=1}^N \Bigg( 
    & \mathbbm{1}\{y_n = (+,c)\}[1-\text{logit}_{+,c}^{(t)}(\mX_n^{(t)})]\sum_{p\in[P]}\sigma'(\langle \vw_{+,c,r}^{(t)}, \vx_{n,p}^{(t)} \rangle +b_{c,r}^{(t)}) \vx_{n,p}^{(t)} + \\
    & \mathbbm{1}\{y_n \neq (+,c)\} [-\text{logit}_{+,c}^{(t)}(\mX_n^{(t)}) ]\sum_{p\in[P]} \sigma'(\langle \vw_{+,c,r}^{(t)}, \vx_{n,p}^{(t)}  \rangle + b^{(t)}_{c,r}) \vx_{n,p}^{(t)}\Bigg)
\end{aligned}
\end{equation}

The bias is manually tuned according to the update rule
\begin{equation}
    b_{+, c,r}^{(t+1)} = b_{+, c,r}^{(t)} - \frac{\|\Delta \vw_{+,c,r}^{(t)}\|_2}{\ln^5(d)}
\end{equation}

We assign $m_{+,c} = \Theta(d^{1+2c_0})$ neurons to each subclass $(+,c)$. For convenience, we write $m = d m_{+,c}$.

The initialization scheme is identical to the coarse-training case, except we choose a slightly less negative $b_{c,r}^{(0)} = - \sigma_0\sqrt{2 + 2c_0}\sqrt{\ln(d)}$.

The parameter choices remain the same as before.

\subsection{Initialization geometry}
\begin{definition}
Define the following sets of interest of the hidden neurons:
\begin{enumerate}
    \item $\calU_{+,c,r}^{(0)} = \{\vv \in \calV: \langle \vw_{+,c,r}^{(0)}, \vv\rangle \ge \sigma_0 \sqrt{2 + 2c_0}\sqrt{\ln(d) - \frac{1}{\ln^5(d)}}\}$
    \item Given $\vv \in \calV$, $S^{*(0)}_{+,c}(\vv) \subseteq (+,c) \times [m_{+,c}]$ satisfies:
    \begin{enumerate}
        \item $\langle \vw_{+, c, r}^{(0)}, \vv \rangle \ge \sigma_0 \sqrt{2 + 2c_0} \sqrt{\ln(d) + \frac{1}{\ln^5(d)}}$
        \item $\forall \vv' \in \calV \text{ s.t. } \vv' \perp \vv, \, \langle \vw_{+,c,r}^{(0)}, \vv' \rangle < \sigma_0 \sqrt{2 + 2c_0} \sqrt{\ln(d) - \frac{1}{\ln^5(d)}}$ 
    \end{enumerate}
    \item Given $\vv \in \calV$, $S_{+,c}^{(0)}(\vv) \subseteq (+,c)\times [m_{+,c}]$ satisfies:
    \begin{enumerate}
        \item $\langle \vw_{+,c,r}^{(0)}, \vv \rangle \ge \sigma_0 \sqrt{2 + 2c_0} \sqrt{\ln(d) - \frac{1}{\ln^5(d)}}$
    \end{enumerate}
    \item For any $(+,c,r) \in S_{+, c, reg}^{*(0)} \subseteq (+,c) \times [m_{+,c}]$:
    \begin{enumerate}
        \item $\langle \vw_{+,c,r}^{(0)}, \vv \rangle \le \sigma_0 \sqrt{10} \sqrt{\ln(d)} \; \forall \vv\in\calV$
        \item $\left\vert \calU_{+,c,r}^{(0)} \right\vert \le O(1)$
    \end{enumerate}
\end{enumerate}

The same definitions apply to the $-$-class neurons.
\end{definition}

\begin{prop}
\label{prop: init geometry, finegrained}
At $t=0$, for all $\vv \in \calD$, the following properties are true with probability at least $1- d^{-2}$ over the randomness of the initialized kernels:
\begin{enumerate}
    \item $|S_{+,c}^{*(0)}(\vv)|, |S_{+,c}^{(0)}(\vv)| = \Theta\left(\frac{1}{\sqrt{\ln(d)}}\right) d^{c_0}$
    \item In particular, $\left\vert \frac{|S_{y}^{*(0)}(\vv)|}{|S_{y'}^{(0)}(\vv')|} - 1 \right\vert= O\left( \frac{1}{\ln^5(d)}\right)$ and $\left\vert \frac{|S_{y}^{*(0)}(\vv)|}{|S_{y'}^{*(0)}(\vv')|} - 1 \right\vert= O\left( \frac{1}{\ln^5(d)}\right)$ for any $y,y'\in\{(+,c)\}_{c=1}^{k_+}\cup \{(-,c)\}_{c=1}^{k_-}$ and common or fine-grained features $\vv, \vv'$.
    \item $S_{+,c,reg}^{(0)} = [m_{+,c}]$
\end{enumerate}
The same properties apply to the $-$-class neurons.
\end{prop}
\begin{proof}
This proof proceeds in virtually the same way as in the proof of Proposition \ref{prop: init geometry, coarse}, so we omit it here.
\end{proof}

\subsection{Poly-time properties}
\begin{theorem}
\label{thm: sgd fine-grained, universal nonact properties}
Fix any $t\in[0, T_e]$, assuming $T_e\in\poly(d)$.
\begin{enumerate}
    \item (Non-activation invariance) For any $\tau \ge t$, with probability at least $ 1-O\left( \frac{mk_+NPt}{\poly(d)}\right)$, for any feature $\vv \in \{\vv_{+,c}\}_{c=1}^{k_+} \cup \{\vv_{-,c}\}_{c=1}^{k_-}\cup\{\vv_+,\vv_-\}$, for every $t'\le t$, $(+,c,r)\notin S_{+,c}^{(0)}(\vv)$ and $\vv$-dominated patch sample $\vx_{n,p}^{(\tau)} = \alpha_{n,p}^{(\tau)}\vv + \vzeta_{n,p}^{(\tau)}$, the following holds:
    \begin{equation}
        \sigma\left(\langle \vw_{+,c,r}^{(t')}, \vx_{n,p}^{(\tau)} \rangle + b_{+,c,r}^{(t')}\right) = 0
    \end{equation}
    
    \item (Non-activation on noise patches) For any $\tau \ge t$, with probability at least $ 1-O\left( \frac{mNPt}{\poly(d)}\right)$, for every $c\in[k_+]$, $r\in[m]$ and noise patch $\vx_{n,p}^{(\tau)} = \vzeta_{n,p}^{(\tau)}$, the following holds: 
    \begin{equation}
        \sigma\left(\langle \vw_{+,c,r}^{(t)}, \vx_{n,p}^{(\tau)} \rangle + b_{+,c,r}^{(t)}\right) = 0
    \end{equation}

    \item (Off-diagonal nonpositive growth) Given fine-grained class $(+,c)$ and any $\tau \ge t$, with probability at least $ 1-O\left( \frac{mk_+NPt}{\poly(d)}\right)$, for any $t'\le t$, any feature $\vv \in \{\vv_{-,c}\}_{c=1}^{k_-}\cup\{\vv_-\}\cup \{\vv_{+,c'}\}_{c'\neq c}$, any neuron $\vw_{+,c,r}\in S^{(0)}_{+,c}(\vv)$ and any $\vv$-dominated patch $\vx_{n,p}^{(\tau)} = \alpha_{n,p}^{(\tau)}\vv + \vzeta_{n,p}^{(\tau)}$, $\sigma\left(\langle \vw_{+,c,r}^{(t')}, \vx_{n,p}^{(\tau)} \rangle + b_{+,c,r}^{(t')}\right) \le \sigma\left(\langle \vw_{+,c,r}^{(0)}, \vx_{n,p}^{(\tau)} \rangle + b_{+,c,r}^{(0)}\right)$.

\end{enumerate}
\end{theorem}
\begin{proof}
The proof of this theorem is similar to that of Theorem \ref{thm: sgd, universal nonact properties}, but with some subtle differences.

\textcolor{blue}{\textbf{Base case $t = 0$.}}

\textcolor{brown}{\textit{1. (Nonactivation invariance)}}

Choose any $\vv^*$ from the set $\{\vv_{+,c}\}_{c=1}^{k_+} \cup \{\vv_{-,c}\}_{c=1}^{k_-}\cup\{\vv_+,\vv_-\}$. We will work with neuron sets in the ``$+$'' class in this proof; the ``$-$''-class case can be handled in the same way.
    
First, given $\tau \ge 0$, we need to show that, for every $n$ such that $\vert \calP(\mX_n^{(\tau)}; \vv^*)\vert > 0$ and $p\in\calP(\mX_n^{(\tau)}; \vv^*)$, for every $(+,c,r)$ neuron index, 
\begin{equation}
    \langle \vw_{+,c,r}^{(0)}, \vv^* \rangle < \sigma_0 \sqrt{2 + 2c_0} \sqrt{\ln(d) - \frac{1}{\ln^5(d)}} \implies \sigma\left(\langle \vw_{+,c,r}^{(0)}, \vx_{n,p}^{(\tau)}\rangle + b_{+,c,r}^{(0)} \right) = 0
\end{equation}

This is indeed true. The following holds with probability at least $1 - O\left(\frac{mNP}{\poly(d)}\right)$ for all $(+,r)\notin S_+^{(0)}(\vv)$ and all such $\vx_{n,p}^{(\tau)}$:
\begin{equation}
\begin{aligned}
    \langle \vw_{+,c,r}^{(0)}, \vx_{n,p}^{(\tau)}\rangle + b_{+,c,r}^{(0)} 
    \le & \sigma_0 \sqrt{1+\iota} \sqrt{(2+2c_0)(\ln(d) - 1/\ln^5(d))} + O\left(\frac{\sigma_0}{\ln^{9}(d)}\right) - \sqrt{2 + 2c_0}\sqrt{\ln(d) } \sigma_0\\
    = & \sigma_0 \left(\frac{(2+2c_0)(1 + \iota)(\ln(d) - 1/\ln^5(d)) - (2+2c_0)\ln(d)}{\sqrt{(2+2c_0)(\ln(d) - 1/\ln^5(d))} + \sqrt{4 + 2c_0}\sqrt{\ln(d) }}   + O\left(\frac{1}{\ln^{9}(d)}\right)\right) \\
    = & \sigma_0 \left(\frac{(2+2c_0)(\iota\ln(d) - (1+\iota)/\ln^5(d))}{\sqrt{(2+2c_0)(\ln(d) - 1/\ln^5(d))} + \sqrt{2 + 2c_0}\sqrt{\ln(d) }}   + O\left(\frac{1}{\ln^{9}(d)}\right)\right) \\
    < & 0,
\end{aligned}
\end{equation}

The first equality holds by utilizing the identity $a - b = \frac{a^2 - b^2}{a + b}$. As a consequence, $\sigma(\langle \vw_{+,c,r}^{(0)}, \vx_{n,p}^{(\tau)}\rangle + b_{+,r}^{(0)}) = 0$.

\textcolor{brown}{\textit{2. (Non-activation on noise patches)}}
Invoking Lemma \ref{lemma: independent gaussian vector inner product concentration}, for any $\tau \ge 0$, with probability at least $1-O\left(\frac{mNP}{\poly(d)}\right)$, we have for all possible choices of $r\in[m]$ and the noise patches $\vx_{n,p}^{(\tau)} = \vzeta_{n,p}^{(\tau)}$:
\begin{equation}
    \left\vert \langle \vw_{+,c,r}^{(0)}, \vzeta_{n,p}^{(\tau)} \rangle \right\vert \le O(\sigma_0\sigma_{\zeta} \sqrt{d\ln(d)}) \le O\left(\frac{\sigma_0}{\ln^{9}(d)}\right) \ll b_{+,r}^{(0)}.
\end{equation}

Therefore, no neuron can activate on the noise patches at time $t=0$.

\textcolor{brown}{\textit{3. (Off-diagonal nonpositive growth)}}
This point is trivially true at $t=0$.

\vspace{4ex}
\textcolor{blue}{\textbf{Inductive step}}: we assume the induction hypothesis for $t\in [0, T]$ (with $T < T_e$ of course), and prove the statements for $t = T+1$.

\textcolor{brown}{\textit{1. (Nonactivation invariance)}}

Again, choose any $\vv^*$ from the set $\{\vv_{+,c}\}_{c=1}^{k_+} \cup \{\vv_{-,c}\}_{c=1}^{k_-}\cup\{\vv_+,\vv_-\}$.

We need to prove that given $\tau \ge T+1$, with probability at least $ 1-O\left( \frac{mk_+NP(T+1)}{\poly(d)}\right)$, for every $t'\le T+1$, $(+,c,r)$ neuron index and $\vv^*$-dominated patch $\vx_{n,p}^{(\tau)}$,
\begin{equation}
    (+,c,r)\notin S_{+,c}^{(0)}(\vv^*) \implies \sigma\left(\langle \vw_{+,c,r}^{(t')}, \vx_{n,p}^{(\tau)}\rangle + b_{+,c,r}^{(t')} \right) = 0.
\end{equation}

By the induction hypothesis of point 1., with probability at least $ 1-O\left( \frac{mk_+NPT}{\poly(d)}\right)$, the following is already true on all the $\vv^*$-dominated patches at time $t' \le T$:
\begin{equation}
    (+,c,r)\notin S_{+,c}^{(0)}(\vv^*) \implies \sigma\left(\langle \vw_{+,c,r}^{(t')}, \vx_{n,p}^{(T)}\rangle + b_{+,c,r}^{(t')} \right) = 0.
\end{equation}
In particular, $\sigma\left(\langle \vw_{+,c,r}^{(T)}, \vx_{n,p}^{(T)}\rangle + b_{+,c,r}^{(T)} \right) = 0$.

In other words, no $(+,c,r)\notin S_{+,c}^{(0)}(\vv^*)$ can be updated on the $\vv^*$-dominated patches at time $t=T$. Furthermore, the induction hypothesis of point 2. also states that the network cannot activate on any noise patch $\vx_{n,p}^{(T)} = \vzeta_{n,p}^{(T)}$ with probability at least $ 1-O\left( \frac{mNPT}{\poly(d)}\right)$. Therefore, the neuron update for those $(+,c,r)\notin S_{+,c}^{(0)}(\vv^*)$ takes the form
\begin{equation}
\begin{aligned}
    \Delta \vw_{+,c,r}^{(T)} 
    = & \frac{\eta}{NP} \sum_{\vv\in\calC(\vv^*)}\sum_{n=1}^N \mathbbm{1}\{|\calP(\mX_n^{(T)}; \vv)|>0\} [\mathbbm{1}\{y_n=(+,c)\}-\text{logit}_{+,c}^{(T)}(\mX_n^{(T)})] \\
    & \times  \sum_{p\in\calP(\mX_n^{(T)}; \vv)} \mathbbm{1}\{\langle \vw_{+,c,r}^{(T)}, \alpha_{n,p}^{(T)} \vv +\vzeta_{n,p}^{(T)}\rangle + b_{+,c,r}^{(T)} > 0\} \left(\alpha_{n,p}^{(T)} \vv +\vzeta_{n,p}^{(T)}\right)
\end{aligned}
\end{equation}

Conditioning on this high-probability event, we have
\begin{equation}
\begin{aligned}
    \Delta b_{+,c,r}^{(t)} 
    = & -\frac{\left\| \Delta \vw_{+,c,r}^{(t)} \right\|_2}{\ln^5(d)} \\
    \le & -\frac{1}{\ln^5(d)} \frac{\eta}{NP} \Bigg\|\sum_{\vv\in\calC(\vv^*)}\sum_{n=1}^N \mathbbm{1}\{|\calP(\mX_n^{(t)}; \vv)|>0\} [\mathbbm{1}\{y_n=(+,c)\}-\text{logit}_{+,c}^{(t)}(\mX_n^{(t)})] \\
    & \times  \sum_{p\in\calP(\mX_n^{(t)}; \vv)} \mathbbm{1}\{\langle \vw_{+,c,r}^{(t)}, \alpha_{n,p}^{(t)} \vv +\vzeta_{n,p}^{(t)}\rangle + b_{+,c,r}^{(t)} > 0\} \alpha_{n,p}^{(t)} \vv \Bigg\|_2 \\
    & + \frac{1}{\ln^5(d)} \frac{\eta}{NP} \Bigg\|\sum_{\vv\in\calC(\vv^*)}\sum_{n=1}^N \mathbbm{1}\{|\calP(\mX_n^{(t)}; \vv)|>0\} [\mathbbm{1}\{y_n=(+,c)\}-\text{logit}_{+,c}^{(t)}(\mX_n^{(t)})] \\
    & \times  \sum_{p\in\calP(\mX_n^{(t)}; \vv)} \mathbbm{1}\{\langle \vw_{+,c,r}^{(t)}, \alpha_{n,p}^{(t)} \vv +\vzeta_{n,p}^{(t)}\rangle + b_{+,c,r}^{(t)} > 0\} \vzeta_{n,p}^{(t)} \Bigg\|_2 \\
\end{aligned}
\end{equation}

Let us further upper bound the two $\|\cdot\|_2$ terms separately. Firstly,
\begin{equation}
\begin{aligned}
    & \Bigg\|\sum_{\vv\in\calC(\vv^*)}\sum_{n=1}^N \mathbbm{1}\{|\calP(\mX_n^{(t)}; \vv)|>0\} [\mathbbm{1}\{y_n=(+,c)\}-\text{logit}_{+,c}^{(t)}(\mX_n^{(t)})] \\
    & \times  \sum_{p\in\calP(\mX_n^{(t)}; \vv)} \mathbbm{1}\{\langle \vw_{+,c,r}^{(t)}, \alpha_{n,p}^{(t)} \vv +\vzeta_{n,p}^{(t)}\rangle + b_{+,c,r}^{(t)} > 0\} \alpha_{n,p}^{(t)}\vv \Bigg\|_2 \\
    = & \sum_{\vv\in\calC(\vv^*)}\sum_{n=1}^N \mathbbm{1}\{|\calP(\mX_n^{(t)}; \vv)|>0\} \left\vert \mathbbm{1}\{y_n=(+,c)\}-\text{logit}_{+,c}^{(t)}(\mX_n^{(t)})\right\vert \\
    & \times  \sum_{p\in\calP(\mX_n^{(t)}; \vv)} \mathbbm{1}\{\langle \vw_{+,c,r}^{(t)}, \alpha_{n,p}^{(t)} \vv +\vzeta_{n,p}^{(t)}\rangle + b_{+,c,r}^{(t)} > 0\} \alpha_{n,p}^{(t)} \left\| \vv \right\|_2 \\
    \ge & \sum_{\vv\in\calC(\vv^*)}\sum_{n=1}^N \mathbbm{1}\{|\calP(\mX_n^{(t)}; \vv)|>0\} \left\vert \mathbbm{1}\{y_n=(+,c)\}-\text{logit}_{+,c}^{(t)}(\mX_n^{(t)})\right\vert \\
    & \times  \sum_{p\in\calP(\mX_n^{(t)}; \vv)} \mathbbm{1}\{\langle \vw_{+,c,r}^{(t)}, \alpha_{n,p}^{(t)} \vv +\vzeta_{n,p}^{(t)}\rangle + b_{+,c,r}^{(t)} > 0\} \sqrt{1-\iota}
\end{aligned}
\end{equation}

For the second $\|\cdot\|_2$ term consisting purely of noise, note that since all the $\vzeta_{n,p}^{(t)}$'s are independent Gaussian random vectors, the standard deviation of the sum is in fact
\begin{equation}
\begin{aligned}
    & \Bigg\{\sum_{\vv\in\calC(\vv^*)}\sum_{n=1}^N \sum_{p\in\calP(\mX_n^{(t)}; \vv)} \mathbbm{1}\{|\calP(\mX_n^{(t)}; \vv)|>0\}   \mathbbm{1}\{\langle \vw_{+,c,r}^{(t)}, \alpha_{n,p}^{(t)} \vv +\vzeta_{n,p}^{(t)}\rangle + b_{+,c,r}^{(t)} > 0\} \\
    & \times [\mathbbm{1}\{y_n=(+,c)\}-\text{logit}_{+,c}^{(t)}(\mX_n^{(t)})]^2 \Bigg\}^{1/2} \sigma_{\zeta}.
\end{aligned}
\end{equation}
With the basic property that $\sqrt{\sum_{j}c_j^2} \le \sum_{j} |c_j|$ for any sequence of real numbers $c_1, c_2, ...$, we know this standard deviation can be upper bounded by
\begin{equation}
\begin{aligned}
    & \sum_{\vv\in\calC(\vv^*)}\sum_{n=1}^N \sum_{p\in\calP(\mX_n^{(t)}; \vv)} \mathbbm{1}\{|\calP(\mX_n^{(t)}; \vv)|>0\}   \mathbbm{1}\{\langle \vw_{+,c,r}^{(t)}, \alpha_{n,p}^{(t)} \vv +\vzeta_{n,p}^{(t)}\rangle + b_{+,c,r}^{(t)} > 0\} \\
    & \times \left\vert\mathbbm{1}\{y_n=(+,c)\}-\text{logit}_{+,c}^{(t)}(\mX_n^{(t)}) \right\vert \sigma_{\zeta}
\end{aligned}
\end{equation}

It follows that with probability at least $1-O\left(\frac{1}{\poly(d)}\right)$, 
\begin{equation}
\begin{aligned}
    & \Bigg\|\sum_{\vv\in\calC(\vv^*)}\sum_{n=1}^N \mathbbm{1}\{|\calP(\mX_n^{(t)}; \vv)|>0\} [\mathbbm{1}\{y_n=(+,c)\}-\text{logit}_{+,c}^{(t)}(\mX_n^{(t)})] \\
    & \times  \sum_{p\in\calP(\mX_n^{(t)}; \vv)} \mathbbm{1}\{\langle \vw_{+,c,r}^{(t)}, \alpha_{n,p}^{(t)} \vv +\vzeta_{n,p}^{(t)}\rangle + b_{+,c,r}^{(t)} > 0\} \vzeta_{n,p}^{(t)}\Bigg\|_2 \\
    \le & \sum_{\vv\in\calC(\vv^*)}\sum_{n=1}^N \mathbbm{1}\{|\calP(\mX_n^{(t)}; \vv)|>0\} \left\vert \mathbbm{1}\{y_n=(+,c)\}-\text{logit}_{+,c}^{(t)}(\mX_n^{(t)})\right\vert \\
    & \times  \sum_{p\in\calP(\mX_n^{(t)}; \vv)} \mathbbm{1}\{\langle \vw_{+,c,r}^{(t)}, \alpha_{n,p}^{(t)} \vv +\vzeta_{n,p}^{(t)}\rangle + b_{+,c,r}^{(t)} > 0\} \frac{1}{\ln^9(d)}
\end{aligned}
\end{equation}

Therefore, we can upper bound the bias update as follows:
\begin{equation}
\begin{aligned}
    &\Delta b_{+,c,r}^{(t)} \\
    \le & -\frac{1}{\ln^5(d)} \frac{\eta}{NP}\left(\sqrt{1-\iota} - \frac{1}{\ln^9(d)}\right) \\
    & \times \Bigg( \sum_{\vv\in\calC(\vv^*)}\sum_{n=1}^N \mathbbm{1}\{|\calP(\mX_n^{(t)}; \vv)|>0\} \left\vert \mathbbm{1}\{y_n=(+,c)\}-\text{logit}_{+,c}^{(t)}(\mX_n^{(t)})\right\vert \\
    & \times  \sum_{p\in\calP(\mX_n^{(t)}; \vv)} \mathbbm{1}\{\langle \vw_{+,c,r}^{(t)}, \alpha_{n,p}^{(t)} \vv +\vzeta_{n,p}^{(t)}\rangle + b_{+,c,r}^{(t)} > 0\} \Bigg)
\end{aligned}
\end{equation}

Furthermore, with probability at least $1 - O\left(\frac{NP}{\poly(d)}\right)$, the following holds for all $n,p$:
\begin{equation}
    \langle \alpha_{n,p}^{(t)} \vv, \vzeta_{n,p}^{(\tau)} \rangle,\; \langle \vzeta_{n,p}^{(t)}, \alpha_{n,p}^{(\tau)} \vv^*\rangle,\; \langle \vzeta_{n,p}^{(t)}, \vzeta_{n,p}^{(\tau)} \rangle < O\left( \frac{1}{\ln^9(d)} \right).
\end{equation}

Combining the above derivations, they imply that with probability at least $1-O\left(\frac{NP}{\poly(d)}\right)$, for any $\vx_{n,p}^{(\tau)}$ dominated by $\vv^*$,
\begin{equation}
\begin{aligned}
    &\langle \Delta \vw_{+,c,r}^{(t)} , \vx_{n,p}^{(\tau)}\rangle + \Delta b_{+,c,r}^{(t)} \\
    = &\langle \Delta \vw_{+,c,r}^{(t)} , \alpha_{n,p}^{(\tau)}\vv^* + \vzeta_{n,p}^{(\tau)} \rangle + \Delta b_{+,c,r}^{(t)} \\
    = & \frac{\eta}{NP} \sum_{\vv\in\calC(\vv^*)}\sum_{n=1}^N \mathbbm{1}\{|\calP(\mX_n^{(t)}; \vv)|>0\} [\mathbbm{1}\{y_n=(+,c)\}-\text{logit}_{+,c}^{(t)}(\mX_n^{(t)})] \\
    & \times  \sum_{p\in\calP(\mX_n^{(t)}; \vv)} \mathbbm{1}\{\langle \vw_{+,c,r}^{(t)}, \alpha_{n,p}^{(t)} \vv +\vzeta_{n,p}^{(t)}\rangle + b_{+,c,r}^{(t)} > 0\}  \langle \alpha_{n,p}^{(t)} \vv +\vzeta_{n,p}^{(t)}, \alpha_{n,p}^{(\tau)}\vv^* + \vzeta_{n,p}^{(\tau)} \rangle + \Delta b_{+,c,r}^{(t)} \\
    = & \frac{\eta}{NP}\sum_{\vv\in\calC(\vv^*)}\sum_{n=1}^N \mathbbm{1}\{|\calP(\mX_n^{(t)}; \vv)|>0\} [\mathbbm{1}\{y_n=(+,c)\}-\text{logit}_{+,c}^{(t)}(\mX_n^{(t)})] \\
    & \times  \sum_{p\in\calP(\mX_n^{(t)}; \vv)} \mathbbm{1}\{\langle \vw_{+,c,r}^{(t)}, \alpha_{n,p}^{(t)} \vv +\vzeta_{n,p}^{(t)}\rangle + b_{+,c,r}^{(t)} > 0\}  \left(\langle \alpha_{n,p}^{(t)} \vv, \vzeta_{n,p}^{(\tau)} \rangle + \langle \vzeta_{n,p}^{(t)}, \alpha_{n,p}^{(\tau)} \vv^*\rangle + \langle \vzeta_{n,p}^{(t)}, \vzeta_{n,p}^{(\tau)} \rangle \right) \\
    &+ \Delta b_{+,c,r}^{(t)} \\
    \le & \frac{\eta}{NP} \sum_{\vv\in\calC(\vv^*)}\sum_{n=1}^N \mathbbm{1}\{|\calP(\mX_n^{(t)}; \vv)|>0\} \left\vert \mathbbm{1}\{y_n=(+,c)\}-\text{logit}_{+,c}^{(t)}(\mX_n^{(t)})\right\vert \\
    & \times  \sum_{p\in\calP(\mX_n^{(t)}; \vv)} \mathbbm{1}\{\langle \vw_{+,c,r}^{(t)}, \alpha_{n,p}^{(t)} \vv +\vzeta_{n,p}^{(t)}\rangle + b_{+,c,r}^{(t)} > 0\} \times O\left(\frac{1}{\ln^9(d)} \right) + \Delta b_{+,c,r}^{(t)} \\
    \le & \frac{\eta}{NP} \left( O\left(\frac{1}{\ln^9(d)} \right) -\frac{1}{\ln^5(d)}\left(\sqrt{1-\iota} - \frac{1}{\ln^9(d)}\right) \right) \\
    & \times \Bigg( \sum_{\vv\in\calC(\vv^*)}\sum_{n=1}^N \mathbbm{1}\{|\calP(\mX_n^{(t)}; \vv)|>0\} \left\vert \mathbbm{1}\{y_n=(+,c)\}-\text{logit}_{+,c}^{(t)}(\mX_n^{(t)})\right\vert \\
    & \times  \sum_{p\in\calP(\mX_n^{(t)}; \vv)} \mathbbm{1}\{\langle \vw_{+,c,r}^{(t)}, \alpha_{n,p}^{(t)} \vv +\vzeta_{n,p}^{(t)}\rangle + b_{+,c,r}^{(t)} > 0\} \Bigg)\\
    < & 0.
\end{aligned}
\end{equation}

Therefore, with probability at least $ 1-O\left( \frac{mNP}{\poly(d)}\right)$, the following holds for the relevant neurons and $\vv^*$-dominated patches:
\begin{equation}
    \langle \Delta \vw_{+,c,r}^{(T)} , \vx_{n,p}^{(\tau)} \rangle + \Delta b_{+,c,r}^{(T)} < 0.
\end{equation}

In conclusion, with $\tau \ge T+1$, with probability at least $ 1-O\left( \frac{mNP}{\poly(d)}\right)$, for every $(+,c,r)\notin S_{+,c}^{(0)}(\vv^*)$ and relevant $(n,p)$'s,
\begin{equation}
    \langle \vw_{+,c,r}^{(T)} + \Delta \vw_{+,c,r}^{(T)}, \vx_{n,p}^{(\tau)}\rangle + b_{+,c,r}^{(T)} + \Delta b_{+,c,r}^{(T)} = \langle \vw_{+,c,r}^{(T+1)}, \vx_{n,p}^{(\tau)}\rangle + b_{+,c,r}^{(T+1)} < 0,
\end{equation}
which leads to $\langle \vw_{+,c,r}^{(t')}, \vx_{n,p}^{(\tau)}\rangle + b_{+,c,r}^{(t')} < 0$ for all $t' \le T+1$ with probability at least $ 1-O\left( \frac{mk_+NP(T+1)}{\poly(d)}\right)$ (also by taking union bound over all the possible choices of $\vv^*$ at time $T+1$). This finishes the inductive step for point 1.

\textcolor{brown}{\textit{2. (Non-activation on noise patches)}}

The inductive step for this part is very similar to (and even simpler than) the inductive step of point 1, so we omit the calculations here.

\textcolor{brown}{\textit{3. (Off-diagonal nonpositive growth)}}
By the induction hypothesis's high-probability event, we already have that, given any fine-grained class $(+,c)$, $\tau \ge T+1$, for any feature $\vv^* \in \{\vv_{-,c}\}_{c=1}^{k_-}\cup\{\vv_-\}\cup \{\vv_{+,c'}\}_{c'\neq c}$ and any neuron $\vw_{+,c,r}$, $\sigma\left(\langle \vw_{+,c,r}^{(T)}, \vx_{n,p}^{(\tau)} \rangle + b_{+,c,r}^{(T)}\right) \le \sigma\left(\langle \vw_{+,c,r}^{(0)}, \vx_{n,p}^{(\tau)} \rangle + b_{+,c,r}^{(0)}\right)$. We just need to show that $\langle \Delta \vw_{+,c,r}^{(t)}, \vx_{n,p}^{(\tau)} \rangle + \Delta b_{+,r}^{(T)} \le 0$ to finish the proof; the rest proceeds in a similar fashion to the induction step of point 3 in the proof of Theorem \ref{thm: sgd, universal nonact properties}.

Similar to the induction step of point 1, denoting $\calM$ to be the set of all common and fine-grained features, the update expression of any neuron $(+,c,r)$ has to be
\begin{equation}
\begin{aligned}
    \Delta \vw_{+,c,r}^{(T)} 
    = & \frac{\eta}{NP} \sum_{\vv\in\calM}\sum_{n=1}^N \mathbbm{1}\{|\calP(\mX_n^{(T)}; \vv)|>0\} [\mathbbm{1}\{y_n=(+,c)\}-\text{logit}_{+,c}^{(T)}(\mX_n^{(T)})] \\
    & \times  \sum_{p\in\calP(\mX_n^{(T)}; \vv)} \mathbbm{1}\{\langle \vw_{+,c,r}^{(T)}, \alpha_{n,p}^{(T)} \vv +\vzeta_{n,p}^{(T)}\rangle + b_{+,c,r}^{(T)} > 0\} \left(\alpha_{n,p}^{(T)} \vv +\vzeta_{n,p}^{(T)}\right)
\end{aligned}
\end{equation}

Written more explicitly,
\begin{equation}
\begin{aligned}
    \Delta \vw_{+,c,r}^{(T)} 
    = & \frac{\eta}{NP} \sum_{\vv\in\calM - \{\vv^*\}}\sum_{n=1}^N \mathbbm{1}\{|\calP(\mX_n^{(T)}; \vv)|>0\} \mathbbm{1}\{y_n=(+,c)\} [1-\text{logit}_{+,c}^{(T)}(\mX_n^{(T)})] \\
    & \times  \sum_{p\in\calP(\mX_n^{(T)}; \vv)} \mathbbm{1}\{\langle \vw_{+,c,r}^{(T)}, \alpha_{n,p}^{(T)} \vv +\vzeta_{n,p}^{(T)}\rangle + b_{+,c,r}^{(T)} > 0\} \left(\alpha_{n,p}^{(T)} \vv +\vzeta_{n,p}^{(T)}\right) \\
    & - \frac{\eta}{NP} \sum_{n=1}^N \mathbbm{1}\{y_n\neq(+,c)\} \mathbbm{1}\{|\calP(\mX_n^{(T)}; \vv^*)|>0\} [\text{logit}_{+,c}^{(T)}(\mX_n^{(T)})] \\
    & \times  \sum_{p\in\calP(\mX_n^{(T)}; \vv^*)} \mathbbm{1}\{\langle \vw_{+,c,r}^{(T)}, \alpha_{n,p}^{(T)} \vv^* +\vzeta_{n,p}^{(T)}\rangle + b_{+,c,r}^{(T)} > 0\} \left(\alpha_{n,p}^{(T)} \vv^* +\vzeta_{n,p}^{(T)}\right)
\end{aligned}
\end{equation}

It follows that with probability at least $1-O\left(\frac{mNP}{\poly(d)}\right)$, for relevant $n,p,r$, we have
\begin{equation}
\begin{aligned}
    &\langle \Delta \vw_{+,c,r}^{(T)}, \alpha_{n,p}^{(\tau)}\vv^* + \vzeta_{n,p}^{(\tau)} \rangle \\
    < & \frac{\eta}{NP} \sum_{\vv\in\calM - \{\vv^*\}}\sum_{n=1}^N \mathbbm{1}\{|\calP(\mX_n^{(T)}; \vv)|>0\}\mathbbm{1}\{y_n=(+,c)\} [1-\text{logit}_{+,c}^{(T)}(\mX_n^{(T)})] \\
    & \times  \sum_{p\in\calP(\mX_n^{(T)}; \vv)} \mathbbm{1}\{\langle \vw_{+,c,r}^{(T)}, \alpha_{n,p}^{(T)} \vv +\vzeta_{n,p}^{(T)}\rangle + b_{+,c,r}^{(T)} > 0\} O\left( \frac{1}{\ln^9(d)}\right)
\end{aligned}
\end{equation}

Furthermore, similar to the induction step of point 1, we can estimate the bias update as follows:
\begin{equation}
\begin{aligned}
    &\Delta b_{+,c,r}^{(t)} \\
    \le & -\Omega\left(\frac{1}{\ln^5(d)} \right)\frac{\eta}{NP}  \Bigg( \sum_{\vv\in\calM}\sum_{n=1}^N \mathbbm{1}\{|\calP(\mX_n^{(t)}; \vv)|>0\} \left\vert \mathbbm{1}\{y_n=(+,c)\}-\text{logit}_{+,c}^{(t)}(\mX_n^{(t)})\right\vert \\
    & \times  \sum_{p\in\calP(\mX_n^{(t)}; \vv)} \mathbbm{1}\{\langle \vw_{+,c,r}^{(t)}, \alpha_{n,p}^{(t)} \vv +\vzeta_{n,p}^{(t)}\rangle + b_{+,c,r}^{(t)} > 0\} \Bigg)
\end{aligned}
\end{equation}

It follows that, indeed, $\langle \Delta \vw_{+,c,r}^{(T)}, \vx_{n,p}^{(\tau)} \rangle + \Delta b_{+,c,r}^{(T)} \le 0$, which completes the induction step of point 3.
\end{proof}

\subsection{Training}
Choose an arbitrary constant $B \in [\Omega(1),\ln(3/2)]$.

\begin{definition}
Let $T_0(B) > 0$ be the first time that there exists some $\mX_n^{(t)}$ and $c$ such that $F_{y}^{(T_0(B))}(\mX_{n}^{(T_0(B))}) \ge B$ for any $n\in[N]$ and $y\in \{(+,c)\}_{c=1}^{k_+}\cup \{(-,c)\}_{c=1}^{k_-}$.

We write $T_0(B)$ as $T_0$ for simplicity of notation when the context is clear.
\end{definition}

\begin{lemma}
\label{lemma:finegrained sgd induction}
With probability at least $ 1-O\left( \frac{mk_+NPT_0}{\poly(d)}\right)$, the following holds for all $t \in [0, T_0)$:
\begin{enumerate}
    \item (On-diagonal common-feature neuron growth) For every $c\in[k_+]$, every $(+,c,r), (+,c,r') \in S_{+,c}^{*(0)}(\vv_+)$,
    \begin{equation}
        \vw_{+,c,r}^{(t)} - \vw_{+,c,r}^{(0)} = \vw_{+,c,r'}^{(t)} - \vw_{+,c,r'}^{(0)}
    \end{equation}

    Moreover,
    \begin{equation}
    \begin{aligned}
    \Delta \vw_{+,r}^{(t)} 
    = & [1/4, 2/3] \sqrt{1\pm\iota} \left(1 \pm s^{*-1/3}\right) \eta \frac{s^*}{2k_+P} \vv_{+}  + \Delta \vzeta^{(t)}_{+,r}
    \end{aligned}
    \end{equation}
    where $\Delta \vzeta^{(t)}_{+,c,r} \sim \calN(\vzero, \sigma_{\Delta \zeta_{+,c,r}}^{(t)2} \mI)$,  $\sigma_{\Delta\zeta_{+,c,r}}^{(t)} = \Theta(1) \times \eta \sigma_{\zeta} \frac{\sqrt{s^*}}{P\sqrt{2N}}$.

    The bias updates satisfy
    \begin{equation}
    \begin{aligned}
        \Delta b_{+,c,r}^{(t)} = -\Theta\left(\frac{\eta s^*}{k_+P\ln^5(d)}\right).
    \end{aligned}
    \end{equation}

    Furthermore, every $(+,r) \in S_{+}^{*(0)}(\vv_+)$ activates on all the $\vv_+$-dominated patches at time $t$.
    
    \item (On-diagonal finegrained-feature neuron growth) For every $c\in[k_+]$ and every $(+,c,r), (+,c,r') \in S_{+,c}^{*(0)}(\vv_{+,c})$, 
    \begin{equation}
        \vw_{+,c,r}^{(t)} - \vw_{+,c,r}^{(0)} = \vw_{+,c,r'}^{(t)} - \vw_{+,c,r'}^{(0)}
    \end{equation}

    Moreover,
    \begin{equation}
    \begin{aligned}
    \Delta \vw_{+,c,r}^{(t)} 
    = & \left(1 \pm O\left(\frac{1}{k_+} \right)\right) \sqrt{1\pm\iota} \left(1 \pm s^{*-1/3}\right) \eta \frac{s^*}{2 k_+ P} \vv_{+,c}  + \Delta\vzeta^{(t)}_{+,r}
    \end{aligned}
    \end{equation}
    where $\vzeta^{(t)}_{+,c,r} \sim \calN(\vzero, \sigma_{\Delta\zeta_{+,cr}}^{(t)2} \mI)$, and $\sigma_{\Delta\zeta_{+,r}}^{(t)} = \left(1 \pm O\left(\frac{1}{k_+} \right)\right) \left(1 \pm s^{*-1/3}\right) \eta\sigma_{\zeta} \frac{\sqrt{s^*}}{P\sqrt{2Nk_+}}$.

    The bias updates satisfy
    \begin{equation}
    \begin{aligned}
        \Delta b_{+,c,r}^{(t)} = -\Theta\left(\frac{\eta s^*}{k_+P\ln^5(d)}\right).
    \end{aligned}
    \end{equation}

    Furthermore, every $(+,c,r) \in S_{+,c}^{*(0)}(\vv_{+,c})$ activates on all the $\vv_+$-dominated patches at time $t$.
    
    \item The above results also hold with the ``$+$'' and ``$-$'' class signs flipped.
\end{enumerate}  
\end{lemma}
\begin{proof}
The proof of this theorem proceeds in a similar fashion to Theorem \ref{prop: phase 1 sgd induction}, with some variations for the common-feature neurons.

We shall prove the statements in this theorem via induction. We focus on the $+$-class neurons; $-$-class neurons' proofs are done in the same fashion.

First of all, relying on the (high-probability) event of Theorem \ref{thm: sgd fine-grained, universal nonact properties}, we know that we can simplify the update expressions for the neurons in $S_{+,c}^{*(0)}(\vv_{+,c})$ to the form
\begin{equation}
\begin{aligned}
    \Delta \vw_{+,c,r}^{(t)} 
    = & \frac{\eta}{NP} \sum_{n=1}^N \mathbbm{1}\{y_n=(+,c)\} [1 -\text{logit}_{+,c}^{(t)}(\mX_n^{(t)})] \\
    & \times  \sum_{p\in\calP(\mX_n^{(t)}; \vv_{+,c})} \mathbbm{1}\{\langle \vw_{+,c,r}^{(t)}, \alpha_{n,p}^{(t)} \vv_{+,c} +\vzeta_{n,p}^{(t)}\rangle + b_{+,c,r}^{(t)} > 0\} \left(\alpha_{n,p}^{(t)} \vv_{+,c} +\vzeta_{n,p}^{(t)}\right),
\end{aligned}
\end{equation}
and for the neurons in $S_{+,c}^{*(0)}(\vv_{+})$, the updates take the form
\begin{equation}
\begin{aligned}
    & \Delta \vw_{+,c,r}^{(t)} \\
    = & \frac{\eta}{NP} \sum_{n=1}^N \left\{\mathbbm{1}\{y_n=(+,c)\} [1 -\logit_{+,c}^{(t)}(\mX_n^{(t)})] + \sum_{c'\in[k_+]-\{c\}}  \mathbbm{1}\{y_n=(+,c')\} [-\text{logit}_{+,c}^{(t)}(\mX_n^{(t)})]\right\} \\
    & \times  \sum_{p\in\calP(\mX_n^{(t)}; \vv_+)} \mathbbm{1}\{\langle \vw_{+,c,r}^{(t)}, \alpha_{n,p}^{(t)} \vv_+ +\vzeta_{n,p}^{(t)}\rangle + b_{+,c,r}^{(t)} > 0\} \left(\alpha_{n,p}^{(t)} \vv_{+} +\vzeta_{n,p}^{(t)}\right).
\label{eq: fine training, common S^* update}
\end{aligned}
\end{equation}

By definition of $T_0$ and the fact that $B \le \ln(3/2)$, for any $n\in[N]$ and $t < T_0$, we can write down a simple upper bound of $\logit_{+,c}^{(t)}(\mX_n^{(t)})$:
\begin{equation}
\begin{aligned}
    \text{logit}_{+,c}^{(t)}(\mX_{n}^{(t)}) 
    = & \frac{\exp(F_{+,c}(\mX_{n}^{(t)}))}{\sum_{c'=1}^{k_+} \exp(F_{+,c'}(\mX_{n}^{(t)})) + \sum_{c'=1}^{k_-} \exp(F_{-,c'}(\mX_{n}^{(t)}))} \\
    \le & \frac{\frac{3}{2}}{2k_+} = \frac{3}{4k_+},
\end{aligned}
\end{equation}
and we can lower bound it as follows
\begin{equation}
\begin{aligned}
    \text{logit}_{+,c}^{(t)}(\mX_{n}^{(t)}) 
    \ge & \frac{1}{2k_+ \times \frac{3}{2}} = \frac{1}{3k_+},
\end{aligned}
\end{equation}

The inductive proof for the fine-grained neurons $S_{+,c}^{*(0)}(\vv_{+,c})$ is almost identical to that in the proof of Theorem \ref{prop: phase 1 sgd induction}. The only notable difference here is that $[1 -\text{logit}_{+,c}^{(t)}(\mX_n^{(t)})]$ has the estimate $\left(1 \pm O\left(\frac{1}{k_+} \right)\right)$.

The inductive proof of the common-feature neurons $S_{+,c}^{*(0)}(\vv_{+})$ requires more care as its update expression \ref{eq: fine training, common S^* update} is qualitatively different from the coarse-grained training case in Theorem \ref{prop: phase 1 sgd induction}, so we present the full proof here.

\textbf{Base case, $t = 0$}.

With probability at least $1-O\left(\frac{mNP}{\poly(d)}\right)$, for every $c\in[k_+]$ and every $(+,c,r)\in S_{+,c}^{*(0)}(\vv_{+})$,
\begin{equation}
\begin{aligned}
    &\langle \vw_{+,c,r}^{(0)}, \alpha_{n,p}^{(0)} \vv_{+} + \vzeta_{n,p}^{(0)} \rangle + b_{+,c,r}^{(0)} \\
    & \ge \sigma_0\left(\sqrt{(1 - \iota)(2 + 2c_0)(\ln(d) + 1/\ln^5(d))} -  \sqrt{(2 + 2c_0)\ln(d)} - O\left(\frac{1}{\ln^{9}(d)}\right) \right) \\
    & = \sigma_0\left(\frac{(1 - \iota)(2 + 2c_0)(\ln(d) + 1/\ln^5(d)) - (2 + 2c_0)\ln(d)}{\sqrt{(1 - \iota)(2 + 2c_0)(\ln(d) + 1/\ln^5(d))} +  \sqrt{(2 + 2c_0)\ln(d)}} - O\left(\frac{1}{\ln^{9}(d)}\right) \right) \\
    & = \sigma_0\left(\frac{(2+2c_0)(-\iota\ln(d) + (1-\iota)/\ln^5(d))}{\sqrt{(1 - \iota)(2 + 2c_0)(\ln(d) + 1/\ln^5(d))} +  \sqrt{(2 + 2c_0)\ln(d)}} - O\left(\frac{1}{\ln^{9}(d)}\right) \right) \\
    & > 0.
\end{aligned}
\end{equation}

This means all the $\vv_+$-singleton neurons will be updated on all the $\vv_+$-dominated patches at time $t = 0$. Therefore, we can write update expression \ref{eq: fine training, common S^* update} as follows
\begin{equation}
\begin{aligned}
    & \Delta \vw_{+,c,r}^{(0)} \\
    = & \frac{\eta}{NP} \sum_{n=1}^N \left\{\mathbbm{1}\{y_n=(+,c)\} [1 -\logit_{+,c}^{(0)}(\mX_n^{(0)})] + \sum_{c'\in[k_+]-\{c\}}  \mathbbm{1}\{y_n=(+,c')\} [-\text{logit}_{+,c}^{(0)}(\mX_n^{(0)})]\right\} \\
    & \times  \sum_{p\in\calP(\mX_n^{(0)}; \vv_+)} \left(\alpha_{n,p}^{(0)} \vv_{+} +\vzeta_{n,p}^{(0)}\right).
\end{aligned}
\end{equation}

By concentration of the binomial random variable, we know that with probability at least $1 - e^{-\Omega(\ln^2(d))}$, for all $n$,
\begin{equation}
    \left\vert \calP(\mX_n^{(0)}; \vv_+) \right\vert = \left(1 \pm s^{*-1/3}\right)s^*.
\end{equation}

Now, with the estimates we derived for $\logit_{+,c}^{(t)}(\mX_n^{(t)})$ at the beginning of the proof and the independence of all the noise vectors $\vzeta_{n,p}^{(0)}$'s, we arrive at
\begin{equation}
\begin{aligned}
\Delta \vw_{+,r}^{(0)} 
= & [1/4, 2/3] \sqrt{1\pm\iota} \left(1 \pm s^{*-1/3}\right) \eta \frac{s^*}{2k_+P} \vv_{+}  + \Delta \vzeta^{(0)}_{+,r}
\end{aligned}
\end{equation}
where $\sigma_{\Delta\zeta_{+,c,r}}^{(0)} = \Theta(1) \times \eta \sigma_{\zeta} \frac{\sqrt{s^*}}{P\sqrt{2N}}$. 

Additionally, a byproduct of the above proof steps is that all the $S_{+,c}^{*(0)}(\vv_{+})$ neurons indeed activate on all the $\vv_+$-dominated patches at $t=0$ with high probability.

Now we examine the bias update. We first estimate $\left\|\Delta \vw_{+,c,r}^{(0)} \right\|_2$. With probability at least $1 - O\left(\frac{m}{\poly(d)} \right)$ the following upper bound holds for all neurons in $S_{+,c}^{*(0)}(\vv_{+})$:
\begin{equation}
\begin{aligned}
    \left\|\Delta \vw_{+,c,r}^{(0)} \right\|_2
    \le & O\left(\eta \frac{s^*}{k_+P}\right) \|\vv_{+}\|_2  + \left\|\Delta \vzeta^{(0)}_{+,r} \right\|_2 \\
    \le & O\left(\eta \frac{s^*}{k_+P}\right) + O\left( \eta \sigma_{\zeta} \frac{\sqrt{s^*}}{P\sqrt{N}} \sqrt{d}\right) \\
    \le & O\left(\eta \frac{s^*}{k_+P}\right),
\end{aligned}
\end{equation}
and the following lower bound holds (via the reverse triangle inequality):
\begin{equation}
\begin{aligned}
    \left\|\Delta \vw_{+,c,r}^{(0)} \right\|_2
    \ge & \Omega\left(\eta \frac{s^*}{k_+P}\right) \|\vv_{+}\|_2 - \left\|\Delta \vzeta^{(0)}_{+,r} \right\|_2 \\
    \ge & \Omega\left(\eta \frac{s^*}{k_+P}\right) - O\left( \eta \sigma_{\zeta} \frac{\sqrt{s^*}}{P\sqrt{N}} \sqrt{d}\right) \\
    \ge & \Omega\left(\eta \frac{s^*}{k_+P}\right),
\end{aligned}
\end{equation}

It follows that $\left\|\Delta \vw_{+,c,r}^{(0)} \right\|_2 = \Theta\left(\eta \frac{s^*}{k_+P}\right)$, which means
\begin{equation}
\begin{aligned}
    \Delta b_{+,c,r}^{(0)}
    = & -\frac{\left\|\Delta \vw_{+,c,r}^{(0)} \right\|_2}{\ln^5(d)} = -\Theta\left(\frac{\eta s^*}{k_+P\ln^5(d)}\right).
\end{aligned}
\end{equation}

This completes the proof of the base case.

\textbf{Induction step}. Assume statements for time $[0,t]$, prove for $t+1$.

First, by the induction hypothesis, we know that neurons in $S_{+,c}^{*(0)}(\vv_{+})$ must activate on all the $\vv_+$-dominated patches at time $t$. Therefore, we can write the update expression \ref{eq: fine training, common S^* update} as follows:
\begin{equation}
\begin{aligned}
    & \Delta \vw_{+,c,r}^{(t)} \\
    = & \frac{\eta}{NP} \sum_{n=1}^N \left\{\mathbbm{1}\{y_n=(+,c)\} [1 -\logit_{+,c}^{(t)}(\mX_n^{(t)})] + \sum_{c'\in[k_+]-\{c\}}  \mathbbm{1}\{y_n=(+,c')\} [-\text{logit}_{+,c}^{(t)}(\mX_n^{(t)})]\right\} \\
    & \times  \sum_{p\in\calP(\mX_n^{(t)}; \vv_+)} \left(\alpha_{n,p}^{(t)} \vv_{+} +\vzeta_{n,p}^{(t)}\right).
\end{aligned}
\end{equation}

Following the same argument as in the base case, we have that with probability at least $1-O\left(\frac{mNP}{\poly(d)}\right)$, 
\begin{equation}
\begin{aligned}
\Delta \vw_{+,c,r}^{(t)} 
= & [1/4, 2/3] \sqrt{1\pm\iota} \left(1 \pm s^{*-1/3}\right) \eta \frac{s^*}{2k_+P} \vv_{+}  + \Delta \vzeta^{(t)}_{+,c,r},
\end{aligned}
\end{equation}
and $\sigma_{\Delta\zeta_{+,c,r}}^{(t)} = \Theta(1) \times \eta \sigma_{\zeta} \frac{\sqrt{s^*}}{P\sqrt{2N}}$.

Now we need to show that $\vw_{+,c,r}^{(t+1)}$ indeed activate on all the $\vv_+$-dominated patches at time $t+1$ with high probability.

So far, we know that for $\tau \in [0, t+1]$, 
\begin{equation}
\begin{aligned}
\Delta \vw_{+,c,r}^{(\tau)} 
= & [1/4, 2/3] \sqrt{1\pm\iota} \left(1 \pm s^{*-1/3}\right) \eta \frac{s^*}{2k_+P} \vv_{+}  + \Delta \vzeta^{(\tau)}_{+,c,r},
\end{aligned}
\end{equation}
and $\sigma_{\Delta\zeta_{+,c,r}}^{(\tau)} = \Theta(1) \times \eta \sigma_{\zeta} \frac{\sqrt{s^*}}{P\sqrt{2N}}$. It follows that
\begin{equation}
\begin{aligned}
\vw_{+,r}^{(t+1)} 
= & \vw_{+,c,r}^{(0)} + (t+1) [1/4, 2/3] \sqrt{1\pm\iota} \left(1 \pm s^{*-1/3}\right) \eta \frac{s^*}{2k_+P} \vv_{+}  + \vzeta^{(t+1)}_{+,c,r},
\end{aligned}
\end{equation}
where $\sigma_{\zeta_{+,c,r}}^{(t+1)} = \Theta(1)\times \sqrt{t+1}  \eta \sigma_{\zeta} \frac{\sqrt{s^*}}{P\sqrt{2N}}$.

The following holds with probability at least $1-O\left(\frac{mNP}{\poly(d)}\right)$ over all the $\vv_+$-dominated patches $\vx_{n,p}^{(t+1)} = \alpha_{n,p}^{(t+1)}\vv_+ + \vzeta_{n,p}^{(t+1)}$ (which are independent of $\vw_{+,r}^{(t+1)}$) and the $\vv_+$-singleton neurons:
\begin{equation}
\begin{aligned}
& \langle \vw_{+,c,r}^{(t+1)}, \alpha_{n,p}^{(t+1)}\vv_+ + \vzeta_{n,p}^{(t+1)} \rangle \\
= & \langle \vw_{+,c,r}^{(0)} \alpha_{n,p}^{(t+1)}\vv_+ + \vzeta_{n,p}^{(t+1)} \rangle + (t+1) [1/4, 2/3] (1\pm\iota) \left(1 \pm s^{*-1/3}\right) \left(1 \pm O\left( \frac{1}{\ln^9(d)} \right)\right)\eta \frac{s^*}{2k_+P} \\
& + \langle \vzeta^{(t+1)}_{+,c,r}, \alpha_{n,p}^{(t+1)}\vv_+ + \vzeta_{n,p}^{(t+1)} \rangle
\end{aligned}
\end{equation}

Note that with probability at least $1-O\left(\frac{1}{\poly(d)}\right)$,
\begin{equation}
\begin{aligned}
    \langle \vzeta^{(t+1)}_{+,c,r}, \alpha_{n,p}^{(t+1)}\vv_+\rangle \le O(1)\times \sqrt{T}  \eta \sigma_{\zeta} \frac{\sqrt{s^*}}{P\sqrt{2N}} \sqrt{d\ln(d)},
\end{aligned}
\end{equation}
and since $\sqrt{t+1} \le t+1$, $\sqrt{s^*} < s^*$, $\sigma_{\zeta}\sqrt{d\ln(d)} < \frac{1}{\ln^9(d)}$, and $N > dk_+$, we know that 
\begin{equation}
\begin{aligned}
    \langle \vzeta^{(t+1)}_{+,c,r}, \alpha_{n,p}^{(t+1)}\vv_+\rangle \le O\left(\frac{1}{d}\right)\times (t+1)  \eta \frac{s^*}{2 k_+ P}.
\end{aligned}
\end{equation}

Similarly, with probability at least $1-O\left(\frac{1}{\poly(d)}\right)$,
\begin{equation}
    \langle \vzeta^{(t+1)}_{+,c,r}, \alpha_{n,p}^{(t+1)}\vv_+ + \vzeta_{n,p}^{(t+1)} \rangle \le O(1)\times \sqrt{T}  \eta \sigma_{\zeta}^2 \frac{\sqrt{s^*}}{P\sqrt{2N}} \sqrt{d\ln(d)} \le O\left(\frac{1}{d}\right)\times (t+1)  \eta \frac{s^*}{2 k_+ P}.
\end{equation}

It follows that with probability at least $1-O\left(\frac{mNP}{\poly(d)}\right)$,
\begin{equation}
\begin{aligned}
& \langle \vw_{+,c,r}^{(t+1)}, \alpha_{n,p}^{(t+1)}\vv_+ + \vzeta_{n,p}^{(t+1)} \rangle \\
\ge & \langle \vw_{+,c,r}^{(0)}, \alpha_{n,p}^{(t+1)}\vv_+ + \vzeta_{n,p}^{(t+1)} \rangle + \frac{1}{4} (t+1) (1-\iota) \left(1 - s^{*-1/3}\right) \left(1 - O\left( \frac{1}{\ln^9(d)} \right)\right)\eta \frac{s^*}{2k_+P}.
\end{aligned}
\end{equation}

Next, let us estimate the bias updates for $\tau \in [0, t+1]$.

Estimating $\Delta b_{+,c,r}^{(t)}$ follows an almost identical argument as in the base case (with the only main difference being relying on Theorem \ref{thm: sgd fine-grained, universal nonact properties} for non-activation on non-$\vv_+$-dominated patches), so we skip its calculations. 

Therefore, $b_{+,c,r}^{(t+1)} = b_{+,c,r}^{(0)} + -\Theta\left(\frac{\eta s^* (t+1)}{k_+P\ln^5(d)}\right)$. This means
\begin{equation}
\begin{aligned}
& \langle \vw_{+,c,r}^{(t+1)}, \alpha_{n,p}^{(t+1)}\vv_+ + \vzeta_{n,p}^{(t+1)} \rangle + b_{+,c,r}^{(t+1)}\\
\ge & \langle \vw_{+,c,r}^{(0)}, \alpha_{n,p}^{(t+1)}\vv_+ + \vzeta_{n,p}^{(t+1)} \rangle + b_{+,c,r}^{(0)} \\
& + \frac{1}{4} (t+1) (1-\iota) \left(1 - s^{*-1/3}\right) \left(1 - O\left( \frac{1}{\ln^9(d)} \right)\right)\eta \frac{s^*}{2k_+P} - O\left(\frac{\eta s^* (t+1)}{k_+P\ln^5(d)}\right) \\
> & 0.
\end{aligned}
\end{equation}

This completes the inductive step.
\end{proof}

\begin{corollary}
\label{coro: finegrained sgd, feature response theta(1)}
    At time $t=T_0$, $\frac{\eta s^*}{k_+ P} \times s^* \left\vert S_{+,c}^{*(0)}(\vv_+) \right\vert, \frac{\eta s^*}{k_+ P} \times s^* \left\vert S_{+,c}^{*(0)}(\vv_{+,c}) \right\vert = \Theta(1)$.
\end{corollary}
\begin{proof}
    Directly follows from Lemma \ref{lemma:finegrained sgd induction} and Theorem \ref{thm: sgd fine-grained, universal nonact properties}.
\end{proof}

\subsection{Model error after training}
\label{sec: appendix, finegrained trainining, end error}
In this subsection, we show the model's error after fine-grained training. We also discuss that finetuning the model further increases its feature extractor's response to the true features, so it is even more robust/generalizing in downstream classification tasks.

\begin{theorem}
    Define $\widehat{F}_+(\mX) = \max_{c\in[k_+]}F_{+,c}(\mX), \, \widehat{F}_-(\mX) = \max_{c\in[k_-]}F_{-,c}(\mX)$.

    With probability at least $1 - O\left( \frac{mk_+^2NPT_0}{\poly(d)}\right)$, the following events take place:
    \begin{enumerate}
        \item (Fine-grained easy \& hard sample test accuracies are nearly perfect) Given an easy or hard fine-grained test sample $(\mX, y)$ where $y \in \{(+,c)\}_{c=1}^{k_+}\cup\{(-,c)\}_{c=1}^{k_-}$, $\mathbb{P}\left[F_y^{(T_0)}(\mX) \le \max_{y'\neq y}F_{y'}^{(T_0)}(\mX)\right] \le o(1)$.
        \item (Coarse-grained easy \& hard sample test accuracy are nearly perfect) Given an easy or hard coarse-grained test sample $(\mX, y)$ where $y \in \{+1, -1\}$, $\mathbb{P}\left[\widehat{F}_y^{(T_0)}(\mX) \le \widehat{F}_{y'}^{(T_0)}(\mX)\right] \le o(1)$.
    \end{enumerate}
\end{theorem}
\begin{proof}
    \textbf{Probability of mistake on easy samples}.
    
    Without loss of generality, assume $\mX$ is a $(+,c)$-class easy sample.
    
    Conditioning on the events of Theorem \ref{thm: sgd fine-grained, universal nonact properties} and Lemma \ref{lemma:finegrained sgd induction}, we know that for all $c'\in[k_-]$,
    \begin{equation}
        F_{-,c'}^{(T_0)} \le O(m_{+,c'}\sigma_0\sqrt{\ln(d)}) \le o(1),
    \end{equation}
    and for all $c'\in[k_+]-\{c\}$,
    \begin{equation}
    \begin{aligned}
        F_{+,c'}^{(T_0)} 
        \le & \sum_{p\in\calP(\mX;\vv_{+})}\sum_{(+,r)\in S_{+,c'}^{(0)}(\vv_+)} \sigma\left( \langle \vw_{+,r}^{(T_0)}, \alpha_{n,p}\vv_+ + \vzeta_{n,p} \rangle + b_{+,c',r}^{(T_0)} \right) + O(m_{+,c'}\sigma_0\sqrt{\ln(d)}) \\
        \le & s^* \left\vert S_{+,c'}^{(0)}(\vv_+)\right\vert \frac{2}{3}(1 + \iota) \left(1 + s^{*-1/3}\right) \left(1 + \left(\frac{1}{\ln^9(d)} \right)\right) \eta T_0 \frac{s^*}{2k_+P} 
    \end{aligned}
    \end{equation}
    moreover,
    \begin{equation}
    \begin{aligned}
        F_{+,c}^{(T_0)} 
        \ge & \sum_{p\in\calP(\mX;\vv_{+})}\sum_{(+,r)\in S_{+,c}^{*(0)}(\vv_+)} \sigma\left( \langle \vw_{+,c,r}^{(T_0)}, \alpha_{n,p}\vv_+ + \vzeta_{n,p} \rangle + b_{+,c,r}^{(T_0)} \right) \\
        & + \sum_{p\in\calP(\mX;\vv_{+,c})}\sum_{(+,r)\in S_{+,c}^{*(0)}(\vv_{+,c})} \sigma\left( \langle \vw_{+,c,r}^{(T_0)}, \alpha_{n,p}\vv_{+,c} + \vzeta_{n,p} \rangle + b_{+,c,r}^{(T_0)} \right) \\
        \ge & s^* \left\vert S_{+,c}^{*(0)}(\vv_{+})\right\vert \frac{1}{4}(1 - \iota) \left(1 - s^{*-1/3}\right) \left(1 - \left(\frac{1}{\ln^5(d)} \right)\right) \eta T_0 \frac{s^*}{2k_+P}\\
        & + s^* \left\vert S_{+,c}^{*(0)}(\vv_{+,c})\right\vert \left(1 - O\left(\frac{1}{k_+}\right) \right)(1 - \iota) \left(1 - s^{*-1/3}\right) \left(1 - \left(\frac{1}{\ln^5(d)} \right)\right) \eta T_0 \frac{s^*}{2k_+P}
    \end{aligned}
    \end{equation}

    Relying on Proposition \ref{prop: init geometry, finegrained}, we know $\left\vert S_{+,c'}^{(0)}(\vv_+)\right\vert = \left(1 \pm \left(\frac{1}{\ln^5(d)} \right)\right) \left\vert S_{+,c}^{*(0)}(\vv_{+})\right\vert$ and $\left\vert S_{+,c}^{*(0)}(\vv_{+,c})\right\vert = \left(1 \pm \left(\frac{1}{\ln^5(d)} \right)\right) \left\vert S_{+,c}^{*(0)}(\vv_{+})\right\vert$, therefore $F_{+,c}^{(T_0)}(\mX) > \max_{c'\neq c} F_{+,c'}^{(T_0)}(\mX)$ has to be true. With Corollary \ref{coro: finegrained sgd, feature response theta(1)}, we also have $F_{+,c}^{(T_0)}(\mX) \ge \Omega(1) > o(1) \ge \max_{c'\in[k_-]} F_{-,c'}^{(T_0)}(\mX)$. It follows that the probability of mistake on an easy test sample is indeed at most $o(1)$.

    \textbf{Probability of mistake on hard samples}.
     Without loss of generality, assume $\mX$ is a $(+,c)$-class hard sample.

     By Theorem \ref{thm: sgd fine-grained, universal nonact properties} (and its proof) and Lemma \ref{lemma:finegrained sgd induction}, we know that for any $c'\in[k_+]$, the neurons $\vw_{+,c',r}$ can only possibly receive update on $\vv$-dominated patches for $\vv\in\calU_{+,c',r}^{(0)}$, and the updates to the neurons take the feature-plus-Gaussian-noise form of $\sum_{\vv'\in\calU_{+,c',r}^{(0)}}c(\vv')\vv' + \Delta \vzeta_{+,c',r}^{(t)}$, with $c(\vv')\le \sqrt{1+\iota} \left(1 + s^{*-1/3}\right) \eta \frac{s^*}{2 k_+ P}$ if $\vv'$ is a fine-grained feature, or $c(\vv')\le \frac{2}{3}\sqrt{1+\iota} \left(1 + s^{*-1/3}\right) \eta \frac{s^*}{2 k_+ P} $ if $\vv'=\vv_+$ (because the $\vv'$ component of a $\vv'$-singleton neuron's update is already the maximum possible). Moreover, $\sigma_{\Delta\zeta_{+,c',r}}^{(t)} \le O\left(\eta \sigma_{\zeta} \frac{\sqrt{s^*}}{P\sqrt{2N}} \right)$.

     Relying on Theorem \ref{thm: sgd fine-grained, universal nonact properties}, Lemma \ref{lemma:finegrained sgd induction}, Corollary \ref{coro: finegrained sgd, feature response theta(1)} and previous observations, we have
     \begin{equation}
     \begin{aligned}
         F_{+,c}^{(T_0)}(\mX) 
         \ge & \sum_{p\in\calP(\mX;\vv_{+,c})}\sum_{(+,c,r)\in S_{+,c}^{*(0)}(\vv_{+,c})} \sigma\left( \langle \vw_{+,c,r}^{(T_0)}, \alpha_{n,p}\vv_{+,c} + \vzeta_{n,p} \rangle + b_{+,c,r}^{(T_0)} \right) \\
         \ge & s^* \left\vert S_{+,c}^{*(0)}(\vv_{+,c})\right\vert \left(1 - O\left(\frac{1}{k_+}\right) \right)(1 - \iota) \left(1 - s^{*-1/3}\right) \left(1 - O\left(\frac{1}{\ln^5(d)} \right)\right) \eta T_0 \frac{s^*}{2k_+P} \\
         \ge & \Omega(1),
     \end{aligned}
     \end{equation}
     and for $c'\neq c$,
     \begin{equation}
     \begin{aligned}
         F_{+,c'}^{(T_0)}(\mX) 
         \le & \sum_{r=1}^{m_{+,c'}} \sigma\left( \langle \vw_{+,c',r}^{(T_0)}, \vzeta^* \rangle + b_{+,c',r}^{(T_0)} \right) \\
         & + \sum_{p\in\calP(\mX;\vv_{+,c})}\sum_{(+,c',r)\in S_{+,c'}^{(0)}(\vv_{+,c})} \sigma\left( \langle \vw_{+,c',r}^{(T_0)}, \alpha_{n,p}\vv_{+,c} + \vzeta_{n,p} \rangle + b_{+,c',r}^{(T_0)} \right) \\
         & + \sum_{p\in\calP(\mX;\vv_{-})}\sum_{(+,c',r)\in S_{+,c'}^{(0)}(\vv_{-})} \sigma\left( \langle \vw_{+,c',r}^{(T_0)}, \alpha_{n,p}^{\dagger}\vv_{-} + \vzeta_{n,p} \rangle + b_{+,c',r}^{(T_0)} \right) \\
         \le & O(1) \times \left(\sum_{(+,c',r)\in \calU_{+,c',r}^{(0)}} \langle \sum_{\tau=0}^{T_0-1} \Delta \vw_{+,c,'r}^{(\tau)}, \vzeta^* \rangle + \sum_{r\in[m_{+,c'}]} \langle \vw_{+,c,'r}^{(0)}, \vzeta^* \rangle \right)\\
         & + \sum_{p\in\calP(\mX;\vv_{+,c})}\sum_{(+,c',r)\in S_{+,c'}^{(0)}(\vv_{+,c})} \sigma\left( \langle \vw_{+,c',r}^{(0)}, \alpha_{n,p}\vv_{+,c} + \vzeta_{n,p} \rangle + b_{+,c',r}^{(0)} \right) \\
         & + \sum_{p\in\calP(\mX;\vv_{-})}\sum_{(+,c',r)\in S_{+,c'}^{(0)}(\vv_{-})} \sigma\left( \langle \vw_{+,c',r}^{(0)}, \alpha_{n,p}^{\dagger}\vv_{-} + \vzeta_{n,p} \rangle + b_{+,c',r}^{(0)} \right) \\
         \le & O\left(\frac{1}{\polyln(d)}\right).
     \end{aligned}
     \end{equation}

     Moreover, for any $c'\in[k_-]$, similar to before,
     \begin{equation}
     \begin{aligned}
         F_{-,c'}^{(T_0)}(\mX) 
         \le & \sum_{r=1}^{m_{-,c'}} \sigma\left( \langle \vw_{-,c',r}^{(T_0)}, \vzeta^* \rangle + b_{-,c',r}^{(T_0)} \right) \\
         & + \sum_{p\in\calP(\mX;\vv_{+,c})}\sum_{(-,c',r)\in S_{-,c'}^{(0)}(\vv_{+,c})} \sigma\left( \langle \vw_{-,c',r}^{(T_0)}, \alpha_{n,p}\vv_{+,c} + \vzeta_{n,p} \rangle + b_{-,c',r}^{(T_0)} \right) \\
         & + \sum_{p\in\calP(\mX;\vv_{-})}\sum_{(-,c',r)\in S_{-,c'}^{(0)}(\vv_{-})} \sigma\left( \langle \vw_{-,c',r}^{(T_0)}, \alpha_{n,p}^{\dagger}\vv_{-} + \vzeta_{n,p} \rangle + b_{-,c',r}^{(T_0)} \right) \\
         \le & O(1) \times \left( \sum_{(-,c',r)\in \calU_{-,c',r}^{(0)}} \langle \vw_{-,c,'r}^{(T_0)}, \vzeta^* \rangle  + \sum_{r\in[m_{-,c'}]}\langle \vw_{-,c,'r}^{(0)}, \vzeta^* \rangle \right) \\
         & + \sum_{p\in\calP(\mX;\vv_{+,c})}\sum_{(-,c',r)\in S_{-,c'}^{(0)}(\vv_{+,c})} \sigma\left( \langle \vw_{-,c',r}^{(0)}, \alpha_{n,p}\vv_{+,c} + \vzeta_{n,p} \rangle + b_{-,c',r}^{(0)} \right) \\
         & + O(1) \times s^\dagger \left\vert S_{-,c'}^{(0)}(\vv_{-}) \right\vert \times \left(\iota^{\dagger}_{upper} + O(\sigma_0\ln(d)) \right) \\
         \le &  O\left( \frac{1}{\polyln(d)}\right) +  O\left( \sigma_0 \sqrt{\ln(d)}\right) + O\left( \frac{1}{\ln(d)}\right) \\
         \le & o(1).
     \end{aligned}
     \end{equation}

     Therefore, $F_{+,c}^{(T_0)}(\mX) > \max_{y\neq (+,c)} F_y^{(T_0)}(\mX)$, which means $\widehat{F}_+^{(T_0)}(\mX) > \widehat{F}_-^{(T_0)}(\mX)$ indeed.
\end{proof}
\begin{remark}

First of all, note that the feature extractor, after fine-grained training, is already well-performing, as it responds strongly ($\Omega(1)$ strength) to the true features, and very weakly to any off-diagonal features and noise. This can already help us explain the linear-probing result we saw on ImageNet21k in Appendix \ref{appendix: im21k cross-dataset}, since linear probing does not alter the the feature extractor after fine-grained pretraining (on ImageNet21k), it only retrains a new linear classifier on top of the feature extractor for classifying on the target ImageNet1k dataset.

At a high level, \textit{finetuning} $\widehat{F}$ can only further enhance the feature extractor's response to the features, therefore making the model even more robust for challenging downstream classification problems; it will not degrade the feature extractor's response to any true feature. A rigorous proof of this statement is almost a repetition of the proofs for fine-grained training, so we do not repeat them here. Intuitively speaking, we just need to note that the properties stated in Theorem \ref{thm: sgd fine-grained, universal nonact properties} will continue to hold during finetuning (as long as we stay in polynomial time), and with similar argument to those in the proof of Lemma \ref{lemma:finegrained sgd induction}, we note that the neurons responsible for detecting fine-grained features, i.e. the $S_{+,c}^{*(0)}(\vv_{+,c})$, will continue to only receive (positive) updates on the $\vv_{+,c}$-dominated patches of the following form:
\begin{equation}
\begin{aligned}
    \Delta \vw_{+,c,r}^{(t)} 
    = & \frac{\eta}{NP} \sum_{n=1}^N \mathbbm{1}\{y_n=(+,c)\} [1 -\text{logit}_{+}^{(t)}(\mX_n^{(t)})] \\
    & \times  \sum_{p\in\calP(\mX_n^{(t)}; \vv_{+,c})} \mathbbm{1}\{\langle \vw_{+,c,r}^{(t)}, \alpha_{n,p}^{(t)} \vv_{+,c} +\vzeta_{n,p}^{(t)}\rangle + b_{+,c,r}^{(t)} > 0\} \left(\alpha_{n,p}^{(t)} \vv_{+,c} +\vzeta_{n,p}^{(t)}\right),
\end{aligned}
\end{equation}
and similar update expression can be stated for the $S_{+,c}^{*(0)}(\vv_{+})$ neurons:
\begin{equation}
\begin{aligned}
    & \Delta \vw_{+,c,r}^{(t)} \\
    = & \frac{\eta}{NP} \sum_{n=1}^N \mathbbm{1}\{y_n=(+,c)\} [1 -\logit_{+}^{(t)}(\mX_n^{(t)})] \\
    & \times  \sum_{p\in\calP(\mX_n^{(t)}; \vv_+)} \mathbbm{1}\{\langle \vw_{+,c,r}^{(t)}, \alpha_{n,p}^{(t)} \vv_+ +\vzeta_{n,p}^{(t)}\rangle + b_{+,c,r}^{(t)} > 0\} \left(\alpha_{n,p}^{(t)} \vv_{+} +\vzeta_{n,p}^{(t)}\right).
\end{aligned}
\end{equation}
Indeed, these feature-detector neurons will continue growing in the direction of the features they are responsible for detecting instead of degrade in strength.
    
\end{remark}

\newpage
\section{Probability Lemmas}
\label{section: appendix, prob lemmas}
\begin{lemma}[Laurent-Massart $\chi^2$ Concentration (\cite{laurentMassartChiSquared} Lemma 1)]
\label{lemma: laurent-massart}
Let $\vg \sim \calN(\vzero, \mI_d)$. For any vector $\va \in \mathbb{R}^{d}_{\ge 0}$, any $t>0$, the following concentration inequality holds:
\begin{align}
    \mathbb{P}\left[ \sum_{i=1}^d a_i g_i^2 \ge \|\va\|_1 + 2\|\va\|_2 \sqrt{t} + 2 \|\va\|_{\infty} t \right] \le e^{-t} 
\end{align}
\end{lemma}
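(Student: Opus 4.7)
This is the classical Laurent--Massart $\chi^2$ concentration inequality, and the plan is to follow the standard Chernoff/MGF argument. First, I would exploit the fact that for $g_i \sim \calN(0,1)$, the moment generating function of $g_i^2$ is $\mathbb{E}[e^{\lambda g_i^2}] = (1-2\lambda)^{-1/2}$ whenever $\lambda < 1/2$. Since the $g_i$'s are independent and $a_i \ge 0$, the MGF of the centered sum $S - \|\va\|_1 = \sum_i a_i(g_i^2 - 1)$ factorizes over $i$ and is finite for all $\lambda \in [0, 1/(2\|\va\|_\infty))$. Applying Markov's inequality to $e^{\lambda(S - \|\va\|_1)}$ yields, for any $u > 0$ in this range of $\lambda$,
\[\ln \mathbb{P}[S \ge \|\va\|_1 + u] \le -\lambda u + \sum_{i=1}^d \Bigl[-\tfrac{1}{2}\ln(1 - 2\lambda a_i) - \lambda a_i\Bigr].\]

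Next, I would use the elementary inequality $-\tfrac{1}{2}\ln(1-x) - \tfrac{x}{2} \le \tfrac{x^2/4}{1-x}$ valid for $x \in [0,1)$, which is obtained by expanding $-\tfrac{1}{2}\ln(1-x) - \tfrac{x}{2} = \sum_{k\ge 2} \tfrac{x^k}{2k}$ and bounding each coefficient $\tfrac{1}{2k}$ by $\tfrac{1}{4}$. Substituting $x = 2\lambda a_i$ and summing over $i$ gives the cumulant bound
\[\sum_{i=1}^d \Bigl[-\tfrac{1}{2}\ln(1-2\lambda a_i) - \lambda a_i\Bigr] \le \sum_{i=1}^d \frac{\lambda^2 a_i^2}{1 - 2\lambda a_i} \le \frac{\lambda^2 \|\va\|_2^2}{1 - 2\lambda \|\va\|_\infty},\]
where in the last step I absorb every denominator into the smallest one.

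The remaining and main piece is to optimize over $\lambda \in (0, 1/(2\|\va\|_\infty))$ so as to convert this Bernstein-type cumulant bound into the stated sub-exponential tail estimate. Setting $u = 2\|\va\|_2 \sqrt{t} + 2\|\va\|_\infty t$, the calibration of the two summands in $u$ is precisely what is required for the Legendre transform of the cumulant bound to yield exactly $-t$: one chooses $\lambda$ so that $1 - 2\lambda\|\va\|_\infty$ equals a specific rational function of $\|\va\|_2$, $\|\va\|_\infty$ and $\sqrt{t}$, and verifies by direct algebraic substitution that $-\lambda u + \tfrac{\lambda^2 \|\va\|_2^2}{1 - 2\lambda \|\va\|_\infty} \le -t$; exponentiating both sides then gives the claim. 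This elementary but finicky arithmetic is the one nontrivial step of the argument; the rest is mechanical. Since this is the classical Laurent--Massart bound, in the final write-up I would simply cite the original paper rather than reproduce the algebra in full.
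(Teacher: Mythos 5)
The paper states this lemma without proof, treating it as the classical Laurent--Massart $\chi^2$ concentration result (Laurent and Massart, 2000), which is exactly what you conclude one should do. Your Chernoff/MGF sketch is a correct outline of the standard derivation: the MGF factorization, the power-series bound $-\tfrac{1}{2}\ln(1-x) - \tfrac{x}{2} \le \tfrac{x^2/4}{1-x}$, and the sub-gamma cumulant bound $\psi(\lambda) \le \tfrac{\lambda^2\|\va\|_2^2}{1-2\lambda\|\va\|_\infty}$ (variance factor $v=2\|\va\|_2^2$, scale $c=2\|\va\|_\infty$) are precisely the ingredients whose Cram\'er transform yields the stated deviation $2\|\va\|_2\sqrt{t}+2\|\va\|_\infty t$. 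Since the paper simply cites the result, there is no gap to compare against; your sketch is correct and your decision to cite rather than reprove matches the paper's treatment.
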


\begin{lemma}
\label{lemma: gaussian vector norm concentration}
Let $\vg \sim \calN(\vzero, \sigma^2 \mI_d)$. Then,
\begin{equation}
    \mathbb{P}\left[ \|\vg\|_2^2 \ge 5 \sigma^2 d \right] \le e^{-d}
\end{equation}
\end{lemma}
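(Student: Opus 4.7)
The plan is to recognize that $\|\vg\|_2^2/\sigma^2$ is a $\chi^2_d$ random variable, and then specialize the Laurent--Massart bound (Lemma \ref{lemma: laurent-massart}) to recover the claim via a one-line arithmetic check.

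First I would write $\vg = \sigma\tilde{\vg}$ with $\tilde{\vg}\sim\calN(\vzero,\mI_d)$, so that $\|\vg\|_2^2 = \sigma^2 \sum_{i=1}^d \tilde g_i^2$. I would then apply Lemma \ref{lemma: laurent-massart} with the all-ones coefficient vector $\va = (1,1,\ldots,1)\in\mathbb{R}^d_{\ge 0}$, for which $\|\va\|_1 = d$, $\|\va\|_2 = \sqrt{d}$, and $\|\va\|_\infty = 1$. Taking $t = d$ in the inequality then yields
\begin{equation}
\mathbb{P}\!\left[\sum_{i=1}^d \tilde g_i^2 \ge d + 2\sqrt{d}\cdot\sqrt{d} + 2 d\right] = \mathbb{P}\!\left[\sum_{i=1}^d \tilde g_i^2 \ge 5d\right] \le e^{-d},
\end{equation}
and multiplying the event on both sides by $\sigma^2$ gives exactly $\mathbb{P}[\|\vg\|_2^2 \ge 5\sigma^2 d] \le e^{-d}$.

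There is no real obstacle here; this lemma is an immediate corollary of Lemma \ref{lemma: laurent-massart}, and the constant $5$ in the statement is pre-chosen so that $\|\va\|_1 + 2\|\va\|_2\sqrt{t} + 2\|\va\|_\infty t$ evaluates to $d + 2d + 2d = 5d$ when $t = d$. The only thing worth verifying is that the hypothesis $\va \in \mathbb{R}^d_{\ge 0}$ is compatible with $\va = \mathbf{1}$, which it is, so no modification of Lemma \ref{lemma: laurent-massart} is required.
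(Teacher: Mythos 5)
Your proof is correct and takes essentially the same approach as the paper: both apply the Laurent--Massart bound (Lemma \ref{lemma: laurent-massart}) with $\va = \mathbf{1}$ and $t = d$ to get $d + 2d + 2d = 5d$. Your explicit rescaling to a standard Gaussian before invoking the lemma is a minor bookkeeping difference; the paper folds the $\sigma^2$ factor in implicitly.
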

\begin{proof}
By Lemma \ref{lemma: laurent-massart}, setting $a_i = 1$ for all $i$ and $t = d$ yields
\begin{equation}
    \mathbb{P}\left[ \|\vg\|_2^2 \ge \sigma^2 d + 2 \sigma^2 d + 2 \sigma^2 d \right] \le e^{-d}
\end{equation}
\end{proof}

\begin{lemma} [\cite{pmlr-v162-shen22a}]
\label{lemma: independent gaussian vector inner product concentration}
Let $\vg_1 \sim \calN(\vzero, \sigma_1^2 \mI_d)$ and  $\vg_2 \sim \calN(\vzero, \sigma_2^2 \mI_d)$ be independent. Then, for any $\delta \in (0,1)$ and sufficiently large $d$, there exist constants $c_1, c_2$ such that
\begin{align}
    & \mathbb{P}\left[ \left\vert \langle \vg_1, \vg_2 \rangle \right\vert \le c_1\sigma_1\sigma_2 \sqrt{d\ln(1/\delta)}  \right] \ge 1 - \delta \\
    & \mathbb{P}\left[ \langle \vg_1, \vg_2 \rangle \ge c_2\sigma_1\sigma_2 \sqrt{d} \right] \ge \frac{1}{4}
\end{align}
\end{lemma}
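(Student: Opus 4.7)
The plan is to condition on $\vg_2$ and exploit the Gaussianity of $\langle \vg_1, \vg_2\rangle$ given $\vg_2$; once the conditional distribution is identified, everything reduces to a standard Gaussian tail combined with a $\chi^2$ concentration for $\|\vg_2\|_2$ that has already been established in Lemma~\ref{lemma: gaussian vector norm concentration}.

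First, I would observe that conditionally on $\vg_2$, the inner product $\langle \vg_1, \vg_2\rangle = \sum_i g_{1,i}g_{2,i}$ is a fixed linear combination of the independent centered Gaussians $g_{1,i}$, hence distributed as $\calN(0, \sigma_1^2 \|\vg_2\|_2^2)$. By the standard one-dimensional Gaussian tail bound,
\begin{equation}
\mathbb{P}\!\left[\, |\langle \vg_1, \vg_2\rangle| \ge s \;\Big|\; \vg_2 \right] \le 2\exp\!\left(-\frac{s^2}{2\sigma_1^2 \|\vg_2\|_2^2}\right)
\end{equation}
for every $s > 0$.

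Next, I would control the random scale $\|\vg_2\|_2^2$ by directly invoking Lemma~\ref{lemma: gaussian vector norm concentration}, which gives $\|\vg_2\|_2^2 \le 5\sigma_2^2 d$ with probability at least $1 - e^{-d}$. On this good event, the conditional tail above is dominated by $2\exp(-s^2/(10\sigma_1^2\sigma_2^2 d))$. Setting $s = C\,\sigma_1\sigma_2\sqrt{d\ln(1/\delta)}$ for an absolute constant $C$ large enough makes this quantity at most $\delta/2$; a union bound with the bad event $\{\|\vg_2\|_2^2 > 5\sigma_2^2 d\}$ then finishes the argument, provided $d$ is large enough that $e^{-d} \le \delta/2$ (which is exactly why the statement assumes ``sufficiently large $d$'').

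I do not anticipate any genuine obstacle: the argument is essentially routine. The only mild care is to ensure that the exponential $e^{-d}$ failure probability of the $\chi^2$ bound can be absorbed into $\delta$, which accounts for the mild largeness condition on $d$. An alternative route, sidestepping conditioning, would be to observe that each product $g_{1,i}g_{2,i}$ is sub-exponential with Orlicz norm $\Theta(\sigma_1\sigma_2)$ and apply Bernstein's inequality directly to the sum of $d$ i.i.d.\ terms; however, the conditioning approach above is more elementary and cleanly reuses a lemma already proven in the appendix.
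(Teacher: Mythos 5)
The paper does not actually prove this lemma; it is stated with a citation to \cite{pmlr-v162-shen22a} and used as a black box. Your conditioning argument is a correct and self-contained derivation: conditionally on $\vg_2$, the inner product is $\calN(0,\sigma_1^2\|\vg_2\|_2^2)$, the Gaussian tail bound gives the conditional estimate, Lemma~\ref{lemma: gaussian vector norm concentration} controls $\|\vg_2\|_2^2$ with failure probability $e^{-d}$, and a law-of-total-probability split plus the ``sufficiently large $d$'' hypothesis (so that $e^{-d}$ is negligible against $\delta$) finishes the bound. The constant bookkeeping you wave at is harmless under the $O(\cdot)$ in the statement, though it is worth being a little more explicit that what you are doing is
\begin{equation}
\mathbb{P}\bigl[|\langle\vg_1,\vg_2\rangle| > s\bigr] \le \mathbb{P}\bigl[\|\vg_2\|_2^2 > 5\sigma_2^2 d\bigr] + \mathbb{E}\Bigl[\mathbbm{1}\{\|\vg_2\|_2^2 \le 5\sigma_2^2 d\}\,\mathbb{P}\bigl[|\langle\vg_1,\vg_2\rangle| > s \,\big|\, \vg_2\bigr]\Bigr],
\end{equation}
rather than a literal union bound over two events on the same probability space. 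Your alternative sub-exponential/Bernstein route is also standard and would work, but the conditioning proof is cleaner given that the paper has already proved the $\chi^2$ concentration you reuse.
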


\end{document}